\newtheorem{assumption}{Assumption}[section]
\numberwithin{equation}{section}
\numberwithin{figure}{section}
\newcommand{\R}{\mathbb{R}}
\newcommand{\E}{\mathbb{E}}
\newcommand{\PP}{\mathbb{P}}
\newcommand{\Fcal}{\mathcal{F}}
\newcommand{\Lcal}{\mathcal{L}}
\newcommand{\Ccal}{\mathcal{C}}
\newcommand{\Lcalloc}{\mathcal{L}_{\text{loc}}}
\newcommand{\Bcal}{\mathcal{B}}
\DeclareMathOperator*{\diag}{diag}
\newcommand{\Xtil}{\widetilde{X}}
\newcommand{\Dtil}{\widetilde{\Delta}}
\newcommand{\tr}{\mathrm{Tr}}
\begin{document}

\title{Stochastic Modified Equations and Dynamics of Stochastic Gradient Algorithms I: Mathematical Foundations}

\author{\name Qianxiao Li \email liqix@ihpc.a-star.edu.sg \\
       \addr Institute of High Performance Computing\\
       Agency for Science, Technology and Research\\
       1 Fusionopolis Way, Connexis North, Singapore 138632
       \AND
       \name Cheng Tai \email chengtai@pku.edu.cn \\
       \addr Beijing Institute of Big Data Research\\
       and Peking University\\
       Beijing, China, 100080
       \AND
       \name Weinan E \email weinan@math.princeton.edu \\
	   \addr Princeton University\\
       Princeton, NJ 08544, USA \\
	   Beijing Institute of Big Data Research\\
       and Peking University, Beijing, China}

\editor{}

\maketitle

\begin{abstract}%
	We develop the mathematical foundations of the stochastic modified equations (SME) framework for analyzing the dynamics of stochastic gradient algorithms, where the latter is approximated by a class of stochastic differential equations with small noise parameters. We prove that this approximation can be understood mathematically as an weak approximation, which leads to a number of precise and useful results on the approximations of stochastic gradient descent (SGD), momentum SGD and stochastic Nesterov's accelerated gradient method in the general setting of stochastic objectives. We also demonstrate through explicit calculations that this continuous-time approach can uncover important analytical insights into the stochastic gradient algorithms under consideration that may not be easy to obtain in a purely discrete-time setting.
\end{abstract}

\begin{keywords}
	stochastic gradient algorithms, modified equations, stochastic differential equations, momentum, Nesterov's accelerated gradient
\end{keywords}

\section{Introduction}
\label{sec:introduction}
Stochastic gradient algorithms (SGA) are often used to solve optimization problems of the form
\begin{align}
	\min_{x\in \R^d}\quad  f(x) := \E f_\gamma(x)
	\label{eq:opt_problem}
\end{align}
where $\{ f_r : r \in \Gamma \}$ is a family of functions from $\R^d$ to $\R$ and $\gamma$ is a $\Gamma$-valued random variable, with respect to which the expectation is taken (these notions will be made precise in the following sections). For empirical loss minimization in supervised learning applications, $\gamma$ is usually a uniform random variable taking values in $\Gamma = \{1, 2, \dots, n \}$. In this case, $f$ is the total empirical loss function and $f_r$, $r\in \Gamma$ are the loss function due to the $r^\text{th}$ training sample. In this paper, we shall consider the general situation of a expectation over arbitrary index sets and distributions.

Solving~\eqref{eq:opt_problem} using the standard gradient descent (GD) on $x$ gives the iteration scheme
\begin{align}
	x_{k+1} = x_{k} - \eta \nabla \E f_{\gamma}(x_k), \label{eq:gd_iter}
\end{align}
for $k\geq 0$ and $\eta$ is a small positive step-size known as the learning rate.
Note that this requires the evaluation of the gradient of an expectation, which can be costly (in this empirical risk minimization case, this happens when $n$ is large).
In its simplest form, the stochastic gradient descent (SGD) algorithm replaces the expectation of the gradient with a sampled gradient, i.e.
\begin{align}
	x_{k+1} = x_{k} - \eta \nabla f_{\gamma_k}(x_k),
	\label{eq:sga_iter}
\end{align}
where each $\gamma_k$ is an independent and identically distributed (i.i.d.) random variable with the same distribution as $\gamma$. Under mild conditions, we then have $\E [ \nabla f_{\gamma_k}(x_k) | x_k ] = \nabla \E f(x_k)$. In other words,~\eqref{eq:sga_iter} is a sampled version of~\eqref{eq:gd_iter}.

In the literature, many convergence results are available for SGD and its variants~\citep{shamir2013stochastic,moulines2011non,needell2014stochastic,xiao2014proximal,shalev2014accelerated,bach2013non,defossez2015averaged}. However, it is often the case that different analysis techniques must be adopted for different variants of the algorithms and there generally lacked a systematic approach to study their precise dynamical properties. In~\cite{li2015dynamics}, a general approach was introduced to address this problem, in which discrete-time stochastic gradient algorithms are approximated by continuous-time stochastic differential equations with the noise term depending on a small parameter (the learning rate). This can be viewed as a generalization of the method of modified equations~\citep{hirt1968heuristic,noh1960difference,daly1963stability,warming1974modified} to the stochastic setting, and allows one to employ tools from stochastic calculus to systematically analyze the dynamics of stochastic gradient algorithms. The stochastic modified equations (SME) approach was further developed in~\cite{li2017stochastic}, where a weak approximation result for the SGD was proved in a finite-sum-objective setting.

The present series of papers builds on the earlier work of~\cite{li2015dynamics,li2017stochastic} and aims to establish the framework of stochastic modified equations and their applications in greater generality and depth, and highlight the advantages of this systematic framework for studying stochastic gradient algorithms using continuous-time methods. As the first in the series, this paper will focus on mathematical aspects, namely the main approximation theorems relating stochastic gradient algorithms to stochastic modified equations in the form of weak approximations. These generalize the approximation results in~\cite{li2017stochastic} in various aspects. In a subsequent paper in the series, we will discuss the application of this formalism to adaptive stochastic gradient algorithms and related problems.

The organization of this paper is as follows. We first discuss related work in Sec.~\ref{sec:related_work}, especially in the context of continuous-time approximations. Next, we motivate the SME approach and set up the precise mathematical framework in Sec.~\ref{sec:math_framework}. We then prove in Sec.~\ref{sec:approx_theorems} a central result relating discrete stochastic algorithms and continuous stochastic processes, which allows us to derive SMEs for stochastic gradient descent and variants. In Sec.~\ref{sec:applications}, the SME approach is used to analyze the dynamics of stochastic gradient algorithms when applied to optimize a simple yet non-trivial objective. Lastly, we conclude with some discussion of our results in Sec.~\ref{sec:conclusion}. The longer proofs of the results used in the paper are organized in the appendix. These are essentially self-contained, but basic knowledge of stochastic calculus and probability theory are assumed. Unfamiliar readers may refer to standard introductory texts, such as~\cite{durrett2010probability} and~\cite{oksendal2013stochastic}.

\subsection{Notation}
In this paper, we adhere wherever possible to the following notation. Dimensional indices are written as subscripts with a bracket to avoid confusion with other sequential indices (e.g.\,time, iteration number), which do not have brackets. When more than one indices are present, we separate them with a comma, e.g. $x_{k,(i)}$ is the $i$-th coordinate of the vector $x_k$, the $k^{\text{th}}$ member of a sequence.
We adopt the Einstein's summation convention, where repeated (spatial) indices are summed, i.e. $x_{(i)}x_{(i)} := \sum_{i=1}^d x_{(i)} x_{(i)}$. For a matrix $A$, we denote by $\lambda(A)=\{ \lambda_1(A), \lambda_2(A), \dots \}$ the set of eigenvalues of $A$. If $A$ is Hermitian, then the eigenvalues are ordered so that $\lambda_1(A)$ denotes a maximum eigenvalue. We denote the usual Euclidean norm by $|\cdot|$ and for higher rank tensors, we use the same notation to denote the flattened vector norms (e.g.\,for matrices it will be the Frobenius norm). The $\wedge$ symbols denotes the minimum operator, i.e. $a \wedge b := \min(a,b)$.

For a probability space (or generally, a measure space) $(\Omega,\Fcal,\PP)$, the symbol $\Lcal(\Omega,\Fcal,\PP)$, $p\in(1,\infty)$ denotes the usual Lebesgue spaces, i.e. $u\in\mathcal{L}^p(\Omega,\Fcal,\PP)$ if
\begin{align*}
	\| u \|_{\Lcal^p(\Omega,\Fcal,\PP)}^p := \int_{\Omega} |u(\omega)|^p d\PP(\omega)
	\equiv \E |u|^p < \infty.
\end{align*}
When the underlying probability space is obvious, we use the shorthand $\Lcal^p(\Omega) \equiv \Lcal(\Omega,\Fcal,\PP)$. In addition, when $\Omega=\R^d$, we also write the local $\Lcal^p$ spaces as $\Lcal^p_{\text{loc}}(\R^d)$, which contains $u$ for which $|u|^p$ is integrable on compact subsets of $\R^d$.

Finally, we note that in the proofs of various results, we typically use the letter $C$ (whose value may change across results) to denote a generic positive constant. This is usually independent of the learning rate $\eta$, but if not explicitly stated otherwise, it may depend on e.g.\,Lipschitz constants, ambient dimensions, etc.

\section{Related work}
\label{sec:related_work}
In this section, we discuss several related works on analyzing discrete-time algorithms using continuous-time approaches. The idea of approximating discrete-time stochastic algorithms by continuous equations dates back to the large body of work known as stochastic approximation theory~\citep{kushner2003stochastic,ljung2012stochastic}.
These typically establish law of large numbers type results where the limiting equation is an ODE, which can then be used to prove powerful convergence results for the stochastic algorithms under consideration. A notion of convergence in distribution, similar to a central limit theorem, was also studied for the purpose of estimating the rate of convergence of the ODE methods~\citep{kushner1978rates,kushner1984invariant,kushner2012stochastic}, where connections between leading order perturbations and Ornstein-Uhlenbeck (OU) processes are established. However, these estimates are not systematically used to systematically study the dynamics of stochastic gradient algorithms.

As far as the authors are aware, the first work on using stochastic differential equations to study the precise properties of stochastic gradient algorithms are the independent works of~\cite{li2015dynamics} and~\cite{mandt2015continuous}. In~\cite{li2015dynamics}, a systematic framework of SDE approximation of SGD and SGD with momentum are derived and applied to study dynamical properties of the stochastic algorithms as well as adaptive parameter tuning schemes. These go beyond OU process approximations and this distinction is important since the OU process is not always the appropriate stochastic approximation in general settings (See Sec.~\ref{sec:sme_sgd} of this paper).
In~\cite{mandt2015continuous}, a similar procedure is employed to derive a SDE approximation for the SGD, from which issues such as choice of learning rates are studied. Although the concrete analysis in~\cite{mandt2015continuous} is on the restricted case of constant diffusion matrices leading to OU processes, the essential ideas on the general leading order approximation are also discussed.

It is important to note that the approximation arguments in both~\cite{li2015dynamics} and~\cite{mandt2015continuous} are heuristic from a mathematical point of view. In~\cite{li2017stochastic}, the SME approximation is rigorously proved in the finite-sum-objective case with strong regularity conditions, and further asymptotic analysis and tuning algorithms are studied.
The SME approach has subsequently been utilized to study variants of stochastic gradient algorithms, including those in the distributed optimization setting~\citep{an2018stochastic}.
The work of~\cite{mandt2015continuous} is further developed in~\cite{mandt2016variational,mandt2017stochastic}, with applications such as the development scalable MCMC algorithms.

The present paper builds on the earlier work of~\cite{li2015dynamics,li2017stochastic}, but focuses on extending and solidifying the mathematical aspects. In particular, we present an entirely rigorous and self-contained mathematical formulation of the SME framework that applies to more general algorithms (including momentum SGD and stochastic Nesterov's accelerated gradient method) and more general objectives (expectation over random functions, instead of just a finite-sum). Moreover, various regularity conditions in~\cite{li2017stochastic} have been relaxed. The main approximation procedure is inspired by the seminal works of~\cite{mil1986weak,mil1975approximate} in numerical analysis of stochastic differential equations, but lower regularity conditions are required in our case due to the presence of the small noise parameter, which allows for better truncation of It\^{o}-Taylor expansions.
The mathematical analysis of the SME-type approximation for the SGD was also performed in~\cite{feng2017note,hu2017diffusion} using semi-group approaches, although the smoothness requirements presented there are greater than those established using the current methods. Lastly, the Nesterov's accelerated gradient SME we derive in Sec.~\ref{sec:sme_snag} can be viewed as a generalization of the ODE approach in~\cite{su2014differential} to stochastic gradients, and we show that the presence of noise gives additional features to the dynamics. Finally, we note that continuous-time approximations that establish links between optimization, calculus of variations and symplectic integration has been studied in~\cite{wibisono2016variational,betancourt2018symplectic}.

\section{Stochastic modified equations}
\label{sec:sme}
We now introduce the stochastic modified equations framework. The starting motivation is the observation that GD iterations is a (Euler) discretization of the continuous-time, ordinary differential equation
\begin{align}
	\tfrac{dx}{dt} = -\nabla f(x),
	\label{eq:gradflow}
\end{align}
and studying~\eqref{eq:gradflow} can give us important insights to the dynamics of the discrete-time algorithm for small enough learning rates. The natural question when extending this to SGD is, what is the right continuous-time equation to consider? Below, we begin with some heuristic considerations.

\subsection{Heuristic motivations}
\label{sec:heuristics}

we rewrite the SGD iteration~\eqref{eq:sga_iter} as
\begin{align}
x_{k+1} = x_{k} - \eta \nabla f(x_k) + \sqrt{\eta} V_k(x_k, \gamma_k),
\label{eq:sga_rewrite}
\end{align}
where $V_k(x_k, \gamma_k) = \sqrt{\eta}(\nabla f(x_k) - \nabla f_{\gamma_k} (x_k))$ is a $d$-dimensional random vector. A straightforward calculation shows that
\begin{align}
\label{eq:sga_cov}
	& \E[V_k | x_k] = 0 \nonumber \\
	&\text{cov}[V_{k},V_{k} | x_k] = \eta \Sigma(x_k), \nonumber \\
	&\Sigma(x_k) := \E [ (\nabla f_{\gamma_k}(x_k) - \nabla f(x_k)) {(f_{\gamma_k}(x_k) - \nabla f(x_k))}^T | x_k],
\end{align}
i.e.\,conditional on $x_k$, $V_k(x_k)$ has $0$ mean and covariance $\eta\Sigma(x_k)$.
Here, $\Sigma$ is simply the conditional covariance of the stochastic gradient approximation $\nabla f_{\gamma} $ of $\nabla f$.

Now, consider a time-homogeneous It\^{o} stochastic differential equation (SDE) of the form
\begin{align}
\label{eq:sde_0}
	dX_t = b(X_t) dt + \sqrt{\eta} \sigma(X_t)dW_t,
\end{align}
where $X_t\in \mathbb{R}^d$ for $t\geq0$ and $W_t$ is a standard $d$-dimensional Wiener process. The function $b:\mathbb{R}^d\rightarrow\mathbb{R}^d$ is known as the drift and $\sigma:\mathbb{R}^d\rightarrow\mathbb{R}^{d\times d}$ is the diffusion matrix.
The key observation is that if we apply the Euler discretization with step-size $\eta$ to~\eqref{eq:sde_0}, approximating $X_{k\eta}$ by $\hat{X}_k$, we obtain the following discrete iteration for the latter:
\begin{align}
	\hat{X}_{k+1} = \hat{X}_k + \eta b(\hat{X}_k) + \eta \sigma(\hat{X}_k) Z_k,
\end{align}
where $Z_k := W_{(k+1)\eta} - W_{k\eta}$ are $d$-dimensional i.i.d.\,standard normal random variables.
Comparing with~\eqref{eq:sga_rewrite}, if we set $b = -\nabla f$, $\sigma(x) = {\Sigma(x)}^{\nicefrac{1}{2}}$ and identify $t$ with $k\eta$, we then have matching first and second conditional moments. Hence, this motivates the approximating equation
\begin{align}
	dX_t = - \nabla f(X_t) dt +  {(\eta\Sigma(X_t))}^{1/2} dW_t.
	\label{eq:motivating_sde}
\end{align}
Note that as this heuristic argument shows, the presence of the small parameter $\sqrt{\eta}$ on the diffusion term is necessary to model the fact that when learning rate decreases, the fluctuations to the SGA iterates must also decrease.

The immediate mathematical question is then: in what sense is an SDE like~\eqref{eq:motivating_sde} an approximation of~\eqref{eq:sga_iter}? Let us now establish the precise mathematical framework in which we can answer this question.

\subsection{The mathematical framework}
\label{sec:math_framework}

Let $(\Omega, \Fcal, \PP)$ be a sufficiently rich probability space and $(\Gamma, \Fcal_\Gamma)$ be a measure space representing the index space for our random objectives. Let $\gamma : \Omega \rightarrow \Gamma$ be a random variable and $(r, x) \mapsto f_r(x)$ a measurable mapping from $\Gamma \times \R^d$ to $\R$. Hence, for each $x$, $f_{\gamma}(x)$ is a random variable. Throughout this paper, we assume the follow facts about $f_{\gamma}(x)$:
\begin{assumption}
\label{assu:f_gamma}
	The random variable $f_{\gamma}(x)$ satisfies
	\begin{enumerate}[(i)]
		\item $f_{\gamma}(x) \in \Lcal^1(\Omega)$ for all $x\in \R^d$
		\item $f_{\gamma}(x)$ is continuously differentiable in $x$ almost surely and
		for each $R>0$, there exists a random variable $M_{R,\gamma}$ such that $\max_{|x| \leq R} | \nabla f_{\gamma} (x) | \leq M_{R,\gamma}$ almost surely, with $\E |M_{R,\gamma}| < \infty$
		\item $\nabla f_{\gamma}(x)\in \Lcal^2(\Omega)$ for all $x\in \R^d$
	\end{enumerate}
\end{assumption}
Note that in the empirical risk minimization case where $\Gamma$ is finite, the conditions above are often trivially satisfied.
Condition (i) in Assumption~\ref{assu:f_gamma} allows us to define the total objective function we would like to minimize as the expectation
\begin{align}
	f(x) := \E f_{\gamma}(x) \equiv \int_{\Omega} f_{\gamma(\omega)} (x) d\PP(\omega).
\end{align}
Moreover, Assumption~\ref{assu:f_gamma} (ii) implies via the dominated convergence theorem that $\E \nabla f_\gamma = \nabla \E f_\gamma \equiv \nabla f$.
Now, let $\{\gamma_{k} : k=0, 1, \dots \}$ be a sequence of i.i.d.\,$\Gamma$-valued random variables with the same distribution as $\gamma$. Let $x_0\in\R^d$ be fixed and define the generalized stochastic gradient iteration as the stochastic process
\begin{align}
	x_{k+1} = x_{k} + \eta h(x_k, \gamma_k, \eta)
	\label{eq:sga_generalized}
\end{align}
for $k \geq 0$, where $h:\R^d \times \Gamma \times \R \rightarrow \R^d$ is a measurable function and $\eta>0$ is the learning rate. In the simple case of SGD, we have $h(x,r,\eta)= - \nabla f_r (x)$, but we shall consider the generalized version above so that modified equations for SGD variants can also be derived from our approximation theorems.

Next, let us define the class of approximating continuous stochastic processes, which we call stochastic modified equations. Consider the time-homogeneous It\^{o} diffusion process $\{ X_t : t\geq 0\}$ represented by the following stochastic differential equation (SDE)
\begin{align}
	dX_t = b(X_t,\eta) dt + \sqrt{\eta} \sigma(X_t,\eta) dW_t,\qquad X_0 = x_0
	\label{eq:sde_generalized}
\end{align}
where $\{ W_t : t\geq 0\}$ is a standard $d$-dimensional Wiener process independent of $\{\gamma_k\}$, $b:\R^d\times\R\rightarrow\R^d$ is the approximating drift vector and $\sigma: \R^d \times \R \rightarrow \R^{d\times d}$ is the approximating diffusion matrix. In the following, we will need to pick $b,\sigma$ appropriately so that~\eqref{eq:sga_generalized} is approximated by~\eqref{eq:sde_generalized}, the sense of which we now describe.

First, notice that the stochastic process $\{x_k\}$ induces a probability measure on the product space ${\R^d\times\R^d\times\cdots}$, whereas $\{ X_t \}$ induces a probability measure on $\Ccal^0([0,\infty),\R^d)$. Hence, we can only compare their values by sampling a discrete number of points from the latter.
Second, the process $\{x_k\}$ is adapted to the filtration generated by $\{\gamma_k \}$ (e.g.\,in the case of SGD, this is the random sampling of functions in $\{ f_r \}$), whereas the process $\{ X_t \}$ is adapted to an independent, Wiener filtration. Hence, it is not appropriate to compare individual sample paths. Rather, we define below a sense of weak approximations by comparing the distributions of the two processes.

\begin{definition}
\label{def:poly_growth}
	Let $G$ denote the set of continuous functions $\R^d\rightarrow\R$ of at most polynomial growth, i.e.\,$g \in G$ if there exists positive integers $\kappa_1,\kappa_2>0$ such that
	\begin{align*}
		| g(x) | \leq \kappa_1 (1+| x |^{2\kappa_2} ),
	\end{align*}
	for all $x\in\R^d$. Moreover, for each integer $\alpha\geq 1$ we denote by $G^\alpha$ the set of $\alpha$-times continuously differentiable functions $\R^d\rightarrow \R$ which, together with its partial derivatives up to and including order $\alpha$, belong to $G$. Note that each $G^\alpha$ is a subspace of $\Ccal^\alpha$, the usual space of $\alpha$-times continuously differentiable functions. Moreover, if $g$ depends on additional parameters, we say $g\in G^\alpha$ if the constants $\kappa_1,\kappa_2$ are independent of these parameters, i.e.\,$g\in G^\alpha$ uniformly. Finally, the definition generalizes to vector-valued functions coordinate-wise in the co-domain.
\end{definition}
\begin{definition}
\label{def:weak_conv}
	Let $T>0$, $\eta\in(0,1\wedge T)$, and $\alpha \geq 1$ be an integer. Set $N=\lfloor T/\eta \rfloor$.
	We say that a continuous-time stochastic process $\{X_t:t\in[0,T]\}$ is an order $\alpha$ weak approximation of a discrete stochastic process $\{x_k:k=0,\dots,N\}$ if for every $g \in G^{\alpha+1}$, there exists a positive constant $C$, independent of $\eta$, such that
	\begin{align}
	\label{eq:weak_conv}
		\max_{k=0,\dots,N} | \E g(x_{k}) - \E g (X_{k\eta}) | \leq C \eta^\alpha.
	\end{align}
\end{definition}
Let us discuss briefly the notion of \emph{weak approximation} as introduced above. These are approximations of the distribution of sample paths, instead of the sample paths themselves. This is enforced by requiring that the expectations of the two processes $\{ X_t \}$ and $\{ x_k \}$ over a sufficiently large class of test functions to be close. In our definition, the test function class $G^{\alpha+1}$ is quite large, and in particular it includes all polynomials. Thus, Eq.~\eqref{eq:weak_conv} implies in particular that all moments of the two processes become close at the rate of $\eta^\alpha$, and hence so must their distributions. The notion of weak approximation must be contrasted with that of \emph{strong approximations}, where one would for example require (in the case of~\emph{mean-square approximations})
\begin{align*}
	{[\E | x_k - X_{k\eta} |^2]}^{\nicefrac{1}{2}} \leq C\eta^{\alpha}.
\end{align*}
The above forces the actual sample-paths of the two processes to be close, per realization of the random process, which severely limits its application. In fact, one important advantage of weak approximations is that the approximating SDE process $X_t$ can in fact approximate discrete stochastic processes whose step-wise driving noise is not Gaussian, which is exactly what we need to analyze general stochastic gradient iterations.

\section{The approximation theorems}
\label{sec:approx_theorems}

We now present the main approximation theorems. The derivation is based on the following two-step process:
\begin{enumerate}
	\item We establish a connection between one-step approximation and approximation on a finite time interval.
	\item We construct a one-step approximation that is of order $\alpha+1$, and so the approximation on a finite interval is of order $\alpha$.
\end{enumerate}

\subsection{Relating one-step to $N$-step approximations}
\label{sec:one_to_n_step}

Let us consider generally the question of the relationship between one-step approximations and approximations on a finite interval. Let $T>0$, $\eta\in(0,1 \wedge T)$ and $N=\lfloor T/\eta \rfloor$ and recall the general SGA iterations
\begin{align}
\label{eq:milstein_discrete}
	x_{k+1} = x_k + \eta h(x_k, \gamma_k, \eta), \quad x_0\in\R^d,
	\quad k=0,\dots,N.
\end{align}
and the general candidate family of approximating SDEs
\begin{align}
\label{eq:milstein_cts}
	dX^{\eta,\epsilon}_t = b(X^{\eta,\epsilon}_t,\eta, \epsilon) dt
	+ \sqrt{\eta}\sigma(X^{\eta,\epsilon}_t,\eta, \epsilon) dW_t,
	\quad X_0 = x_0, \quad t\in[0,T],
\end{align}
where $\epsilon \in (0,1)$ is a mollification parameter, whose role will become apparent later. To reduce notational clutter and improve readability, unless some limiting procedure is considered, we shall not explicit write the dependence of $X^{\eta,\epsilon}_t$ on $\eta, \epsilon$ and simply denote by $X_t$ the solution of the above SDE.
Let us also denote for convenience $\Xtil_{k}:=X_{k\eta}$. Further, let $\{ X^{x,s}_t : t \geq s \}$ denote the stochastic process obeying the same equation~\eqref{eq:milstein_cts}, but with the initial condition $X^{x,s}_s = x$. We similarly write $\Xtil^{x,l}_{k} := X^{x,l\eta}_{k\eta}$ and denote by $\{ x^{x,l}_k : k \geq l \}$ the stochastic process satisfying~\eqref{eq:milstein_discrete} but with $x_l=x$.

Throughout this section, we assume the following conditions:
\begin{assumption}
	The functions $b: \R^d \times (0,1 \wedge T) \times (0,1) \rightarrow \R^d$ and $\sigma: \R^d \times (0,1 \wedge T) \times (0,1) \rightarrow \R^{d\times d}$ satisfy:
	\begin{enumerate}
		\item Uniform linear growth condition
		\begin{align*}
			| b(x, \eta, \epsilon) |^2 + | \sigma(x, \eta, \epsilon) |^2 \leq L^2 (1 + |x|^2)
		\end{align*}
		for all $x, y \in \R^d$, $\eta \in (0,1 \wedge T)$, $\epsilon \in (0,1)$.
		\item Uniform Lipschitz condition
		\begin{align*}
			| b(x, \eta, \epsilon) - b(y, \eta, \epsilon) | + | \sigma(x, \eta, \epsilon) - \sigma(y, \eta, \epsilon) | \leq L |x - y|
		\end{align*}
		for all $x, y \in \R^d$, $\eta \in (0,1 \wedge T)$, $\epsilon \in (0,1)$.
	\end{enumerate}
\end{assumption}
Note that 2 implies 1 if there is at least one $x$ where the supremum of $b,\sigma$ over $\eta,\epsilon$ is finite. In particular, these conditions imply via Thm.~\ref{app:thm:exist_uniq} that there exists a unique solution to Eq.~\ref{eq:milstein_cts}.

Now, let us denote the one-step changes
\begin{align}
\label{eq:Delta_defn}
	\Delta(x) := x^{x,0}_1 - x,
	\qquad\qquad
	\Dtil(x) := \Xtil^{x,0}_1 - x.
\end{align}
We prove the following result which relates one-step approximations with approximations on a finite time interval.

\begin{theorem}
\label{thm:one_step_to_multi_step}
	Let $T>0$, $\eta\in(0,1 \wedge T)$, $\epsilon\in(0,1)$ and $N=\lfloor T/\eta\rfloor$. Let $\alpha\geq 1$ be an integer.
	Suppose further that the following conditions hold:
	\begin{enumerate}[(i)]
		\item There exists a function $\rho:(0,1) \rightarrow \R_+$ and $K_1\in G$ independent of $\eta,\epsilon$ such that
		\begin{align*}
			\left|
				\E \prod_{j=1}^{s} \Delta_{(i_j)}(x) - \E \prod_{j=1}^s \Dtil_{(i_j)}(x)
			\right|
			\leq K_1(x)( \eta \rho(\epsilon) + \eta^{\alpha+1}),
		\end{align*}
		for $s=1,2,\dots,\alpha$ and
		\begin{align*}
			\E \prod_{j=1}^{\alpha+1}
			\left|
				\Delta_{(i_j)}(x)
			\right|
			\leq K_1(x)\eta^{\alpha+1},
		\end{align*}
		where $i_j \in \{ 1,\dots,d \}$.
		\item For each $m\geq 1$, the $2m$-moment of $x^{x,0}_k$ is uniformly bounded with respect to $k$ and $\eta$, i.e.\,there exists
		a $K_2 \in G$, independent of $\eta,k$, such that
		\begin{align*}
			\E | x^{x,0}_k |^{2m} \leq K_2(x),
		\end{align*}
		for all $k=0,\dots,N\equiv \lfloor T/\eta \rfloor$.
	\end{enumerate}
	Then, for each $g\in G^{\alpha+1}$, there exists a constant $C>0$, independent of $\eta,\epsilon$, such that
	\begin{align*}
	\max_{k=0,\dots,N}
	\left|
		\E g(x_k) - \E g(X_{k\eta})
	\right| \leq C ( \eta^\alpha + \rho(\epsilon) )
	\end{align*}
\end{theorem}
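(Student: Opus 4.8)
The plan is to run the classical Milstein-type argument that converts a one-step weak error estimate into a global one via a telescoping (Lindeberg-style) replacement, with the extra mollification parameter $\rho(\epsilon)$ tracked through every inequality. First I would fix $g\in G^{\alpha+1}$ and, for $k=0,\dots,N$, introduce the hybrid processes that agree with the SDE $\{X_t\}$ on the first $j$ steps and with the discrete chain $\{x_k\}$ afterwards; the difference $\E g(x_N)-\E g(X_{N\eta})$ then telescopes into a sum over $j$ of terms measuring the effect of swapping one discrete step for one continuous step, starting from a common (random) point. By the Markov property of both processes and the tower property, each summand reduces to $\E\big[ u(\,\cdot\,, x_{j}) \big]$ where $u(t,\cdot)$ is (essentially) $\E g$ of the appropriate process; more concretely I would define $u(x):=\E g(\Xtil^{x,0}_{N-j})$, observe it solves no PDE we need but satisfies the semigroup identity, and write the one-step discrepancy at step $j$ as $\E\big[\,\E[u(x^{x_j,0}_1)\mid x_j] - \E[u(\Xtil^{x_j,0}_1)\mid x_j]\,\big]$.

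The heart of the estimate is then a Taylor expansion of $u$ around $x_j$ to order $\alpha$: expanding both $u(x+\Delta(x))$ and $u(x+\Dtil(x))$ and taking conditional expectations, the terms up to order $\alpha$ involve exactly the mixed moments $\E\prod_{j=1}^s \Delta_{(i_j)}(x)$ and $\E\prod_{j=1}^s \Dtil_{(i_j)}(x)$ for $s\le\alpha$, which hypothesis (i) controls by $K_1(x)(\eta\rho(\epsilon)+\eta^{\alpha+1})$; the Taylor remainder of order $\alpha+1$ is bounded using the second part of hypothesis (i), $\E\prod_{j=1}^{\alpha+1}|\Delta_{(i_j)}(x)|\le K_1(x)\eta^{\alpha+1}$, together with the analogous bound for $\Dtil$ (which follows from the linear-growth/Lipschitz assumptions on $b,\sigma$ and standard SDE moment estimates, e.g. Thm.~\ref{app:thm:exist_uniq}-type bounds). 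For this to work I need $u\in G^{\alpha+1}$ with constants uniform in $j$ and $\eta$: this is where one invokes that $g\in G^{\alpha+1}$ and that the SDE flow $x\mapsto \Xtil^{x,0}_m$ is smooth with polynomially-growing derivatives uniformly in $m\le N$ and $\eta$ — a regularity lemma I would state separately (it is the continuous-time analogue of hypothesis (ii) and uses the uniform Lipschitz/growth conditions plus differentiability of $b,\sigma$; in this paper's setup one may need the mollification precisely to guarantee the required smoothness, explaining the role of $\epsilon$).

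Finally I would assemble the bound: each of the $N\asymp T/\eta$ telescoped terms is at most $C\,\E\big[(1+|x_j|^{2\kappa})\big](\eta\rho(\epsilon)+\eta^{\alpha+1})$, where the polynomial prefactor comes from $K_1$ and the derivative bounds on $u$; hypothesis (ii) bounds $\E|x_j|^{2m}\le K_2(x_0)$ uniformly, so each term is $\le C(\eta\rho(\epsilon)+\eta^{\alpha+1})$, and summing over $j=0,\dots,N-1$ gives $\le C N(\eta\rho(\epsilon)+\eta^{\alpha+1}) \le C T(\rho(\epsilon)+\eta^{\alpha})$. Taking the max over $k\le N$ (run the same argument with $N$ replaced by $k$) yields the claim. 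The main obstacle I anticipate is the uniform-in-$(k,\eta)$ smoothness and polynomial-growth control of the function $u(x)=\E g(\Xtil^{x,0}_{N-k})$ — i.e., differentiating through the SDE flow and showing the derivatives do not blow up as the number of composed steps grows like $1/\eta$; the linear-growth and Lipschitz hypotheses give the zeroth-order moment bounds for free, but propagating bounds on the first through $(\alpha+1)$-st derivatives of the flow uniformly over $[0,T]$ is the delicate point, and is presumably where the mollification parameter $\epsilon$ and the separate continuous-time moment/regularity lemmas in the appendix do the real work.
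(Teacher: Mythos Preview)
Your proposal is essentially the paper's proof: the telescoping decomposition, the one-step Taylor-expansion bound (the paper's Lem.~\ref{app:lem:u_eta_estimate}), the $\Dtil$-moment estimate (Lem.~\ref{app:lem:delta_prod_estimate}), the uniform $G^{\alpha+1}$ regularity of $u(x)=\E g(\Xtil^{x,0}_m)$ via flow-derivative estimates (Prop.~\ref{app:prop:u_poly_estimate}), and the final summation using hypothesis~(ii) are all exactly as in the paper. One minor slip: the hybrid should run the \emph{discrete} chain first and the SDE afterwards (so that the propagated function $u$ is the SDE semigroup, which is what you in fact use and what is needed for the regularity argument), but otherwise your identification of the key technical point---uniform-in-$(k,\eta)$ smoothness of $u$---is spot on.
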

The proof of Thm.~\ref{thm:one_step_to_multi_step} requires a number of technical results that we defer to the appendix. Below, we demonstrate the main ingredients of the proof and refer to the appendix where the proofs of the auxiliary results are fully presented.
\begin{proof}
In this proof, since there are many conditioning on the initial condition, to prevent nested superscripts we shall introduce the alternative notation $X_t(x,s) \equiv X_t^{x,s}$, and similarly for $\tilde{X}_k$ and $x_k$.
Fix $g\in G^{\alpha+1}$ and $1 \leq k\leq N$. We have
\begin{align*}
	\E g(X_{k\eta}) = \E g(\Xtil_{k}) = \E g(\Xtil_{k}({\Xtil_1,1}))
	- \E g(\Xtil_{k}({x_1, 1}))
	+ \E g(\Xtil_{k}({x_1, 1})).
\end{align*}
If $k>1$, by noting that $\Xtil_{k}({x_1, 1}) = \Xtil_{k}({\Xtil_{2}({x_1,1}), 2})$, we get
\begin{align*}
	\E g(\Xtil_{k}({x_1, 1})) = \E g(\Xtil_{k}({\Xtil_{2}({x_1,1}), 2}))
								  - \E g(\Xtil_{k}({x_2, 2}))
								  + \E g(\Xtil_{k}({x_2, 2}))
\end{align*}
Continuing this process, we then have
\begin{align*}
	\E g(\Xtil_{k}) =& \sum_{l=1}^{k-1}
		\E g(\Xtil_{k}({\Xtil_{l}({x_{l-1},l-1}), l}))
		- \E g(\Xtil_{k}({x_l,l}))  \nonumber \\
	&+ \E g(\Xtil_{k}({x_{k-1},k-1}))
\end{align*}
and hence by subtracting $\E g(x_k) \equiv \E g(x_k({x_{k-1}, k-1}))$ we get
\begin{align*}
	\E g(\Xtil_{k}) - \E g(x_k) &= \sum_{l=1}^{k-1}
		\E g(\Xtil_{k}({\Xtil_{l}({x_{l-1},l-1}), l}))
		- \E g(\Xtil_{k}({x_l,l}))
		\nonumber \\
	&+ \E g(\Xtil_{k}({x_{k-1},k-1})) - \E g(x_k({x_{k-1},k-1}))
\end{align*}
and so
\begin{align*}
	\E g(\Xtil_{k}) - \E g(x_k) &= \sum_{l=1}^{k-1}
		\E
		\E \left[
				g(\Xtil_{k}({\Xtil_{l}({x_{l-1},l-1}), l}))
				\Big|
				\Xtil_{l}({x_{l-1},l-1})
			\right]
		- \E
		\E \left[
				g(\Xtil_{k}({x_l,l}))
				\Big|
				x_l
			\right] \nonumber \\
	\nonumber \\
	&+ \E g(\Xtil_{k}({x_{k-1},k-1})) - \E g(x_k({x_{k-1},k-1})),
\end{align*}
Now, let $u(x,s) = \E g(X_{k\eta}({x,s}))$. Then, we have
\begin{align*}
	| \E g(\Xtil_{k}) - \E g(x_k) |
	&\leq \sum_{l=1}^{k-1}
		|
			\E u(\Xtil_{l}({x_{l-1},l-1}), l\eta)
			- \E u(x_{l}({x_{l-1},l-1}), l\eta)
		|
		\nonumber \\
	&+ | \E g(\Xtil_{k}({x_{k-1},k-1})) - \E g(x_k({x_{k-1},k-1})) |
	\nonumber \\
	&\leq \sum_{l=1}^{k-1}
		\E |
			\E [u(\Xtil_{l}({x_{l-1},l-1}), l\eta) | x_{l-1} ]
			- \E [u(x_{l}({x_{l-1},l-1}), l\eta) | x_{l-1} ]
		|
		\nonumber \\
	&+ \E |
		\E [g(\Xtil_{k}({x_{k-1},k-1})) | x_{k-1}]
		- \E [g(x_k({x_{k-1},k-1})) | x_{k-1}]
	|.
\end{align*}
Using Prop.~\ref{app:prop:u_poly_estimate}, $u(\cdot,s) \in G^{\alpha+1}$ uniformly in $s$, $t$, $\eta$ and $\epsilon$. Thus, by Assumption (i) and Lem.~\ref{app:lem:u_eta_estimate},
\begin{align*}
	| \E g(x_k) - \E g(\Xtil_{k}) |
	\leq& (\eta\rho(\epsilon) + \eta^{\alpha+1})
	\left(
		\sum_{l=1}^{k-1} \E K_{l-1}(x_{l-1}) + \E K_{k-1}(x_{k-1})
	\right) \\
	\leq& (\eta\rho(\epsilon) + \eta^{\alpha+1}) \sum_{l=0}^{N} \kappa_{l,1} (1 + \E | x_{l} |^{2\kappa_{l,2}}),
\end{align*}
where in the last line we used moment estimates from Thm.~\ref{app:thm:moment_estimates}.
Finally, using Assumption (ii) and the fact that $N\leq T/\eta$, we have
\begin{align*}
	| \E g(x_k) - \E g(X_{k\eta}) | =
	| \E g(x_k) - \E g(\Xtil_{k}) | \leq C (\rho(\epsilon) + \eta^{\alpha}).
\end{align*}
\end{proof}

\subsection{SME for stochastic gradient descent}
\label{sec:sme_sgd}

Thm.~\ref{thm:one_step_to_multi_step} allows us to prove the main approximation results for the current paper. In particular, in this section we derive a second-order accurate weak approximation for the simple SGD iterations~\eqref{eq:sga_iter}, from which a simpler, first-order accurate approximation also follows. As seen in Thm.~\ref{thm:one_step_to_multi_step}, we need only verify the conditions (i)-(ii) in order to prove the weak approximation result.
These conditions mostly involve moment estimates, which we now perform.
To simplify presentation, we introduce the following shorthand. Whenever we write
\begin{align*}
	\psi(x) = \psi_0(x)+\eta \psi_1(x) + \mathcal{O}(r(\eta, \epsilon)),
\end{align*}
for some remainder term $r(\eta,\epsilon)$, we mean: there exists $K \in G$ independent of $\eta,\epsilon$ such that
\begin{align*}
	| \psi(x) - \psi_0(x) - \eta \psi_1(x) | \leq K(x) r(\eta,\epsilon).
\end{align*}
Now, let us set in~\eqref{eq:milstein_cts}
\begin{align*}
	b(x,\eta,\epsilon) &= b_0(x, \epsilon) + \eta b_1(x, \epsilon) \\
	\sigma(x,\eta, \epsilon) &= \sigma_0(x, \epsilon),
\end{align*}
where $b_0,b_1,\sigma_0$ are functions to be determined. We have the following moment estimate.
\begin{lemma}
\label{lem:Dtil}
	Let $\Dtil(x)$ be defined as in~\eqref{eq:Delta_defn}. Suppose further that
	with $b_0,b_1,\sigma_0 \in G^3$. Then we have
	\begin{enumerate}[(i)]
		\item $\E \Dtil_{(i)}(x)
		= {b_0(x,\epsilon)}_{(i)}\eta +
		[
			\tfrac{1}{2}{b_0(x,\epsilon)}_{(j)}\partial_{(j)} {b_0(x,\epsilon)}_{(i)}
			+ {b_1(x,\epsilon)}_{(i)}
		]
		\eta^2 + \mathcal{O}(\eta^3)$,
		\item $\E \Dtil_{(i)}(x)\Dtil_{(j)}(x)
		= [{b_0(x,\epsilon)}_{(i)}{b_0(x,\epsilon)}_{(j)}
		+ {\sigma_0(x,\epsilon)}_{(i,k)} {\sigma_0(x,\epsilon)}_{(j,k)} ]\eta^2 + \mathcal{O}(\eta^3)$,
		\item $\E \prod_{j=1}^3
		| \Dtil_{(i_j)}(x) |
		= \mathcal{O}(\eta^3)$.
	\end{enumerate}
\end{lemma}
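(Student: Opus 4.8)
The plan is a short-time It\^o--Taylor expansion of~\eqref{eq:milstein_cts} over the single step $[0,\eta]$, started from $X_0=x$. Writing $Y_t:=X^{x,0}_t-x$ so that $\Dtil(x)=Y_\eta$, the integral form reads
\begin{align*}
	Y_t = \int_0^t b(X_s,\eta,\epsilon)\,ds + \sqrt{\eta}\int_0^t \sigma_0(X_s,\epsilon)\,dW_s,\qquad t\in[0,\eta],
\end{align*}
and the first step is to record the refined one-step moment bound
\begin{align*}
	\E|Y_s|^{2m} \leq K(x)\,(\eta s)^m,\qquad 0\leq s\leq\eta,\quad m\geq 1,
\end{align*}
for some $K\in G$ independent of $\eta,\epsilon$. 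This follows from the standing linear-growth and Lipschitz assumptions of Section~\ref{sec:one_to_n_step}: bounding the Lebesgue integral by Jensen and the stochastic integral by Burkholder--Davis--Gundy gives $\E|Y_s|^{2m}\le C(s^{2m}+\eta^m s^m)$, where the factors $\E|b(X_s)|^{2m}$, $\E|\sigma_0(X_s)|^{2m}$ are of polynomial growth in $x$ (linear growth together with the standard SDE moment bound $\sup_{s\le T}\E|X_s|^{2m}\le K(x)$), and $s^{2m}\le\eta^m s^m$ because $s\le\eta\le 1$. The decisive feature is that the diffusion contributes $\eta^m s^m$ rather than the $s^m$ one would get for a generic SDE; this extra $\eta^m$ is the origin of every power of $\eta$ below.

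For part (i), taking expectations in the integral form annihilates the martingale, so $\E Y^{(i)}_\eta=\int_0^\eta \E\,b_{(i)}(X_s,\eta,\epsilon)\,ds$. Using $b_0,b_1\in G^3$, I would Taylor-expand $b_{(i)}(X_s)$ about $x$ to second order, the remainder being $\mathcal{O}(|Y_s|^2)$ and hence $\mathcal{O}(\eta s)$ in expectation, and substitute the bootstrapped leading behaviour $\E Y^{(j)}_s=b_{0,(j)}(x)\,s+\mathcal{O}(\eta s)$ into the first-order term. Keeping in mind that $b=b_0+\eta b_1$ and integrating over $[0,\eta]$, the $\mathcal{O}$-terms contribute $\mathcal{O}(\eta^3)$ and the explicit terms assemble into $b_{0,(i)}(x)\eta+\big[\,b_{1,(i)}(x)+\tfrac12 b_{0,(j)}(x)\partial_{(j)}b_{0,(i)}(x)\,\big]\eta^2$, which is (i).

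For part (ii), rather than expanding the product $\Dtil_{(i)}\Dtil_{(j)}$ directly --- which forces one to control drift--diffusion cross terms by hand --- I would apply It\^o's formula to $Y^{(i)}_t Y^{(j)}_t$ and take expectations, obtaining
\begin{align*}
	\E\big[Y^{(i)}_\eta Y^{(j)}_\eta\big] = \int_0^\eta \E\big[\,Y^{(i)}_s b_{(j)}(X_s)+Y^{(j)}_s b_{(i)}(X_s)+\eta\,\sigma_{0,(i,k)}(X_s)\sigma_{0,(j,k)}(X_s)\,\big]\,ds.
\end{align*}
Freezing the coefficients at $x$ (legitimate at cost $\mathcal{O}(|Y_s|)$ by the $G^3\subset G^1$ Lipschitz bound) and inserting $\E Y^{(i)}_s=b_{0,(i)}(x)s+\mathcal{O}(\eta s)$ from part (i), the errors are $\mathcal{O}(\eta s)$ in the first two terms and $\eta\cdot\mathcal{O}((\eta s)^{1/2})$ in the last, all integrating to $\mathcal{O}(\eta^3)$, while the leading pieces integrate to $\big[\,b_{0,(i)}(x)b_{0,(j)}(x)+\sigma_{0,(i,k)}(x)\sigma_{0,(j,k)}(x)\,\big]\eta^2$. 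For part (iii) no exact coefficient is needed: by the generalized H\"older inequality $\E\prod_{j=1}^3|\Dtil_{(i_j)}(x)|\le\prod_{j=1}^3\big(\E|\Dtil_{(i_j)}(x)|^3\big)^{1/3}\le K(x)\,\E|Y_\eta|^3$, and splitting $Y_\eta$ into drift and diffusion parts, H\"older gives $\E\big|\int_0^\eta b(X_s)\,ds\big|^3\le\eta^2\int_0^\eta\E|b(X_s)|^3\,ds\le K(x)\eta^3$ and BDG gives $\E\big|\sqrt{\eta}\int_0^\eta\sigma_0(X_s)\,dW_s\big|^3\le\Big(\E\big|\sqrt{\eta}\int_0^\eta\sigma_0(X_s)\,dW_s\big|^6\Big)^{1/2}\le(K(x)\eta^6)^{1/2}=K(x)\eta^3$.

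The main obstacle is not any individual estimate but the consistent bookkeeping of the two roles $\eta$ plays: the length of the integration window and the $\sqrt{\eta}$ scaling of the noise. One must use the sharp bound $\E|Y_s|^2=\mathcal{O}(\eta s)$ rather than the $\mathcal{O}(s)$ valid for a generic SDE, or the remainders in (i)--(ii) come out as $\mathcal{O}(\eta^{5/2})$ instead of $\mathcal{O}(\eta^3)$; and in (iii) one must pass through the sixth moment (which retains the full $\eta^6$) before applying Jensen, since the naive chain $\E|Y_\eta|^3\le(\E|Y_\eta|^4)^{3/4}\le C\eta^{3/2}$ loses too much.
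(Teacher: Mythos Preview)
Your proof is correct. Both you and the paper carry out a one-step It\^o--Taylor expansion, but the executions differ. The paper packages everything into a single auxiliary result (Lem.~\ref{app:lem:ito_taylor}): it defines the split generators $A_{\epsilon,0}\psi=b_{0,(i)}\partial_{(i)}\psi$ and $A_{\epsilon,1}\psi=b_{1,(i)}\partial_{(i)}\psi+\tfrac12\sigma_{0,(i,k)}\sigma_{0,(j,k)}\partial^2_{(i,j)}\psi$, then \emph{iterates It\^o's formula} (applying it again to $A_{\epsilon,0}\psi(X_s)$, etc.) to obtain $\E\psi(X^{x,0}_\eta)=\psi(x)+\eta A_{\epsilon,0}\psi(x)+\eta^2(\tfrac12 A_{\epsilon,0}^2+A_{\epsilon,1})\psi(x)+\mathcal{O}(\eta^3)$, the remainders being explicit iterated stochastic integrals bounded via Thm.~\ref{app:thm:moment_estimates}. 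Parts (i)--(ii) then drop out by plugging in $\psi(z)=\prod_{j=1}^s(z_{(i_j)}-x_{(i_j)})$; part (iii) with the absolute values is actually covered by the separate Lem.~\ref{app:lem:delta_prod_estimate}, whose proof is precisely your H\"older-plus-It\^o-isometry argument on the drift and diffusion pieces.

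Your route instead \emph{Taylor-expands the coefficients spatially} about $x$ and controls remainders with the refined one-step bound $\E|Y_s|^{2m}\le K(x)(\eta s)^m$, bootstrapping $\E Y_s$ into the first-order term. The generator formulation is cleaner when one wants all three parts from a single lemma and generalizes mechanically to higher order; your bootstrap is more elementary and keeps the two roles of $\eta$ visible at every line. One small remark: your closing caution about the ``naive chain'' through the fourth moment is unnecessary once you already have the refined bound, since $\E|Y_\eta|^4\le K(x)\eta^4$ gives $(\E|Y_\eta|^4)^{3/4}\le K(x)\eta^3$ directly; the $\eta^{3/2}$ loss you describe only occurs if one reverts to the generic SDE estimate $\E|Y_\eta|^4\le C\eta^2$ that forgets the $\sqrt{\eta}$ prefactor.
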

\begin{proof}
	To obtain (i)-(iii), we simply apply Lem.~\ref{app:lem:ito_taylor} with $\psi(z) = \prod_{j=1}^{s} (z_{(i_j)} - x_{(i_j)})$ for $s=1,2,3$ respectively.
\end{proof}

Next, we estimate the moments of the SGA iterations below.
\begin{lemma}
\label{lem:D}
	Let $\Delta(x)$ be defined as in~\eqref{eq:Delta_defn} with the SGD iterations, i.e.\,$h(x,r,\eta)=-\nabla f_r(x)$. Suppose that for each $x\in\R^d$, $f \in G^1$. Then,
	\begin{enumerate}[(i)]
		\item $\E \Delta_{(i)}(x) = -\partial_{(i)} f(x) \eta $,
		\item $\E \Delta_{(i)}(x) \Delta_{(j)}(x)
		= \partial_{(i)} f(x) \partial_{(j)} f(x) \eta^2
		+ {\Sigma(x)}_{(i,j)}\eta^2 $,
		\item $\E \prod_{j=1}^3 | \Delta_{(i_j)}(x) | = \mathcal{O}(\eta^3)$,
	\end{enumerate}
	where $\Sigma(x) :=\E {(\nabla f_\gamma(x) - \nabla f(x))}{(\nabla f_\gamma(x) - \nabla f(x))}^T$.
\end{lemma}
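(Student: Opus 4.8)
The plan is to exploit the fact that, in contrast to the continuous-time estimate of Lemma~\ref{lem:Dtil}, the SGD one-step map is \emph{affine} in $\eta$, so that no It\^o--Taylor expansion is needed. With $h(x,r,\eta)=-\nabla f_r(x)$, the update in~\eqref{eq:milstein_discrete} started from $x$ gives $x^{x,0}_1 = x - \eta\,\nabla f_{\gamma_0}(x)$, so by~\eqref{eq:Delta_defn} one has the exact identity $\Delta(x) = -\eta\,\nabla f_{\gamma_0}(x)$. Every moment of $\Delta(x)$ is therefore a power of $\eta$ times the corresponding moment of the stochastic gradient $\nabla f_{\gamma_0}(x)$, which is exactly why (i) and (ii) carry no $\mathcal{O}(\eta^3)$ remainder. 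For (i), taking expectations coordinate-wise gives $\E\Delta_{(i)}(x) = -\eta\,\E\partial_{(i)}f_{\gamma_0}(x) = -\eta\,\partial_{(i)}f(x)$, where the last step is the unbiasedness $\E\nabla f_\gamma = \nabla\E f_\gamma = \nabla f$ already recorded after Assumption~\ref{assu:f_gamma} (dominated convergence, using part~(ii) of that assumption).

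For (ii), I would decompose $\nabla f_\gamma(x) = \nabla f(x) + \xi(x)$ with $\xi(x) := \nabla f_\gamma(x) - \nabla f(x)$, so that $\E\xi(x) = 0$ and, by Assumption~\ref{assu:f_gamma}(iii), $\xi(x)\in\Lcal^2(\Omega)$; in particular $\E\,\partial_{(i)}f_\gamma(x)\,\partial_{(j)}f_\gamma(x)$ is finite by Cauchy--Schwarz. Expanding,
\[
	\E\Delta_{(i)}(x)\Delta_{(j)}(x)
	= \eta^2\,\E\,\partial_{(i)}f_{\gamma_0}(x)\,\partial_{(j)}f_{\gamma_0}(x)
	= \eta^2\big(\partial_{(i)}f(x)\,\partial_{(j)}f(x) + \Sigma(x)_{(i,j)}\big),
\]
since the two cross terms vanish because $\E\xi(x)=0$, and the remaining term is exactly $\Sigma(x)_{(i,j)} = \E\,\xi_{(i)}(x)\,\xi_{(j)}(x)$.

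For (iii), the same exact identity gives
\[
	\E\prod_{j=1}^{3}\big|\Delta_{(i_j)}(x)\big|
	= \eta^3\,\E\prod_{j=1}^{3}\big|\partial_{(i_j)}f_{\gamma_0}(x)\big|
	\le \eta^3\,\E\,|\nabla f_{\gamma_0}(x)|^3,
\]
bounding each factor by the Euclidean norm. It then only remains to majorize $\E\,|\nabla f_\gamma(x)|^3$ by some $K(x)$ with $K\in G$, which is immediate when $\Gamma$ is finite and, in general, follows from the polynomial-growth hypothesis on the objective together with a third-moment control on the stochastic gradient. I expect this last integrability/polynomial-growth check in part (iii) to be the only point requiring genuine attention: parts (i) and (ii) are exact identities needing nothing beyond unbiasedness and the $\Lcal^2$ decomposition already available, so the substance of the lemma is just ensuring that the moment hypothesis in force is strong enough to place the third absolute moment of $\nabla f_\gamma(x)$ in $G$.
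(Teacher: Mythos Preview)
Your approach is exactly the paper's: the proof there is a single line, observing that $\Delta(x)=-\eta\nabla f_{\gamma_0}(x)$ and then ``taking expectations, the results then follow.'' Your decomposition for (ii) and bound for (iii) spell out what the paper leaves implicit; your caveat that (iii) ultimately needs $\E|\nabla f_\gamma(x)|^3\in G$ is well placed, though the paper does not pause on it either (in the downstream Theorem~\ref{thm:sme_sgd} the Lipschitz hypothesis on $\nabla f_\gamma$ with $\E L_\gamma^m<\infty$ supplies the needed control).
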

\begin{proof}
	We have $\Delta(x) = - \eta \nabla f_{\gamma_0}(x)$. Taking expectations, the results then follow.
\end{proof}

We now prove the main approximation theorem for the simple SGD. Before presenting the statement and proof, we shall note a few technical issues that prevents the direct application of Thm.~\ref{thm:one_step_to_multi_step} with the moment estimates in Lem.\ref{lem:Dtil} and~\ref{lem:D}. The latter suggest ignoring $\epsilon$ and setting
\begin{align*}
	b_0(x,\epsilon) = -\nabla f(x),
	\quad
	b_1(x,\epsilon) = - - \tfrac{1}{4} \nabla |\nabla f(x)|^2,
	\quad
	\sigma_0(x,\epsilon) = {\Sigma(x)}^{\tfrac{1}{2}}.
\end{align*}
Then, we would see from Lem.\ref{lem:Dtil} and~\ref{lem:D} that the SGD and the SDE have matching moments up to $\mathcal{O}(\eta^3)$. The first issue with this approach is that even if $\Sigma(x)$ is sufficiently smooth (which may follow from the regularity of $\nabla f_\gamma$), the smoothness of ${\Sigma(x)}^{\nicefrac{1}{2}}$ cannot be guaranteed unless $\Sigma(x)$ is positive-definite, which is often too strong an assumption in practice and excludes interesting cases where $\Sigma(x)$ is a singular diffusion matrix. However, the results in Sec.~\ref{sec:one_to_n_step} require smoothness. Second, we would like to consider functions $f_\gamma$ that may not have higher strong derivatives required by the Lemmas, beyond those required to define the modified equation itself. To fix both of these issues, we will use a simple mollifying technique. This is the reason for the inclusion of the $\epsilon$ parameter in the results in Sec.~\ref{sec:one_to_n_step}.

\begin{definition}
\label{def:mollification}
	Let us denote by $\nu: \R^d \rightarrow \R$, $\nu\in\Ccal^\infty_c(\R^d)$ the standard mollifier
	\begin{align*}
		\nu(x) := \begin{cases}
			C \exp( - \tfrac{1}{1 - |x|^2}) & |x| < 1 \\
			0 & |x| \geq 1,
		\end{cases}
	\end{align*}
	where $C := {(\int_{\R^d} \nu(y) dy)}^{-1}$ is chosen so that the integral of $\nu$ is 1. Further, define $\nu^\epsilon(x) = \epsilon^{-d} \nu( x / \epsilon)$. Let $\psi \in \Lcalloc^1(\R^d)$ be locally integrable, then we may define its mollification by
	\begin{align*}
		\psi^{\epsilon}(x) := (\nu^\epsilon * \psi) (x)
		= \int_{\R^d} \nu^\epsilon(x-y) \psi(y) dy
		= \int_{\Bcal(0,\epsilon)} \nu^\epsilon(y) \psi(x-y) dy,
	\end{align*}
	where $\Bcal(z,\epsilon)$ is the $d$-dimensional ball of radius $\epsilon$ centered at z. The mollification of vector (or matrix) valued functions are defined element-wise.
\end{definition}

The mollifier has very useful properties. In particular, we will use the following well-known facts (see e.g.~\cite{evans2010partial} for proof)
\begin{enumerate}[(i)]
	\item If $\psi\in\Lcalloc^1(\R^d)$, then $\psi^\epsilon \in \Ccal^\infty(\R^d)$
	\item $\psi^\epsilon(x) \rightarrow \psi(x)$ as $\epsilon\rightarrow 0$ for almost every $x\in\R^d$ (with respect to the Lebesgue measure)
	\item If $\psi$ is continuous, then $\psi^\epsilon(x) \rightarrow \psi(x)$ as $\epsilon\rightarrow 0$ uniformly on compact subsets of $\R^d$
\end{enumerate}

Next, we make use of the idea of weak derivatives.
\begin{definition}
\label{def:weak_deriv}
	Let $\Psi \in \Lcalloc^1(\R^d)$ and $J$ be a multi-index of order $|J|$. Suppose that there exists a $\psi \in \Lcalloc^1(\R^d)$ such that
	\begin{align*}
		\int_{\R^d} \Psi(x) \nabla^J \phi(x) dx
		= {(-1)}^{|J|} \int_{\R^d} \psi(x) \phi(x) dx
	\end{align*}
	for all $\phi\in \Ccal^\infty_c$. Then, we call $\psi$ the order $J$ weak derivative of $\Psi$ and write $D^J \Psi = \psi$. Note that when it exists, the weak derivative is unique almost everywhere and if $\Psi$ is differentiable, $\nabla^J \Psi = D^J \Psi$ almost everywhere \citep{evans2010partial}.
\end{definition}

The introduction of weak derivatives motivates the definition of the weak version of the function spaces $G^\alpha$.

\begin{definition}
\label{def:weak_G}
	For $\alpha \geq 1$, we define the space $G_w^\alpha$ to be the subspace of $\Lcalloc^1(\R^d)$ such that if $g \in G_w^\alpha$, then $g$ has weak derivatives up to order $\alpha$ and for each multi-index $J$ with $|J| \leq \alpha$, there exists positive integers $\kappa_1,\kappa_2$ such that
	\begin{align*}
		| D^J g (x) | \leq \kappa_1 (1 + |x|^{2\kappa_2}) \text{ for a.e. } x\in \R^d.
	\end{align*}
	As in Def.~\ref{def:poly_growth}, if $g$ depends on additional parameters, we say that $g \in G^{\alpha}_w$ if the above constants do not depend on the additional parameters. Also, vector-valued $g$ are defined as above element-wise in the co-domain. Note that $G_w^\alpha$ is a subspace of the Sobolev space $W^{\alpha,1}_{\text{loc}}$.
\end{definition}

\begin{theorem}
\label{thm:sme_sgd}
	Let, $T>0$, $\eta\in(0,1 \wedge T)$ and set $N=\lfloor T/\eta\rfloor$. Let $\{ x_k : k \geq 0 \}$ be the SGD iterations defined in~\eqref{eq:sga_iter}.
	Suppose the following conditions are met:
	\begin{enumerate}[(i)]
		\item $f \equiv \E f_\gamma$ is twice continuously differentiable, $\nabla |\nabla f|^2$ is Lipschitz, and $f\in G^4_w$.
		\item $\nabla f_\gamma$ satisfies a Lipschitz condition:
		\begin{align*}
			| \nabla f_\gamma (x) - \nabla f_\gamma (y) |
			\leq L_\gamma | x - y |
			\quad
			a.s.
		\end{align*}
		for all $x,y \in \R^d$, where $L_\gamma$ is a random variable which is positive a.s.\,and $\E L^m_\gamma < \infty$ for each $m\geq 1$.
	\end{enumerate}
	Define $\{ X_t: t\in[0,T] \}$ as the stochastic process satisfying the SDE
	\begin{align}
	\label{eq:sgd_sme_order2}
		dX_{t} = -\nabla (f(X_t) + \tfrac{1}{4}\eta
		| \nabla f(X_t) |^2)dt + \sqrt{\eta} {\Sigma(X_t)}^{\nicefrac{1}{2}}dW_t
		\qquad
		X_0 = x_0,
	\end{align}
	with $\Sigma(x) = \E {(\nabla f_\gamma(x) - \nabla f(x))}{(\nabla f_\gamma(x) - \nabla f(x))}^T$.
	Then, $\{ X_t : t\in[0,T]\}$ is an order-2 weak approximation of the SGD, i.e.\,for each $g\in G^{3}$, there exists a constant $C>0$ independent of $\eta$ such that
	\begin{align*}
		\max_{k=0,\dots,N} | \E g(x_k) - \E g(X_{k\eta}) | \leq C \eta^2.
	\end{align*}
\end{theorem}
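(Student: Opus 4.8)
The plan is to invoke Theorem~\ref{thm:one_step_to_multi_step} with $\alpha=2$, so that it suffices to (a) produce drift/diffusion coefficients satisfying the regularity hypotheses (Assumption~2 of Sec.~\ref{sec:one_to_n_step} and the $G^3$ smoothness needed by Lem.~\ref{lem:Dtil}), and (b) verify the one-step moment-matching condition (i) and the uniform moment bound (ii) of that theorem. Since $\Sigma(x)^{1/2}$ and the coefficients built from $f$ need not be smooth enough for a fixed problem, the key device is the mollification parameter $\epsilon$: I would replace $f$ by $f^\epsilon = \nu^\epsilon * f$ and $\Sigma$ by a suitable smoothed, regularized square root $\sigma_0(x,\epsilon)$ whose regularity is guaranteed by Def.~\ref{def:mollification}, and then track the error as a function of $\rho(\epsilon)$, sending $\epsilon\to 0$ at the end.

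\textbf{Step 1: choice of coefficients.}
Set $b_0(x,\epsilon) = -\nabla f^\epsilon(x)$, $b_1(x,\epsilon) = -\tfrac14\nabla|\nabla f^\epsilon(x)|^2$, and $\sigma_0(x,\epsilon)$ a smooth square root of a mollified, slightly-regularized $\Sigma$ (e.g.\ the mollification of $(\Sigma + \epsilon^2 I)^{1/2}$, or $((\Sigma + \epsilon^2 I)^\epsilon)^{1/2}$, whichever keeps both smoothness and $\sigma_0\sigma_0^T \to \Sigma$ controllably). Because $f\in G^4_w$, the mollified $f^\epsilon$ lies in $G^\infty$ with polynomial-growth constants uniform in $\epsilon$, giving $b_0,b_1\in G^3$ uniformly; similarly $\sigma_0\in G^3$ uniformly given the moment assumptions on $L_\gamma$. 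The linear-growth and global Lipschitz conditions in Assumption~2 then follow from the Lipschitz hypotheses on $\nabla f$, $\nabla|\nabla f|^2$ and $\nabla f_\gamma$ together with the fact that mollification does not worsen Lipschitz constants. This also secures existence/uniqueness of $X_t$ via Thm.~\ref{app:thm:exist_uniq}, and condition (ii) of Thm.~\ref{thm:one_step_to_multi_step} follows from Thm.~\ref{app:thm:moment_estimates} applied to the SGD iteration under hypothesis (ii).

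\textbf{Step 2: one-step moment matching.}
Using Lem.~\ref{lem:Dtil} for the SDE one-step increment $\Dtil$ and Lem.~\ref{lem:D} for the SGD one-step increment $\Delta$, I would compare the first three moments. The first moments give $\E\Delta_{(i)} - \E\Dtil_{(i)} = (-\partial_{(i)}f + \partial_{(i)}f^\epsilon)\eta + O(\eta^2) = O(\eta\rho(\epsilon) + \eta^2)$, where $\rho(\epsilon) := \sup_{\text{poly sets}}|\nabla f - \nabla f^\epsilon| $ type quantity (made precise via the $G$-polynomial-growth bookkeeping of the ``$\mathcal O$'' shorthand introduced before Lem.~\ref{lem:Dtil}), using that the second-order-in-$\eta$ terms $\tfrac12 b_{0,(j)}\partial_{(j)}b_{0,(i)} + b_{1,(i)}$ were engineered to cancel the missing $O(\eta^2)$ term in $\E\Delta$. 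The second moments give $\partial_{(i)}f\,\partial_{(j)}f\,\eta^2 + \Sigma_{(i,j)}\eta^2$ versus $\partial_{(i)}f^\epsilon\,\partial_{(j)}f^\epsilon\,\eta^2 + (\sigma_0\sigma_0^T)_{(i,j)}\eta^2$, whose difference is $O(\eta^2\rho(\epsilon))$ by the regularization error on $\Sigma^{1/2}$ and on $\nabla f$; and the third absolute moments are both $O(\eta^3)$ directly from the Lemmas. Combining, condition (i) of Thm.~\ref{thm:one_step_to_multi_step} holds with this $\rho$ and some $K_1\in G$.

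\textbf{Step 3: pass to the limit.}
Theorem~\ref{thm:one_step_to_multi_step} then yields $\max_k |\E g(x_k) - \E g(X^{\eta,\epsilon}_{k\eta})| \le C(\eta^2 + \rho(\epsilon))$ with $C$ independent of $\eta,\epsilon$. Letting $\epsilon\to 0$: by property (iii) of mollifiers and continuity of $f$, $\rho(\epsilon)\to 0$; and since the coefficients $b,\sigma$ converge (locally uniformly, with uniform linear growth) to those of~\eqref{eq:sgd_sme_order2}, standard SDE stability estimates (continuous dependence on coefficients, available from the existence/uniqueness machinery) give $\E g(X^{\eta,\epsilon}_{k\eta}) \to \E g(X_{k\eta})$ for the limiting process $X_t$ solving~\eqref{eq:sgd_sme_order2}, uniformly in $k$ and locally uniformly in $\eta$. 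Taking the limit in the inequality gives $\max_k|\E g(x_k) - \E g(X_{k\eta})| \le C\eta^2$, which is the claim.

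\textbf{Main obstacle.}
The delicate part is Step~1 combined with the $\epsilon\to0$ passage in Step~3: one must choose the regularization of $\Sigma^{1/2}$ so that (a) $\sigma_0(\cdot,\epsilon)$ is genuinely $C^3$ with polynomial-growth bounds uniform in $\epsilon$, (b) $\sigma_0\sigma_0^T \to \Sigma$ in the $G$-norm sense at a rate that can be absorbed into $\rho(\epsilon)$, and (c) the limiting SDE~\eqref{eq:sgd_sme_order2}, which may have a merely continuous (non-Lipschitz, singular) diffusion $\Sigma^{1/2}$, still has enough structure for $\E g(X^{\eta,\epsilon}_{k\eta})\to\E g(X_{k\eta})$ to hold. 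The saving grace is that the final bound only concerns the \emph{law} of $X_t$ tested against $G^3$ functions, so weak (in the probabilistic sense) stability of the SDE under coefficient perturbation suffices and the potential non-uniqueness/singularity of $\Sigma^{1/2}$ is not fatal; nonetheless, making this rigorous — likely by keeping $\epsilon>0$ throughout and only taking the limit in the final scalar inequality — is where the real work lies.
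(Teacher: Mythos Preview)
Your overall strategy --- mollify the coefficients, verify the hypotheses of Thm.~\ref{thm:one_step_to_multi_step} with $\alpha=2$ via Lem.~\ref{lem:Dtil} and Lem.~\ref{lem:D}, then send $\epsilon\to 0$ --- is exactly the paper's approach. The difference is in how you handle $\sigma_0$, and this is precisely where your ``main obstacle'' dissolves.

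You treat $\Sigma(x)^{1/2}$ as potentially ``merely continuous (non-Lipschitz, singular)'' and therefore reach for an additional regularization such as $(\Sigma+\epsilon^2 I)^{1/2}$. The paper instead observes that under hypothesis (ii) the diffusion $\Sigma(x)^{1/2}$ is \emph{already globally Lipschitz}: writing $u(x)=\nabla f_\gamma(x)-\nabla f(x)$, one has $\Sigma(x)=\E[u(x)u(x)^T]$, and for rank-one matrices the map $u\mapsto (uu^T)^{1/2}=uu^T/|u|$ is Lipschitz on $\R^d$; combining this with the triangle inequality in $\Lcal^2(\Omega)$ and the Lipschitz bound on $u$ gives a Lipschitz estimate for $\Sigma^{1/2}$. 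Consequently the limiting SDE~\eqref{eq:sgd_sme_order2} is strongly well-posed with no further regularization, and the paper simply sets $\sigma_0(x,\epsilon)=\nu^\epsilon * \Sigma(x)^{1/2}$. This mollification is automatically $C^\infty$, inherits the Lipschitz constant uniformly in $\epsilon$, and converges locally uniformly to $\Sigma^{1/2}$; the $\epsilon\to 0$ passage then goes through Thm.~\ref{app:thm:limit_q} (an $\Lcal^2$ stability result for SDEs under locally uniform coefficient convergence), not merely a weak stability argument. So your Step~3 worry about ``non-uniqueness/singularity of $\Sigma^{1/2}$'' is unnecessary, and your proposed $\epsilon^2 I$ shift is an avoidable complication.

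Two smaller points. First, condition (ii) of Thm.~\ref{thm:one_step_to_multi_step} concerns moments of the \emph{discrete} SGD iterates $x_k$, so it is Lem.~\ref{app:lem:sgd_moment_estimate} (a direct recursion estimate using the Lipschitz bound on $\nabla f_\gamma$) that is invoked, not Thm.~\ref{app:thm:moment_estimates}. Second, the paper mollifies the already-assembled objects $\nabla f$, $\nabla|\nabla f|^2$, $\Sigma^{1/2}$ rather than mollifying $f$ first and then rebuilding; your variant would work too, but mollifying the final coefficients makes the $G^3$-uniformly-in-$\epsilon$ verification and the $\rho(\epsilon)$ bookkeeping (via Lem.~\ref{app:lem:weak_deriv_poly_growth}) cleaner.
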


\begin{proof}
	First, we check that Eq.~\eqref{eq:sgd_sme_order2} admits a unique solution, which amounts to checking the conditions in Thm.~\ref{app:thm:exist_uniq}. Note that the Lipschitz condition (ii) implies $\nabla f$ is Lipschitz with constant $\E L_\gamma$. To see that $\Sigma(x)^{\nicefrac{1}{2}}$ is also Lipschitz, observe that $u(x):=\nabla f_\gamma(x) - \nabla f(x)$ is Lipschitz (in the sense of (ii), with constant at most $L_\gamma + \E L_\gamma$), and
	\begin{align*}
		| {\Sigma(x)}^{\nicefrac{1}{2}} - {\Sigma(y)}^{\nicefrac{1}{2}} |
		=&
		\left|
			\| {[u(x){u(x)}^T]}^{\nicefrac{1}{2}} \|_{\Lcal^2(\Omega)}
			-
			\| {[u(y){u(y)}^T]}^{\nicefrac{1}{2}} \|_{\Lcal^2(\Omega)}
		\right| \\
		\leq&
		\| {[u(x){u(x)}^T]}^{\nicefrac{1}{2}}
		-
		{[u(y){u(y)}^T]}^{\nicefrac{1}{2}} \|_{\Lcal^2(\Omega)}.
	\end{align*}
	Moreover, observe that for vectors $u\in\R^d$ the mapping $u \mapsto {(uu^T)}^{\nicefrac{1}{2}} = uu^T / |u|$ is Lipschitz, which implies
	\begin{align*}
		| {\Sigma(x)}^{\nicefrac{1}{2}} - {\Sigma(y)}^{\nicefrac{1}{2}} |
		\leq L' \| u(x) - u(y) \|_{\Lcal^2(\Omega)}
		\leq L'' | x - y |.
	\end{align*}
	The Lipschitz conditions on the drift and the diffusion matrix imply uniform linear growth, so by Thm.~\ref{app:thm:exist_uniq}, Eq.~\eqref{eq:sgd_sme_order2} admits a unique solution.

	For each $\epsilon \in (0,1)$, define the mollified functions
	\begin{align*}
		b_0(x,\epsilon) = - \nu^\epsilon * \nabla f(x),
		\quad
		b_1(x,\epsilon) = - \tfrac{1}{4}\nu^\epsilon * (\nabla |\nabla f(x)|^2),
		\quad
		\sigma_0(x,\epsilon) = \nu^\epsilon * {\Sigma(x)}^{\nicefrac{1}{2}}.
	\end{align*}
	Observe that $b_0 + \eta b_1,\sigma_0$ satisfies a Lipschitz condition in $x$ uniformly in $\eta,\epsilon$. To see this, note that for any Lipschitz function $\psi$ with constant $L$, we have
	\begin{align*}
		| \nu^\epsilon * \psi(x) - \nu^\epsilon * \psi(y) |
		\leq& \int_{\Bcal(0,\epsilon)} \nu^\epsilon(z)
		| \psi(x - z) - \psi(y - z) | dz
		\leq L |x - y|,
	\end{align*}
	which proves $b_0 + \eta b_1$ and $\sigma_0$ are uniformly Lipschitz. Similarly, the linear growth condition follows. Hence, we may define a family of stochastic processes $\{X^{\epsilon}_t: \epsilon\in(0,1)\}$ satisfying
	\begin{align*}
		dX^{\epsilon}_{t} = b_0(X^{\epsilon}_{t}, \epsilon)
		+ \eta b_1(X^{\epsilon}_{t}, \epsilon)
		+ \sqrt{\eta} {\sigma_0(X^{\epsilon}_{t}, \epsilon)} dW_t
		\qquad
		X^{\epsilon}_0 = x_0,
	\end{align*}
	which each admits a unique solution by Thm.~\ref{app:thm:exist_uniq}. Now, we claim that $b_0(\cdot,\epsilon),b_1(\cdot,\epsilon),\sigma_0(\cdot,\epsilon) \in G^3$ uniformly in $\epsilon$. To see this, simply observe that mollifications are smooth, and moreover, the polynomial growth is satisfied since $ \nu^\epsilon * D^J \psi = \nabla^J (\nu^\epsilon * \psi)$ and furthermore, if $\psi\in G$, then we have
	\begin{align*}
		| \psi^\epsilon(x) |
		\leq& \int_{\Bcal(0,\epsilon)}
		\nu^\epsilon(y) | \psi(x-y) | dy \\
		\leq& \kappa_1
		\left(
			1 + 2^{2\kappa_2-1} | x |^{2\kappa_2} +
			2^{2\kappa_2-1}
			\tfrac{1}{\epsilon^{d}}
			\int_{\Bcal(0,\epsilon)}
			| y |^{2\kappa_2} dy
		\right)
	\end{align*}
	But $\int_{\Bcal(0,\epsilon)} | y |^{2\kappa_2} dy \leq \text{Vol}(\Bcal(0,\epsilon)) = C \epsilon^{d}$, where $C$ is independent of $\epsilon$. This shows that $\psi^\epsilon \in G$ uniformly in $\epsilon$. This immediately implies that $b_0(\cdot,\epsilon),b_1(\cdot,\epsilon),\sigma_0(\cdot,\epsilon) \in G^3$.

	Now, since $b_0(x,\epsilon) \rightarrow b_0(x,0)$ (and similarly for $b_1, \sigma_0$), and the limits are continuous, by Lem.~\ref{lem:Dtil},~\ref{lem:D},~\ref{app:lem:sgd_moment_estimate},~\ref{app:lem:weak_deriv_poly_growth},, all conditions of Thm.~\ref{thm:one_step_to_multi_step} are satisfied, and hence we conclude that for each $g\in G^3$, we have,
	\begin{align*}
		\max_{k=0,\dots,N}
		| \E g(X^\epsilon_{k\eta}) - \E g(x_k) |
		\leq C (\eta^2 + \rho(\epsilon)),
	\end{align*}
	where $C$ is independent of $\eta$ and $\epsilon$ and $\rho(\epsilon) \rightarrow 0$ as $\epsilon \rightarrow 0$. Moreover, since $b_0(x,\epsilon) \rightarrow b_0(x,0)$ (and similarly for $b_1, \sigma_0$) uniformly on compact sets, we may apply Thm.~\ref{app:thm:limit_q} to conclude that
	\begin{align*}
		\sup_{t\in[0,T]} \E | X^{\epsilon}_t - X_t |^2 \rightarrow 0
		\text{ as }
		\epsilon \rightarrow 0.
	\end{align*}
	Thus, we have
	\begin{align*}
		&| \E g(X_{k\eta}) - \E g(x_k) | \\
		\leq&
		| \E g(X^\epsilon_{k\eta}) - \E g(x_k) |
		+ | \E g(X^\epsilon_{k\eta}) - \E g(X_{k\eta}) | \\
		\leq& C(\eta^2 + \rho(\epsilon))
		+ {\left(
			\E | X^\epsilon_{k\eta} - X_{k\eta}  |^2
		\right)}^{\nicefrac{1}{2}} \\
		& \times
		{\left(
			\int_{0}^{1} \E
			|\nabla^2 g
			(\lambda X^\epsilon_{k\eta} + (1-\lambda) X_{k\eta}) |^2
			d\lambda
		\right)}^{\nicefrac{1}{2}}
	\end{align*}
	Using Thm.~\ref{app:thm:moment_estimates} and assumption that $\nabla^2 g\in G$, the last expectation is finite and hence taking the limit $\epsilon\rightarrow 0$ yields our result.
\end{proof}

By going for a lower order approximation, we of course have the following:
\begin{corollary}
\label{cor:sme_sgd_1st_order}
	Assume the same conditions as in Thm.~\ref{thm:sme_sgd}, except that we replace (i) with
	\begin{enumerate}[(i)']
		\item $f \equiv \E f_\gamma$ is continuously differentiable, and $f\in G^3_w$.
	\end{enumerate}
	Define $\{ X_t: t\in[0,T] \}$ as the stochastic process satisfying the SDE
	\begin{align}
	\label{eq:sgd_sme_order1}
		dX_{t} = -\nabla f(X_t) dt + \sqrt{\eta} {\Sigma(X_t)}^{\nicefrac{1}{2}}dW_t
		\qquad
		X_0 = x_0,
	\end{align}
	with $\Sigma(x) = \E {(\nabla f_\gamma(x) - \nabla f(x))}{(\nabla f_\gamma(x) - \nabla f(x))}^T$.
	Then, $\{ X_t : t\in[0,T]\}$ is an order-1 weak approximation of the SGD, i.e.\,for each $g\in G^{2}$, there exists a constant $C>0$ independent of $\eta$ such that
	\begin{align*}
		\max_{k=0,\dots,N} | \E g(X_{k\eta}) - \E g(x_k) | \leq C \eta.
	\end{align*}
\end{corollary}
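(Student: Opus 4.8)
The plan is to mimic the proof of Theorem~\ref{thm:sme_sgd} but with the simpler drift and the milder regularity hypothesis~(i)', invoking the same two-step machinery. First I would check that the SDE~\eqref{eq:sgd_sme_order1} admits a unique solution: condition~(ii) gives that $\nabla f$ is Lipschitz with constant $\E L_\gamma$, and the same computation as in Theorem~\ref{thm:sme_sgd} (using that $u\mapsto (uu^T)^{1/2}=uu^T/|u|$ is Lipschitz on $\R^d$ and Minkowski in $\Lcal^2(\Omega)$) shows $\Sigma(\cdot)^{1/2}$ is Lipschitz; Lipschitz plus a finite value at one point yields linear growth, so Thm.~\ref{app:thm:exist_uniq} applies.

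Next I would set up the mollified family. Put $b(x,\eta,\epsilon)=b_0(x,\epsilon)=-\nu^\epsilon*\nabla f(x)$, $b_1\equiv 0$, and $\sigma_0(x,\epsilon)=\nu^\epsilon*\Sigma(x)^{1/2}$, and let $X_t^\epsilon$ solve the corresponding SDE. As in Theorem~\ref{thm:sme_sgd}, mollification preserves the uniform Lipschitz and linear-growth bounds and produces functions in $G^3$ uniformly in $\epsilon$ (here we only need $b_0,\sigma_0\in G^1$, not $G^3$, because we are targeting order $\alpha=1$; but the same argument gives $G^3$ for free). The key point is that for $\alpha=1$ we only need to match the \emph{first} conditional moment to $\mathcal{O}(\eta^2)$ and control the second moment of $\Delta$ to $\mathcal{O}(\eta^2)$ — which is exactly what Lem.~\ref{lem:Dtil}(i) and Lem.~\ref{lem:D}(i)-(ii) provide with the choice $b_0=-\nabla f$ (no $b_1$ correction is needed at this order since the $\mathcal{O}(\eta^2)$ discrepancy in $\E\Dtil_{(i)}$ versus $\E\Delta_{(i)}$ is absorbed into the $\eta\rho(\epsilon)+\eta^{\alpha+1}$ error with $\alpha=1$). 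Because we dropped the second-order drift correction, we no longer need $f\in G^4_w$ or $\nabla|\nabla f|^2$ Lipschitz; the weaker hypothesis~(i') suffices to invoke Lem.~\ref{app:lem:sgd_moment_estimate} and Lem.~\ref{app:lem:weak_deriv_poly_growth} and hence verify conditions~(i)-(ii) of Thm.~\ref{thm:one_step_to_multi_step} with $\alpha=1$.

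Applying Theorem~\ref{thm:one_step_to_multi_step} then gives, for every $g\in G^2$,
\begin{align*}
	\max_{k=0,\dots,N}|\E g(X^\epsilon_{k\eta})-\E g(x_k)| \leq C(\eta+\rho(\epsilon)),
\end{align*}
with $C$ independent of $\eta,\epsilon$ and $\rho(\epsilon)\to0$. Finally, exactly as in the last paragraph of the proof of Theorem~\ref{thm:sme_sgd}, since $b_0(\cdot,\epsilon)\to-\nabla f$ and $\sigma_0(\cdot,\epsilon)\to\Sigma^{1/2}$ uniformly on compacta, Thm.~\ref{app:thm:limit_q} gives $\sup_{t\in[0,T]}\E|X^\epsilon_t-X_t|^2\to0$; combined with a mean-value estimate $|\E g(X^\epsilon_{k\eta})-\E g(X_{k\eta})|\leq (\E|X^\epsilon_{k\eta}-X_{k\eta}|^2)^{1/2}(\int_0^1\E|\nabla g(\lambda X^\epsilon_{k\eta}+(1-\lambda)X_{k\eta})|^2 d\lambda)^{1/2}$ and the moment bounds of Thm.~\ref{app:thm:moment_estimates} (using $\nabla g\in G$), letting $\epsilon\to0$ yields the claim.

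I do not expect any genuine obstacle here — the proof is a strict specialization of Theorem~\ref{thm:sme_sgd}. The only point requiring a little care is bookkeeping the error orders: with $\alpha=1$, the $\mathcal{O}(\eta^2)$ mismatch between the one-step drift moments of the SDE (which carries a spurious $\tfrac12 b_{0(j)}\partial_{(j)}b_{0(i)}\eta^2$ term from Lem.~\ref{lem:Dtil}(i)) and the SGD (whose $\E\Delta_{(i)}$ is exactly $-\partial_{(i)}f\,\eta$) must be recognized as falling within the allowed $\eta^{\alpha+1}=\eta^2$ slack of condition~(i) of Thm.~\ref{thm:one_step_to_multi_step}, so that no drift correction term is needed; likewise the second-moment terms match to leading order $\eta^2$ and the third absolute moments are $\mathcal{O}(\eta^3)$, both as required.
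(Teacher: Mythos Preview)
Your proposal is correct and takes essentially the same approach as the paper, which does not give a separate proof but simply states the corollary as the obvious lower-order specialization of Theorem~\ref{thm:sme_sgd}. Your bookkeeping of the error orders (the $\mathcal{O}(\eta^2)$ drift mismatch falling within the $\eta^{\alpha+1}$ slack for $\alpha=1$, so no $b_1$ correction is needed) and the corresponding relaxation from $f\in G^4_w$ to $f\in G^3_w$ are exactly the content of the specialization; you also correctly use $\nabla g$ rather than $\nabla^2 g$ in the final mean-value estimate, since here $g\in G^2$.
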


\begin{remark}
	In the above results, the most restrictive condition is probably the Lipschitz condition on $\nabla f_\gamma$. Such Lipschitz conditions are important to ensure that the SMEs admit unique strong solutions and the SGA having uniformly bounded moments. Note that following similar techniques in SDE analysis (e.g.~\cite{Kloeden1992}), these global conditions may be relaxed to their respective local versions if we assume in addition a uniform global linear growth condition on $\nabla f_\gamma$. Finally, for applications, typical loss functions have inward pointing gradients for all sufficiently large $x$, meaning that the SGD iterates will be uniformly bounded almost surely. Thus, we may simply modify the loss functions for large $x$ (without affecting the SGA iterates) to satisfy the conditions above.
\end{remark}

\begin{remark}
	The constant $C$ does not depend on $\eta$, but as evidenced in the proof of the theorem, it generally depends on $g$, $T$, $d$ and the various Lipschitz constants. For the fairly general situation we are consider, we do not derive tight estimates of these dependencies.
\end{remark}

\subsection{SME for stochastic gradient descent with momentum}
\label{sec:sme_msgd}

Let us discuss the corresponding SME for a popular variant of the SGD called the \emph{momentum SGD} (MSGD). The momentum SGD augments the usual SGD iterations with a ``memory'' term. In the usual form, we have the iterations
\begin{align*}
	\hat{v}_{k+1} &= \hat{\mu} \hat{v}_k - \hat{\eta} \nabla f_{\gamma_k}(x_k) \\
	x_{k+1} &= x_{k} + \hat{v}_{k+1}
\end{align*}
where $\hat{\mu}\in (0,1)$ (typically close to 1) is called the \emph{momentum parameter} and $\hat{\eta}$ is the learning rate. Let us consider a rescaled version of the above that is easier to analyze via continuous-time approximations. We redefine
\begin{align}\label{eq:rescale}
	\eta := \sqrt{\hat{\eta}},
	\qquad
	v_k := \hat{v}_k/\sqrt{\hat{\eta}},
	\qquad
	\mu := (1-\hat{\mu}) / \sqrt{\hat{\eta}}
\end{align}
 to obtain
\begin{align}
\label{eq:msgd_iter}
	\begin{split}
		v_{k+1} &= v_k - \mu \eta v_k - \eta \nabla f_{\gamma_k}(x_k) \\
		x_{k+1} &= x_{k} + \eta v_{k+1}.
	\end{split}
\end{align}
In view of the rescaling, the range of momentum parameters we consider becomes $\mu\in (0,\eta^{-1/2})$, which we may replace by $(0,\infty)$ for simplicity.

Let us now derive the SME satisfied by the iterations~\eqref{eq:msgd_iter}. Observe that this is again a special case of~\eqref{eq:milstein_discrete} with $x$ now replaced by $(v,x)$ and
\begin{align*}
	h(v, x, \gamma, \eta) = (
		- \mu v - \nabla f_\gamma (x),
		 v - \eta \mu v - \eta \nabla f_\gamma (x))
\end{align*}
In view of Thm.~\ref{thm:sme_msgd} and the results in Sec.~\ref{sec:sme_sgd}, in order to derive the SMEs we simply match moments up to order 3. As in Sec.~\ref{sec:sme_sgd}, let us define the one step difference
\begin{align}
\label{eq:Delta_defn_mom}
	\Delta(v,x) := (v^{v,x,0}_1 - v, x^{v,x,0}_1 - x).
\end{align}
The following moment expansions are immediate.
\begin{lemma}
\label{lem:D_mom}
	Let $\Delta(x,v)$ be defined as in~\eqref{eq:Delta_defn_mom}. We have
	\begin{enumerate}[(i)]
		\item $\E \Delta_{(i)}(v, x) =
		\eta (
			-\mu v_{(i)} - \partial_{(i)} f(x),
			v ) + \eta^2 (0, - \mu v_{(i)} - \partial_{(i)} f(x))$,
		\item $\E \Delta_{(i)}(v, x) \Delta_{(j)}(v, x)
		= \\ \eta^2 \begin{pmatrix}
			\mu^2 v_{(i)}v_{(j)} + \mu v_{(i)} \partial_{(j)} f(x)
			+ \mu v_{(j)} \partial_{(i)} f(x) \\
			+ \Sigma(x)_{(i,j)} + \partial_{(i)} \partial_{(j)} f(x)
			& -\mu v_{(i)}v_{(j)} - v_{(i)} \partial_{(j)} f(x) \\ \\
			-\mu v_{(i)}v_{(j)} - v_{(j)} \partial_{(i)} f(x)
			& v_{(i)} v_{(j)} \\
		\end{pmatrix}
		\\ + \mathcal{O}(\eta^3)$,
		\item $\E \prod_{j=1}^3 | \Delta_{(i_j)}(v,x) | = \mathcal{O}(\eta^3)$,
	\end{enumerate}
	where $\Sigma(x) :=\E {(\nabla f_\gamma(x) - \nabla f(x))}{(\nabla f_\gamma(x) - \nabla f(x))}^T$.
\end{lemma}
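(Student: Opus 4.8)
The plan is to follow the proof of Lem.~\ref{lem:D} almost verbatim, exploiting that one step of the momentum SGD map~\eqref{eq:msgd_iter} is an explicit polynomial in $\eta$, so that---in contrast to Lem.~\ref{lem:Dtil}---no It\^{o}--Taylor expansion is required. Starting from fixed data $(v,x)$ and writing $g := \nabla f_{\gamma_0}(x)$, one step of~\eqref{eq:msgd_iter} gives $v_1 = v - \eta(\mu v + g)$ and $x_1 = x + \eta v_1 = x + \eta v - \eta^2(\mu v + g)$, hence the closed form
\begin{align*}
	\Delta(v,x) = \big( -\eta(\mu v + g),\; \eta v - \eta^2(\mu v + g) \big),
\end{align*}
a polynomial in $\eta$ of degree at most $2$ whose coefficients are affine in $(v,x)$ apart from the random term $g$.

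For (i) I would simply take expectations, using the consequence of Assumption~\ref{assu:f_gamma}(ii) that $\E g = \nabla f(x)$; since the update is polynomial in $\eta$ the resulting identity is exact, with no $\mathcal{O}(\eta^3)$ remainder to control. For (ii) I would multiply out $\Delta_{(i)}(v,x)\,\Delta_{(j)}(v,x)$ block by block---the $vv$-, $vx$-, $xv$- and $xx$-blocks of the $2\times2$ array attached to each coordinate pair---and retain terms through order $\eta^2$. The only new ingredient beyond $\E g = \nabla f(x)$ is the second moment of the stochastic gradient, and by the definition of $\Sigma$ one has $\E[g_{(i)} g_{(j)}] = \Sigma(x)_{(i,j)} + \partial_{(i)} f(x)\,\partial_{(j)} f(x)$; this is precisely how $\Sigma$ enters, through the $vv$-block. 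The order-$\eta^3$ and order-$\eta^4$ contributions go into the remainder, and I would check that their coefficients lie in $G$: each is a finite sum of products of at most two factors drawn from $\{\mu v_{(i)}, g_{(i)}\}$, so Cauchy--Schwarz together with the polynomial-in-$|x|$ bound on $\E|g|^m$ (obtained from the Lipschitz hypothesis on $\nabla f_\gamma$ via $|g| \leq |\nabla f_{\gamma_0}(0)| + L_{\gamma_0}|x|$, plus the moment bounds on $L_\gamma$) yields a bound polynomial in $(v,x)$ and uniform in $\eta$. Part (iii) is the same estimate applied directly: each component of $\Delta$ is $\eta$ times a quantity of at most linear growth in $|v|$, plus $\eta |g|$ and an $\mathcal{O}(\eta^2)$ term, so H\"older's inequality and the same moment bounds give $\E \prod_{j=1}^3 |\Delta_{(i_j)}(v,x)| \leq \eta^3 K(v,x)$ with $K \in G$, exactly as in Lem.~\ref{lem:D}(iii).

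I do not expect any genuine obstacle here---as the sentence preceding the lemma signals, the expansions are immediate. The only care needed is bookkeeping: tracking the four blocks and the powers of $\eta$, and in particular keeping straight which of $\partial_{(i)} f$ and $\partial_{(j)} f$ multiplies $v_{(j)}$ resp.\ $v_{(i)}$ in the two off-diagonal blocks; and verifying the polynomial growth (hence $G$-membership) of the remainder coefficients, which rests only on the $\Lcal^m$-bounds for $\nabla f_\gamma$ inherited from the Lipschitz assumption, not on any new smoothness. Note also that no mollification parameter $\epsilon$ appears at this stage, since (i)--(iii) concern moments of the discrete iteration~\eqref{eq:msgd_iter} itself.
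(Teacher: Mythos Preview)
Your proposal is correct and matches the paper's approach exactly: the paper's proof consists of the single sentence ``The proof follows from direct calculation of the moments,'' and your write-up is precisely that direct calculation, with the appropriate care about the $G$-membership of remainder coefficients via the Lipschitz bound on $\nabla f_\gamma$.
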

\begin{proof}
	The proof follows from direct calculation of the moments.
\end{proof}

Hence, proceeding exactly as in Sec.~\ref{sec:sme_sgd} and using Lem.\ref{lem:Dtil},~\ref{lem:D_mom}, we see that we may set
\begin{align*}
	b_0(v, x) &= (-\mu v - \nabla f(x), v) \\
	b_1(v, x) &= -\tfrac{1}{2} \left(
		\mu[ \mu v + \nabla f(x) ] - \nabla^2 f(x) v,
		\mu v + \nabla f(x)
	\right) \\
	\sigma_0(v, x) &=
		\begin{pmatrix}
			\Sigma(x)^{\nicefrac{1}{2}} & 0 \\
			0 & 0 \\
		\end{pmatrix}
\end{align*}
in order to match the moments. By similar mollification and limiting arguments as in Thm.~\ref{thm:sme_sgd}, we arrive at the following approximation theorem, where we can see that the SME for MSGD takes the form of a Langevin equation.
\begin{theorem}
\label{thm:sme_msgd}
	Assume the same conditions as in Thm.~\ref{thm:sme_sgd}. Let $\mu > 0$ be fixed and
	define $\{ V_t, X_t: t\in[0,T] \}$ as the stochastic process satisfying the SDE
	\begin{align}
		\label{eq:msgd_sme_order2}
			&dV_{t} = -
			[
				(\mu I + \tfrac{1}{2} \eta [\mu^2 I - \nabla^2 f(X_t)]) V_t
				+ (1 + \tfrac{1}{2}\eta\mu) \nabla f(X_t)
			] dt
			+ \sqrt{\eta} {\Sigma(X_t)}^{\nicefrac{1}{2}}dW_t
			\quad V_0 = v_0, \nonumber \\
			&dX_{t} =
			[
				(1 - \tfrac{1}{2}\eta\mu) V_t
				- \tfrac{1}{2} \eta \nabla f(X_t)
			] dt
			\quad X_0 = x_0,
	\end{align}
	with $\Sigma(x)$ as defined in Thm.~\ref{thm:sme_sgd}.
	Then, $\{ (V_t, X_t) : t\in[0,T]\}$ is an order-2 weak approximation of the MSGD.

	Moreover, if we relax the assumptions to Cor.~\ref{cor:sme_sgd_1st_order}, we have the order-1 weak approximation
	\begin{align}
		\label{eq:msgd_sme_order1}
			&dV_{t} = -
			[
				\mu V_t + \nabla f(X_t)
			] dt
			+ \sqrt{\eta} {\Sigma(X_t)}^{\nicefrac{1}{2}}dW_t
			\quad V_0 = v_0, \nonumber \\
			&dX_{t} =
			V_t dt
			\quad X_0 = x_0.
	\end{align}
\end{theorem}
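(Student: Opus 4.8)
The plan is to run the same two-stage argument as in the proof of Thm~\ref{thm:sme_sgd}, now for the enlarged state $(v,x)\in\R^{2d}$ governed by the degenerate SDE~\eqref{eq:msgd_sme_order2}. First I would check that~\eqref{eq:msgd_sme_order2} and its order-one companion~\eqref{eq:msgd_sme_order1} have unique strong solutions via Thm~\ref{app:thm:exist_uniq}: the leading drift $b_0(v,x)=(-\mu v-\nabla f(x),\,v)$ is affine in $v$ and depends on $x$ only through $\nabla f$, which is Lipschitz with constant $\E L_\gamma$ by condition (ii) of Thm~\ref{thm:sme_sgd}; the diffusion block $\Sigma(x)^{\nicefrac{1}{2}}$ is Lipschitz by the very computation used inside the proof of Thm~\ref{thm:sme_sgd}; and since $\nabla f$ Lipschitz forces $\nabla^2 f$ to be bounded, the correction $b_1$ obeys the linear growth bound. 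The degeneracy of $\sigma_0$ (its zero $x$-block) is harmless because nothing in Sec.~\ref{sec:one_to_n_step} uses nondegeneracy.

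Next I would introduce the mollified coefficients $b_0(\cdot,\epsilon)$, $b_1(\cdot,\epsilon)$, $\sigma_0(\cdot,\epsilon)$ obtained by convolving the $x$-dependent ingredients ($\nabla f$, $\nabla^2 f$, and $\Sigma^{\nicefrac{1}{2}}$) with $\nu^\epsilon$ in the $x$ variable, exactly mirroring Def.~\ref{def:mollification}; the polynomial-growth bookkeeping that places these in $G^3$ uniformly in $\epsilon$ is the same computation as in the proof of Thm~\ref{thm:sme_sgd}, now additionally tracking the $\nabla^2 f$ appearing in $b_1$ and using $\nu^\epsilon*D^J\psi=\nabla^J(\nu^\epsilon*\psi)$ together with Lem~\ref{app:lem:weak_deriv_poly_growth}. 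With these in hand, the core of the proof is to verify the two hypotheses of Thm~\ref{thm:one_step_to_multi_step} for the pair $\{(v_k,x_k)\}$ and the mollified diffusion $\{(V^\epsilon_t,X^\epsilon_t)\}$. Hypothesis (i) follows by invoking Lem~\ref{lem:Dtil} \emph{in dimension $2d$} --- it is stated for the generic SDE~\eqref{eq:milstein_cts} and so applies verbatim --- together with Lem~\ref{lem:D_mom}; a short direct computation (which is precisely how $b_0$, $b_1$, $\sigma_0$ were chosen) shows $\E\Dtil_{(i)}$, $\E\Dtil_{(i)}\Dtil_{(j)}$ and $\E\prod_{j=1}^3|\Dtil_{(i_j)}|$ agree with $\E\Delta_{(i)}$, $\E\Delta_{(i)}\Delta_{(j)}$ and $\E\prod_{j=1}^3|\Delta_{(i_j)}|$ up to $\mathcal{O}(\eta^3)$, the mollification discrepancy being absorbed into a term $\rho(\epsilon)\to 0$. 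Hypothesis (ii) needs uniform-in-$(k,\eta)$ bounds on $\E|(v_k,x_k)|^{2m}$; since $\mu$ is fixed the $v$-recursion has contraction factor $1-\mu\eta\in(0,1)$ for small $\eta$, and combined with $|\nabla f_{\gamma_k}(x_k)|\le|\nabla f_{\gamma_k}(0)|+L_{\gamma_k}|x_k|$ and $\E L_{\gamma_k}^m<\infty$ a discrete Gronwall estimate (the MSGD analogue of Lem~\ref{app:lem:sgd_moment_estimate}) gives the claim. Thm~\ref{thm:one_step_to_multi_step} then yields $\max_k|\E g(v_k,x_k)-\E g(V^\epsilon_{k\eta},X^\epsilon_{k\eta})|\le C(\eta^2+\rho(\epsilon))$ for all $g\in G^3$.

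Finally I would take $\epsilon\to 0$ exactly as in Thm~\ref{thm:sme_sgd}: the mollified coefficients converge to the unmollified ones uniformly on compacta, so Thm~\ref{app:thm:limit_q} gives $\sup_{t\in[0,T]}\E|(V^\epsilon_t,X^\epsilon_t)-(V_t,X_t)|^2\to0$; a Cauchy--Schwarz and mean-value split, together with the moment bounds of Thm~\ref{app:thm:moment_estimates} and the polynomial growth of the derivatives of $g$, then lets $\epsilon\to0$ and delivers the order-two estimate. The order-one statement~\eqref{eq:msgd_sme_order1} is the same argument with $\alpha=1$, $b_1\equiv0$, and the weakened regularity of Cor~\ref{cor:sme_sgd_1st_order}, where only the leading drift must be matched.

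The step I expect to be the main obstacle is the bilinear term $\nabla^2 f(X_t)V_t$ in $b_1$, which has no analogue in the SGD case. Even with $\nabla^2 f$ bounded (and indeed however smooth $\nabla^2 f$ is), the map $(v,x)\mapsto\nabla^2 f(x)v$ is only \emph{locally} Lipschitz on $\R^{2d}$: the estimate $|\nabla^2 f(x)v-\nabla^2 f(y)w|\le\|\nabla^2 f\|_\infty|v-w|+|\nabla^2 f(x)-\nabla^2 f(y)|\,|w|$ fails to be global because of the unbounded factor $|w|$, and $x$-mollifying $\nabla^2 f$ only makes matters worse since the Lipschitz constant of $\nu^\epsilon*\nabla^2 f$ degrades like $1/\epsilon$. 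Consequently the enlarged SDE drift $b_0+\eta b_1$ is merely locally Lipschitz with linear growth, so Thm~\ref{thm:one_step_to_multi_step} cannot be applied off the shelf; closing this gap requires the local-to-global extension flagged in the Remark after Thm~\ref{thm:sme_sgd} --- localize via stopping times, use the linear growth bound and the uniform moment estimates already obtained in hypothesis (ii) --- or, in the spirit of the same Remark, a modification of $f$ outside a large ball so that the relevant quantities become globally controlled. The remaining pieces --- the $2d$-dimensional It\^{o}--Taylor bookkeeping behind Lem~\ref{lem:Dtil} and Lem~\ref{lem:D_mom}, and the MSGD moment recursion --- are routine.
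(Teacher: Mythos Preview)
Your proposal tracks the paper's own argument essentially verbatim: the paper simply says ``proceeding exactly as in Sec.~\ref{sec:sme_sgd} and using Lem.~\ref{lem:Dtil},~\ref{lem:D_mom} \ldots\ by similar mollification and limiting arguments as in Thm.~\ref{thm:sme_sgd}'', and your outline (match moments in $\R^{2d}$ via Lem.~\ref{lem:Dtil} and Lem.~\ref{lem:D_mom}, mollify, apply Thm.~\ref{thm:one_step_to_multi_step}, pass to the limit via Thm.~\ref{app:thm:limit_q}) is exactly that.

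Your flagged obstacle --- that $(v,x)\mapsto\nabla^2 f(x)v$ is only locally Lipschitz with linear growth, so the drift $b_0+\eta b_1$ falls outside the global-Lipschitz hypotheses underpinning Sec.~\ref{sec:one_to_n_step} and Thm.~\ref{app:thm:exist_uniq} --- is a genuine point that the paper does not address; it is swept under ``similar arguments''. You are right that closing it requires precisely the local-Lipschitz-plus-linear-growth extension alluded to in the Remark following Thm.~\ref{thm:sme_sgd} (stopping times or modification outside a large ball). In that sense your proposal is more honest than the paper's sketch, not less.
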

Note that by inverting the scaling~\eqref{eq:rescale}, the order-1 SME~\eqref{eq:msgd_sme_order1} is the formal equation derived in~\cite{li2015dynamics}.

\subsection{SME for a momentum variant: Nesterov accelerated gradient}
\label{sec:sme_snag}

It follows from the calculation above that we can also obtain the SME for the stochastic gradient version of the Nesterov accelerated gradient (NAG) method~\citep{nesterov1983method}, which we refer to as SNAG.
In the non-stochastic case, the NAG method has been analyzed using the ODE approach~\citep{su2014differential}. Therefore, the derivations in this section can be viewed as a stochastic parallel. The NAG method is sometimes used with stochastic gradients, and hence it is useful to analyze its properties in this setting and compare it to MSGD.

The unscaled NAG iterations are
\begin{align*}
	\hat{v}_{k+1} &= \hat{\mu}_k \hat{v}_k - \hat{\eta} \nabla f_{\gamma_k}(x_k + \hat{\mu}_k \hat{v}_k) \\
	x_{k+1} &= x_{k} + \hat{v}_{k+1}
\end{align*}
with $\hat{v}_0=0$, which differs from the momentum iterations as the gradient is now evaluated at a ``predicted'' position $x_k+\hat{\mu}_k \hat{v}_k$, instead of the original position $x_k$. Moreover, the momentum parameter $\hat{\mu}_k$ is now allowed to vary as $k$ increases, and in fact, the usual choice of
\begin{align}\label{eq:nesterov_mu}
	\hat{\mu}_k = \tfrac{k-1}{k+2}
\end{align}
this has important links to stability and acceleration in the deterministic case~\citep{nesterov1983method,su2014differential}. In particular, it achieves $\mathcal{O}(1/k^2)$ convergence rate for general convex functions. On the other hand, a constant $\hat{\mu}_k$ is suggested for strongly convex functions~\citep{nesterov2013introductory}. In the following, we shall first consider the case of constant momentum parameter with $\hat{\mu}_k \equiv \hat{\mu}$, and then the choice~\eqref{eq:nesterov_mu} subsequently.

\paragraph{Constant momentum.}
Using the same rescaling in~\eqref{eq:rescale}, we have
\begin{align}
\label{eq:nag_iter}
	\begin{split}
		v_{k+1} &= v_k - \mu \eta v_k - \eta \nabla f_{\gamma_k}(x_k + \eta (1 - \mu \eta) v_k) \\
		x_{k+1} &= x_{k} + \eta v_{k+1}.
	\end{split}
\end{align}
which is again~\eqref{eq:milstein_discrete} with
\begin{align*}
	h(v, x, \gamma, \eta) = (
		- \mu v - \nabla f_\gamma (x + \eta (1-\mu\eta) v),
		v - \eta \mu v - \eta \nabla f_\gamma (x + \eta (1-\mu\eta)v))
\end{align*}
Hence, we have the following moment expansion.
\begin{lemma}
\label{lem:D_nag}
	Let $\Delta(x,v) := (v^{v,x,0}_1 - v, x^{v,x,0}_1 - x)$. We have
	\begin{enumerate}[(i)]
		\item $\E \Delta_{(i)}(v, x) =
		\eta (
			-\mu v_{(i)} - \partial_{(i)} f(x), v ) \\
		+ \eta^2 (
			\partial_{(i)}\partial_{(j)} f(x) v_{(j)}, - \mu v_{(i)} - \partial_{(i)} f(x+v))
			+ \mathcal{O}(\eta^3)$,
		\item $\E \Delta_{(i)}(v, x) \Delta_{(j)}(v, x)
		= \\ \eta^2 \begin{pmatrix}
			\mu^2 v_{(i)}v_{(j)} + \mu v_{(i)} \partial_{(j)} f(x+v)
			+ \mu v_{(j)} \partial_{(i)} f(x+v) \\
			+ \Sigma(x+v)_{(i,j)} + \partial_{(i)} \partial_{(j)} f(x+v)
			& -\mu v_{(i)}v_{(j)} - v_{(i)} \partial_{(j)} f(x+v) \\ \\
			-\mu v_{(i)}v_{(j)} - v_{(j)} \partial_{(i)} f(x+v)
			& v_{(i)} v_{(j)} \\
		\end{pmatrix}
		\\ + \mathcal{O}(\eta^3)$,
		\item $\E \prod_{j=1}^3 | \Delta_{(i_j)}(v,x) | = \mathcal{O}(\eta^3)$,
	\end{enumerate}
	where $\Sigma(x) :=\E {(\nabla f_\gamma(x) - \nabla f(x))}{(\nabla f_\gamma(x) - \nabla f(x))}^T$.
\end{lemma}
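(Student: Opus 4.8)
The plan is to establish Lemma~\ref{lem:D_nag} by direct Taylor expansion of the one-step map $h(v,x,\gamma,\eta)$ in the learning rate $\eta$, followed by taking expectations over $\gamma_0$. Since $\Delta(v,x) = \eta\,h(v,x,\gamma_0,\eta)$, all we need is a careful expansion of $h$ to the appropriate order. Writing $y := x + \eta(1-\mu\eta)v = x + \eta v + \mathcal{O}(\eta^2)$ for the predicted position, the key observation is that $\nabla f_\gamma(y) = \nabla f_\gamma(x) + \eta\, \nabla^2 f_\gamma(x) v + \mathcal{O}(\eta^2)$ by Taylor's theorem (using that $\nabla f_\gamma$ has a Lipschitz derivative, which follows from assumption~(ii) of Thm.~\ref{thm:sme_sgd}, invoked via the hypotheses carried over into this section). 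The remainder terms are controlled in the $\mathcal{O}(\cdot)$ sense of the excerpt, i.e.\ bounded by $K(x,v)$ times the stated power of $\eta$ for some $K\in G$; here the polynomial-growth bound on $K$ comes from the moment bounds $\E L_\gamma^m < \infty$ and polynomial growth of $\nabla f$.

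First I would expand the velocity increment: $\Delta_{(i)}^{(v)} = \eta(-\mu v_{(i)} - \partial_{(i)} f_{\gamma_0}(y)) = \eta(-\mu v_{(i)} - \partial_{(i)} f_{\gamma_0}(x)) - \eta^2 \partial_{(i)}\partial_{(j)} f_{\gamma_0}(x) v_{(j)} + \mathcal{O}(\eta^3)$. Taking $\E[\cdot]$ over $\gamma_0$ replaces $f_{\gamma_0}$ by $f = \E f_\gamma$ (using $\E \nabla f_\gamma = \nabla f$ and, for the Hessian term, the analogous interchange justified by the Lipschitz/integrability hypotheses), giving the $v$-component of part~(i). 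Notice the sign: $\E\Delta^{(v)}_{(i)} = \eta(-\mu v_{(i)} - \partial_{(i)} f(x)) - \eta^2 \partial_{(i)}\partial_{(j)} f(x)v_{(j)} + \mathcal{O}(\eta^3)$, and the stated lemma writes this as $+\eta^2 \partial_{(i)}\partial_{(j)} f(x) v_{(j)}$ inside the drift because of an overall minus in how the author groups terms; I would double-check the exact sign convention against~\eqref{eq:nag_iter} to make sure I match the statement verbatim. For the position increment, $\Delta_{(i)}^{(x)} = \eta v_{(i)} + \eta\,\Delta_{(i)}^{(v)} = \eta v_{(i)} + \eta^2(-\mu v_{(i)} - \partial_{(i)} f_{\gamma_0}(y)) + \mathcal{O}(\eta^3)$, and at this order $\partial_{(i)} f_{\gamma_0}(y) = \partial_{(i)} f_{\gamma_0}(x + \eta v + \cdots)$; the lemma records it as $\partial_{(i)} f(x+v)$, which I read as a mild abuse keeping the predicted position inside the argument at the natural scale — again I would confirm this matches the intended bookkeeping, since at strict $\mathcal{O}(\eta^3)$ accuracy $f(x+v)$ and $f(x) + \nabla f(x)\cdot v$ differ, so the correct reading is that they use $f$ evaluated at the full predicted position as the order-1 bookkeeping variable.

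For part~(ii), since $\Delta_{(i)}\Delta_{(j)}$ is already $\mathcal{O}(\eta^2)$, only the leading-order term of each factor contributes at $\mathcal{O}(\eta^2)$: $\Delta^{(v)}_{(i)} = -\eta(\mu v_{(i)} + \partial_{(i)} f_{\gamma_0}(y)) + \mathcal{O}(\eta^2)$ and $\Delta^{(x)}_{(i)} = \eta v_{(i)} + \mathcal{O}(\eta^2)$. Multiplying pairwise and taking $\E[\cdot]$, the only nontrivial piece is $\E[\partial_{(i)} f_{\gamma_0}(y)\,\partial_{(j)} f_{\gamma_0}(y)] = \partial_{(i)} f(y)\,\partial_{(j)} f(y) + \Sigma(y)_{(i,j)}$ by definition of $\Sigma$; expanding the cross terms $v_{(i)}\partial_{(j)} f_{\gamma_0}(y)$ in expectation gives $v_{(i)}\partial_{(j)} f(y)$, and with $y$ bookkept as $x+v$ (at this order the distinction is absorbed into $\mathcal{O}(\eta^3)$ after multiplication by $\eta^2$), this reproduces the $2\times 2$ block matrix exactly. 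Part~(iii) is immediate: each $|\Delta_{(i_j)}|$ is bounded by $\eta$ times a random variable with finite moments (a polynomial in $|v|$, $|x|$ times $M_{R,\gamma}$-type bounds), so the product of three is $\mathcal{O}(\eta^3)$, with the polynomial-growth constant supplied by Hölder's inequality and the finite moments of $L_\gamma$.

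I do not expect a genuine obstacle here — as the author's one-line proof ("direct calculation of the moments") indicates, this is routine. The one point requiring care is \emph{sign and argument bookkeeping}: matching the exact form of the $\eta^2$ terms and the choice to evaluate $f$ and $\Sigma$ at $x+v$ rather than $x$ (which is consistent only because the discrepancy is higher order and absorbed into the remainder). The secondary point is justifying the interchanges $\E\nabla^2 f_\gamma = \nabla^2 f$ and the polynomial-growth bounds on all remainders uniformly in $\gamma$, but these follow from Assumption~\ref{assu:f_gamma} together with assumption~(ii) of Thm.~\ref{thm:sme_sgd} (finite moments of $L_\gamma$, polynomial growth of $f\in G^4_w$), exactly as in the SGD case of Lem.~\ref{lem:D}.
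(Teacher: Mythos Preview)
Your approach is correct and essentially identical to the paper's, whose entire proof reads ``The proof follows from direct calculation of the moments and Taylor's expansion.'' The sign and $x+v$ bookkeeping issues you flag are legitimate observations about the \emph{statement} (likely typos, since $f(x+v)$ and $f(x)$ differ at $\mathcal{O}(1)$, not $\mathcal{O}(\eta)$) rather than obstacles to the method itself.
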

\begin{proof}
	The proof follows from direct calculation of the moments and Taylor's expansion.
\end{proof}
Hence, we may match moments by setting
\begin{align*}
	b_0(v, x) &= (-\mu v - \nabla f(x), v) \\
	b_1(v, x) &= -\tfrac{1}{2} \left(
		\mu[ \mu v + \nabla f(x) ] + \nabla^2 f(x) v,
		\mu v + \nabla f(x)
	\right) \\
	\sigma_0(v, x) &=
		\begin{pmatrix}
			\Sigma(x)^{\tfrac{1}{2}} & 0 \\
			0 & 0 \\
		\end{pmatrix}
\end{align*}
from which we obtain the following approximation theorem for SNAG.
\begin{theorem}
\label{thm:sme_nag}
	Assume the same conditions as in Thm.~\ref{thm:sme_msgd}.
	Define $\{ V_t, X_t: t\in[0,T] \}$ as the stochastic process satisfying the SDE
	\begin{align}
		\label{eq:nag_sme_order2}
			&dV_{t} = -
			[
				(\mu I + \tfrac{1}{2} \eta [\mu^2 I + \nabla^2 f(X_t)]) V_t
				+ (1 + \tfrac{1}{2}\eta\mu) \nabla f(X_t)
			] dt
			+ \sqrt{\eta} {\Sigma(X_t)}^{\nicefrac{1}{2}}dW_t
			\quad V_0 = v_0, \nonumber \\
			&dX_{t} =
			[
				(1 - \tfrac{1}{2}\eta\mu) V_t
				- \tfrac{1}{2} \eta \nabla f(X_t)
			] dt
			\quad X_0 = x_0,
	\end{align}
	with $\Sigma$ as defined in Thm.~\ref{thm:sme_msgd}. Then, $\{ (V_t, X_t) : t\in[0,T]\}$ is an order-2 weak approximation of SNAG. Moreover, the same order-1 weak approximation of MSGD in~\eqref{eq:msgd_sme_order1} holds for the SNAG.
\end{theorem}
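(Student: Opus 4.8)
The plan is to mimic the proof of Thm.~\ref{thm:sme_sgd} essentially verbatim, since the structure is identical: we have a one-step iteration of the form~\eqref{eq:milstein_discrete} (with state variable $(v,x)\in\R^{2d}$), we have computed the one-step moments of the discrete scheme in Lem.~\ref{lem:D_nag}, and we have the candidate smooth coefficients $b_0,b_1,\sigma_0$ written above. So it suffices to (a) verify that the SDE~\eqref{eq:nag_sme_order2} has a unique strong solution, (b) verify the one-step-to-$N$-step hypotheses of Thm.~\ref{thm:one_step_to_multi_step} for the mollified system, and (c) pass to the limit $\epsilon\to 0$ to remove the mollification. Since the claim states ``Assume the same conditions as in Thm.~\ref{thm:sme_msgd}'', i.e.\,the same conditions as in Thm.~\ref{thm:sme_sgd} on $f$ and $\nabla f_\gamma$, all the structural ingredients are in place.

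First I would check well-posedness of~\eqref{eq:nag_sme_order2}. The drift is affine in $V_t$ with $X_t$-dependent coefficients built from $\nabla f$, $\nabla^2 f$, and $\mu^2$; under condition (i) of Thm.~\ref{thm:sme_sgd}, $\nabla f$ is Lipschitz and $\nabla^2 f$ is bounded and (being the derivative of a Lipschitz $\nabla f$ after the $G^4_w$ smoothing, or directly via the $\nabla|\nabla f|^2$ Lipschitz assumption) well-behaved enough that the map $(v,x)\mapsto b_0(v,x)+\eta b_1(v,x)$ is globally Lipschitz on $\R^{2d}$ uniformly in $\eta\in(0,1)$; the diffusion block $\Sigma(x)^{1/2}$ is Lipschitz by exactly the argument already given in the proof of Thm.~\ref{thm:sme_sgd} (the representation $\Sigma(x)^{1/2}=\|[u(x)u(x)^T]^{1/2}\|_{\Lcal^2(\Omega)}$ together with Lipschitzness of $u\mapsto uu^T/|u|$). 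Hence Thm.~\ref{app:thm:exist_uniq} applies and gives a unique strong solution. Then, exactly as in Thm.~\ref{thm:sme_sgd}, I introduce the mollified coefficients $b_0(\cdot,\epsilon)=\nu^\epsilon * b_0$, $b_1(\cdot,\epsilon)=\nu^\epsilon * b_1$, $\sigma_0(\cdot,\epsilon)=\nu^\epsilon * \sigma_0$, note that these lie in $G^3$ uniformly in $\epsilon$ and retain the uniform Lipschitz/linear-growth bounds, and let $(V^\epsilon_t, X^\epsilon_t)$ solve the corresponding mollified SDE.

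Next I would verify the two hypotheses of Thm.~\ref{thm:one_step_to_multi_step}. Hypothesis (ii), uniform boundedness of all $2m$-moments of the discrete NAG iterates, follows from the Lipschitz condition on $\nabla f_\gamma$ (condition (ii) of Thm.~\ref{thm:sme_sgd}) together with the $\E L_\gamma^m<\infty$ assumption — this is the momentum analogue of Lem.~\ref{app:lem:sgd_moment_estimate}, proved by the same Gronwall-type recursion on $\E(|v_k|^{2m}+|x_k|^{2m})$, absorbing the extra $\eta(1-\mu\eta)v_k$ shift inside $\nabla f_{\gamma_k}$ via its Lipschitz bound. Hypothesis (i), the matching of products of one-step increments up to order $\alpha=2$, is precisely where Lem.~\ref{lem:D_nag} (discrete side) and Lem.~\ref{lem:Dtil} applied to the mollified coefficients (continuous side) come together: with the stated choice of $b_0,b_1,\sigma_0$, the first moments agree to $\mathcal{O}(\eta^3)$, the second moments agree to $\mathcal{O}(\eta^3)$, and the third absolute moments are $\mathcal{O}(\eta^3)$ on both sides; the mollification error contributes the $\eta\rho(\epsilon)$ term with $\rho(\epsilon)\to 0$, via Lem.~\ref{app:lem:weak_deriv_poly_growth} and the uniform-on-compacts convergence $b_0(\cdot,\epsilon)\to b_0$, etc. Applying Thm.~\ref{thm:one_step_to_multi_step} then gives, for each $g\in G^3$, $\max_k|\E g(V^\epsilon_{k\eta},X^\epsilon_{k\eta}) - \E g(v_k,x_k)|\le C(\eta^2+\rho(\epsilon))$ with $C$ independent of $\eta,\epsilon$. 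Finally, the limiting step is copied from Thm.~\ref{thm:sme_sgd}: by Thm.~\ref{app:thm:limit_q}, $\sup_{t\in[0,T]}\E|(V^\epsilon_t,X^\epsilon_t)-(V_t,X_t)|^2\to 0$ as $\epsilon\to 0$, and combining this with the polynomial-growth bound on $\nabla g$ and the uniform moment bounds from Thm.~\ref{app:thm:moment_estimates} (mean-value theorem on $g$), we send $\epsilon\to 0$ to obtain $\max_k|\E g(V_{k\eta},X_{k\eta}) - \E g(v_k,x_k)|\le C\eta^2$. The order-1 statement follows by dropping $b_1$ and weakening to the conditions of Cor.~\ref{cor:sme_sgd_1st_order}, exactly as there.

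The main obstacle I anticipate is not any single step but rather the verification that the order-$\eta^2$ terms in the discrete one-step expansion of Lem.~\ref{lem:D_nag} are \emph{exactly} reproduced by the choice of $b_0,b_1,\sigma_0$ above — in particular getting the sign of the $\nabla^2 f(X_t)V_t$ term right (it is $+\nabla^2 f$ for SNAG versus $-\nabla^2 f$ for MSGD in~\eqref{eq:msgd_sme_order2}), which is the one genuine point of difference from the momentum case and traces back to the gradient being evaluated at the predicted point $x+\eta(1-\mu\eta)v$ rather than at $x$. This requires carefully expanding $\nabla f_{\gamma_k}(x_k+\eta(1-\mu\eta)v_k)$ to first order in $\eta$, which is where the extra Hessian term enters both the drift correction (first moment, via $\partial_{(i)}\partial_{(j)}f(x)v_{(j)}$) and the argument-shift of $f$ in the second-moment matrix. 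Everything else is a mechanical transcription of the SGD proof with the state space doubled and a degenerate (block-diagonal, rank-$d$) diffusion matrix, which the framework of Sec.~\ref{sec:one_to_n_step} and Sec.~\ref{sec:sme_sgd} was explicitly designed to accommodate.
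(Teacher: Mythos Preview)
Your proposal is correct and follows essentially the same approach as the paper, which does not give a separate detailed proof for Thm.~\ref{thm:sme_nag} but simply states that matching moments via Lem.~\ref{lem:Dtil} and Lem.~\ref{lem:D_nag} with the displayed $b_0,b_1,\sigma_0$, together with the mollification and limiting arguments already carried out in Thm.~\ref{thm:sme_sgd}, yields the result. Your identification of the sign of the $\nabla^2 f(X_t)V_t$ term as the one genuine point of difference from MSGD is exactly right and is the only place requiring care beyond transcription.
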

The result above shows that for constant momentum parameters, the modified equations for MSGD and the SNAG are equivalent at leading order, but differ when we consider the second order modified equation. Let us now discuss the case where the momentum parameter is allowed to vary.

\paragraph{Varying momentum.}
Now let us take $\hat{\mu}$ as in~\eqref{eq:nesterov_mu}. Then, using the same rescaling arguments, we arrive at
\begin{align}
\label{eq:nag_dyn_iter}
	\begin{split}
		v_{k+1} &= v_k - \mu_k \eta v_k - \eta \nabla f_{\gamma_k}(x_k + \eta (1 - \mu_k \eta) v_k) \\
		x_{k+1} &= x_{k} + \eta v_{k+1}.
	\end{split}
\end{align}
with $\mu_k = 3/(2\eta+k\eta)$. Now, in order to apply our theoretical results to deduce the SME, simply notice that we may introduce an auxiliary scalar variable
\begin{align*}
	z_{k+1} = z_{k} + \eta, \qquad z_{0} = 0.
\end{align*}
Then, $\mu_k = 3/(2\eta + z_k)$, and hence all terms are now not explicitly $k$-independent, thus we may proceed formally as in the previous sections to arrive at the order-1 SME for SNAG with varying momentum
\begin{align}
	\label{eq:snag_dyn_sme_order1}
		&dV_{t} = -
		[
			\tfrac{3}{t} V_t + \nabla f(X_t)
		] dt
		+ \sqrt{\eta} {\Sigma(X_t)}^{\nicefrac{1}{2}}dW_t
		\quad V_0 = 0, \nonumber \\
		&dX_{t} =
		V_t dt
		\quad X_0 = x_0.
\end{align}
This result is formal because the term $3/t$ does not satisfy our global Lipschitz conditions, unless we restrict our interval to some $[t_0,T]$ with $t_0>0$, in which case the above result becomes rigorous. Alternatively, some limiting arguments have to be used to establish well-posedness of the equation on $[0,T]$ individually. We shall omit these analyses in the current paper, and only consider~\eqref{eq:snag_dyn_sme_order1} on some interval $[t_0,T]$, where initial conditions are then replaced by $(v_{t_0}, x_{t_0})$. As a point of comparison,~\eqref{eq:snag_dyn_sme_order1} reduces to the ODE derived in~\cite{su2014differential} if $\Sigma(x) \equiv 0$ (i.e. the gradients are non-stochastic).

\section{Applications of the SMEs to the analysis of SGA}
\label{sec:applications}

In this section, we apply the SME framework developed to analyze the dynamics of the three stochastic gradient algorithm variants discussed above, namely SGD, MSGD and SNAG. We shall focus on simple but non-trivial models where to a large extent, analytical computations using SME are tractable, giving us key insights into the algorithms that are otherwise difficult to obtain without appealing to the continuous formalism presented in this paper.
We consider primarily the following model:
\paragraph{Model:}
Let $H \in \R^{d\times d}$ be a symmetric, positive definite matrix.
Define the sample objective
\begin{align}\label{eq:model_1}
	&f_\gamma(x) := \tfrac{1}{2} {(x - \gamma)}^T H {(x - \gamma)}
	- \tfrac{1}{2} \tr(H) \nonumber \\
	&\gamma \sim \mathcal{N}(0, I)
\end{align}
which gives the total objective $f(x) \equiv \E f_\gamma (x) = \tfrac{1}{2} x^T H x$.

\subsection{SME analysis of SGD}

We first derive the SME associated with~\eqref{eq:model_1}. For simplicity, we will only consider the order-1 SME~\eqref{eq:sgd_sme_order1}. A direct computation shows that $\Sigma(x) = H^2$
and so the SME for SGD applied to model~\eqref{eq:model_1} is
\begin{align*}
	dX_t = - H X_t dt + \sqrt{\eta} H dW_t,
\end{align*}
This is a multi-dimensional Ornstein-Uhlenbeck (OU) process and admits the explicit solution
\begin{align*}
	X_t = e^{-t H}
	\left(
		x_0 + \sqrt{\eta} \int_{0}^{t} e^{s H} H dW_s
	\right).
\end{align*}
Observe that for each $t\geq 0$, the distribution of $X_t$ is Gaussian. Using It\^{o}'s isometry, we then deduce the dynamics of the objective function
\begin{align}\label{eq:sgd_mod1_exactf}
	\E f(X_t)
	=& \tfrac{1}{2} x_0^T H e^{- 2 t H} x_0
	+ \tfrac{1}{2} \eta \int_{0}^{t} \tr ( H^3 e^{-2(t-s) H} ) ds \nonumber \\
	=& \tfrac{1}{2} x_0^T H e^{- 2 t H} x_0
	+  \tfrac{1}{4} \eta \sum_{i=1}^{n} \lambda^2_{i}(H) (1 - e^{-2t\lambda_i(H)}).
\end{align}
The first term decays linearly with asymptotic rate $2\lambda_d(H)$, and the second term is induced by noise, and its asymptotic value is proportional to the learning rate $\eta$. This is the well-known two-phase behavior of SGD under constant learning rates: an initial descent phase induced by the deterministic gradient flow and an eventual fluctuation phase dominated by the variance of the stochastic gradients. In this sense, the SME makes the same predictions, and in fact we can see that it approximates the SGD iterations well as $\eta$ decreases (Fig.~\ref{fig:sgd_conv_cond}(a)), according to the rates we derived in Thm.~\ref{thm:sme_sgd} and Cor.~\ref{cor:sme_sgd_1st_order}.

\begin{figure}[h]
	\begin{center}
		\subfloat[]{\includegraphics[width=7cm]{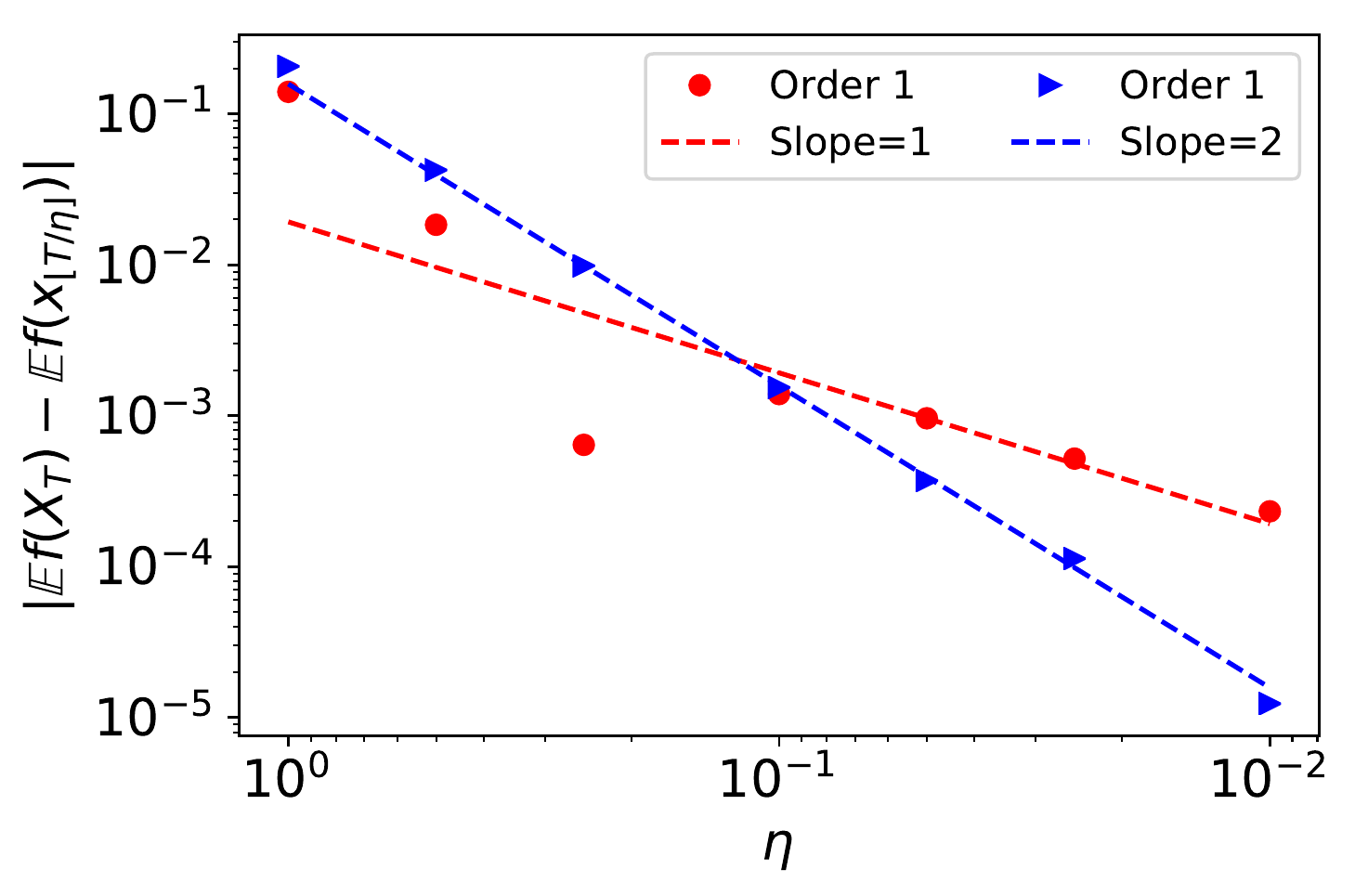}}
		\subfloat[]{\includegraphics[width=7cm]{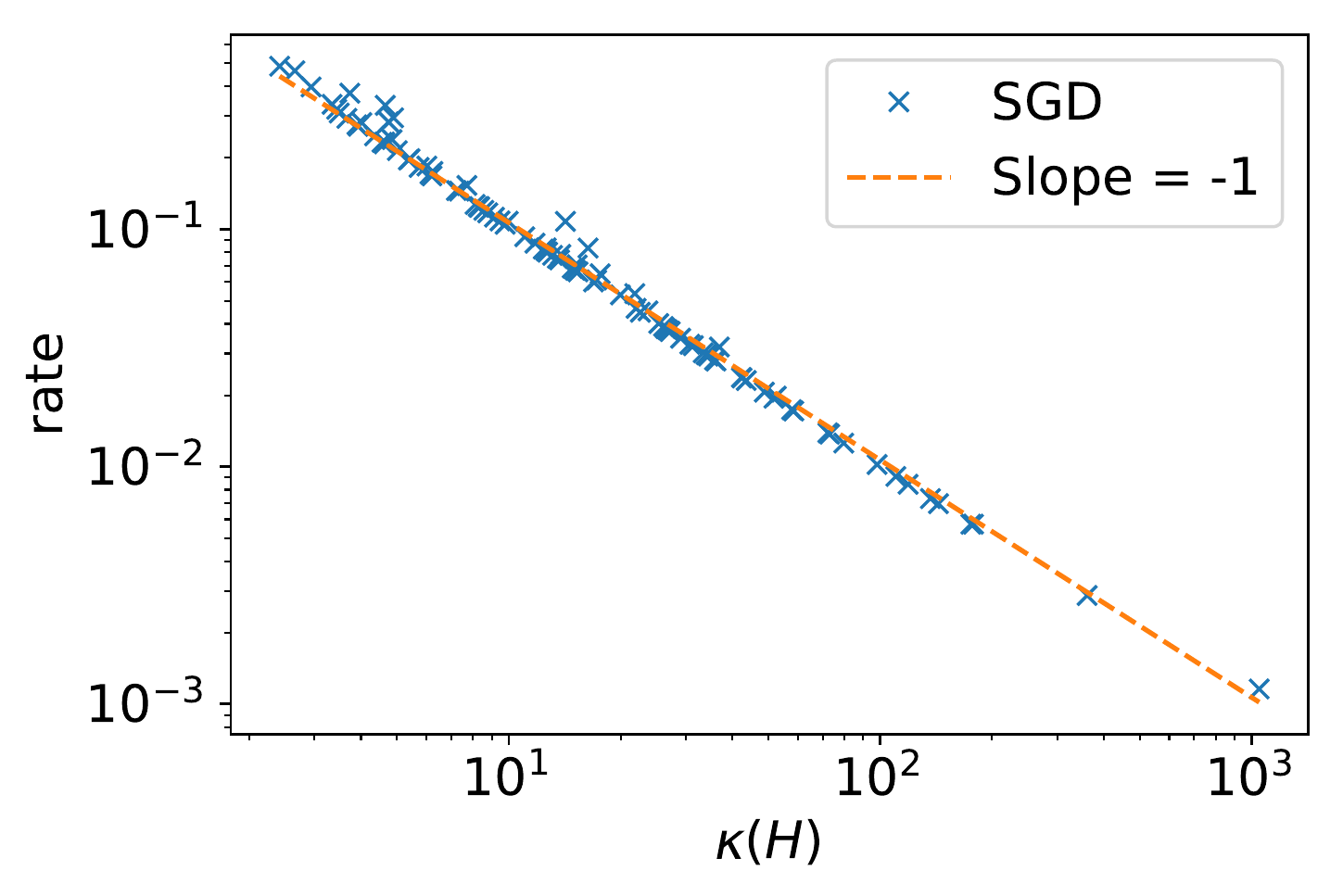}}
	\end{center}
	\caption{SME prediction vs SGD dynamics. (a) SME as a weak approximation of the SGD. We compute the weak error with test function $g$ equal to $f$ (see Thm.~\ref{thm:sme_sgd}). As predicted by our analysis, the order-2 SME~\eqref{eq:sgd_sme_order2} (order-1 SME~\eqref{eq:sgd_sme_order1}) should give a slope = 2 (1) decrease in error as $\eta$ decreases (note that the x-axis is flipped). The SME solution is computed using an exact formula derived by the application of It\^{o} isometry and the SGD expectation is averaged over 1e6 runs. We took $T=2.0$. We see that the predictions of Thm.~\ref{thm:sme_sgd} and~Cor.\ref{cor:sme_sgd_1st_order} hold. (b) Descent rate vs condition number. $H$ is generated with different condition numbers, and the resulting descent rate of SGD is approximately $\propto {\kappa(H)}^{-1}$, as predicted by the SME.}
	\label{fig:sgd_conv_cond}
\end{figure}
Moreover, notice that by the identification $t=k\eta$ ($k$ is the SGD iteration number), the SME analysis tells us that the asymptotic linear convergence rate (in $k$, i.e.\,$\text{rate}\sim -\log[\E f(x_k)] / k$) in the descent phase of the SGD is $2 \lambda_d(H)\eta$. For numerical stability (even in the non-stochastic case), we usually require $\eta \propto 1/\lambda_1(H)$, thus the maximal descent rate is inversely proportional to the condition number $\kappa(H)=\lambda_1(H)/\lambda_d(H)$. We validate this observation by generating a collection of $H$'s with varying condition numbers and applying SGD with $\eta \propto 1/ \lambda_1(H)$. In Fig~\ref{fig:sgd_conv_cond}(b), we plot the initial descent rates versus the condition number of $H$ and we observe that we indeed have $\text{rate}\propto {\kappa(H)}^{-1}$.

\paragraph{Alternate model.}
Now, we consider a slight variation of the model~\eqref{eq:model_1}. The goal is show that the dynamics of SGD (and the corresponding SME) is not always Gaussian-like and thus using the OU process to model the SGD is not always valid.
Given the same positive-definite matrix $H$, we diagonalize it in the form $H = Q D Q^T$ where $Q$ is an orthogonal matrix and $D$ is a diagonal matrix of eigenvalues. We then define the sample objective
\begin{align}\label{eq:model_2}
	&f_\gamma(x) := \tfrac{1}{2} {(Q^T x)}^T [D + \diag(\gamma)] {(Q^T x)} \nonumber \\
	&\gamma {\sim} \mathcal{N}(0, I)
\end{align}
which gives the same total objective $f(x) \equiv \E f_\gamma (x) = \tfrac{1}{2} x^T H x$. However, we have a different expression for $\Sigma(x)$
\begin{align*}
	\Sigma(x) = Q {\diag(Q x)}^2 Q^T,
\end{align*}
which gives the SME
\begin{align*}
	dX_t &= - H X_t dt + \sqrt{\eta} {Q |\diag(Q^T x)| Q^T} dW_t \\
	&\overset{\text{in distribution}}{=}
	- H X_t dt + \sqrt{\eta} {Q \diag(Q^T x) Q^T} dW_t.
\end{align*}
We can rewrite the above as
\begin{align*}
	dX_t = - H X_t dt + \sqrt{\eta} \sum_{l=1}^{d} Q^{(l)} X_t dW_{(l),t},
\end{align*}
where $Q^{(l)} = Q \diag(Q_{(l,\cdot)}) Q^T$ and $Q_{(l,\cdot)}$ denotes the $l^\text{th}$ row of $Q$. By observing that every pair of $\{ H, Q^{(1)}, \dots, Q^{(d)} \}$ commute, we have the explicit solution
\begin{align*}
	X_t = e^{- \tfrac{1}{2} \eta t + \sqrt{\eta} \sum_{l=1}^{d} Q^{(l)} W_{(l),t}}
	e^{- H t} x_0.
\end{align*}
which is a multi-dimensional Black-Scholes~\citep{black1973pricing} type of stochastic process. In particular, the distribution is not Gaussian of any $t>0$. Nevertheless, we may take expectation to obtain
\begin{align*}
	\E f(X_t) = \tfrac{1}{2} e^{\eta t} x_0^T H e^{-2 H t} x_0.
\end{align*}
This immediately implies the following interesting behavior: if $\eta < 2 \lambda_d(H)$, then $2 H - \eta I$ is positive definite and so $\E f(X_t) \rightarrow 0$ exponentially at constant, non-zero $\eta$; Otherwise, depending on initial condition $x_0$, the objective may not converge to 0. In particular, if $\eta > 2 \lambda_d(H)$ (which happens quite often if the condition number of $H$ is large) and $x_0$ is in general position, then we have asymptotic exponential divergence. This is a variance-induced divergence typically observed in Black-Scholes and geometric Brownian motion type of stochastic processes. The term ``variance-induced'' is important here since the deterministic part of the evolution equation is mean-reverting and in fact is identical to the stable OU process studied earlier.
In Fig.~\ref{fig:sgd_bs_conv_cond}(a), (b), we show the correspondence of the SME findings with the actual dynamics of the SGD iterations. In particular, we see in Fig.~\ref{fig:sgd_bs_conv_cond}(c) that for small $\eta$, we have exponential convergence of the SGD at constant learning rates, whereas for $\eta > 2 \lambda_d(H)$, the SGD iterates start to oscillate wildly and its mean value is dominated by few large values and diverges approximately at the rate predicted by the SME. Note that this divergence is predicted to be at a finite $\eta$, and from the theory developed so far we cannot conclude that the SME approximation always holds accurately at this regime (but the approximation is guaranteed for $\eta$ sufficiently small). Nevertheless, we observe at least in this model that the variance-induced divergence of the SGD happens as predicted by the SME.
\begin{figure}[h]
	\begin{center}
		\subfloat[]{\includegraphics[width=7cm]{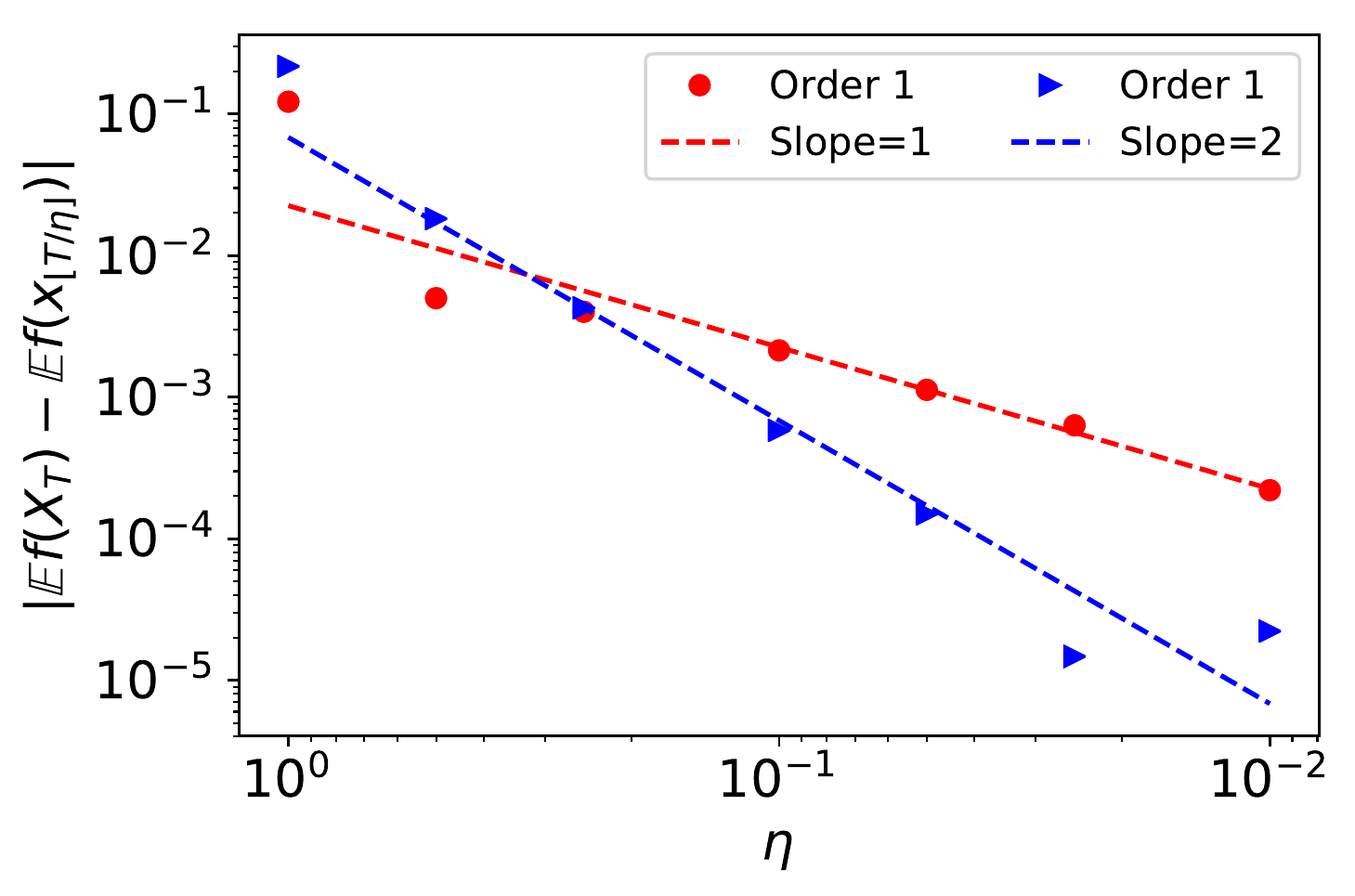}}
		\subfloat[]{\includegraphics[width=7cm]{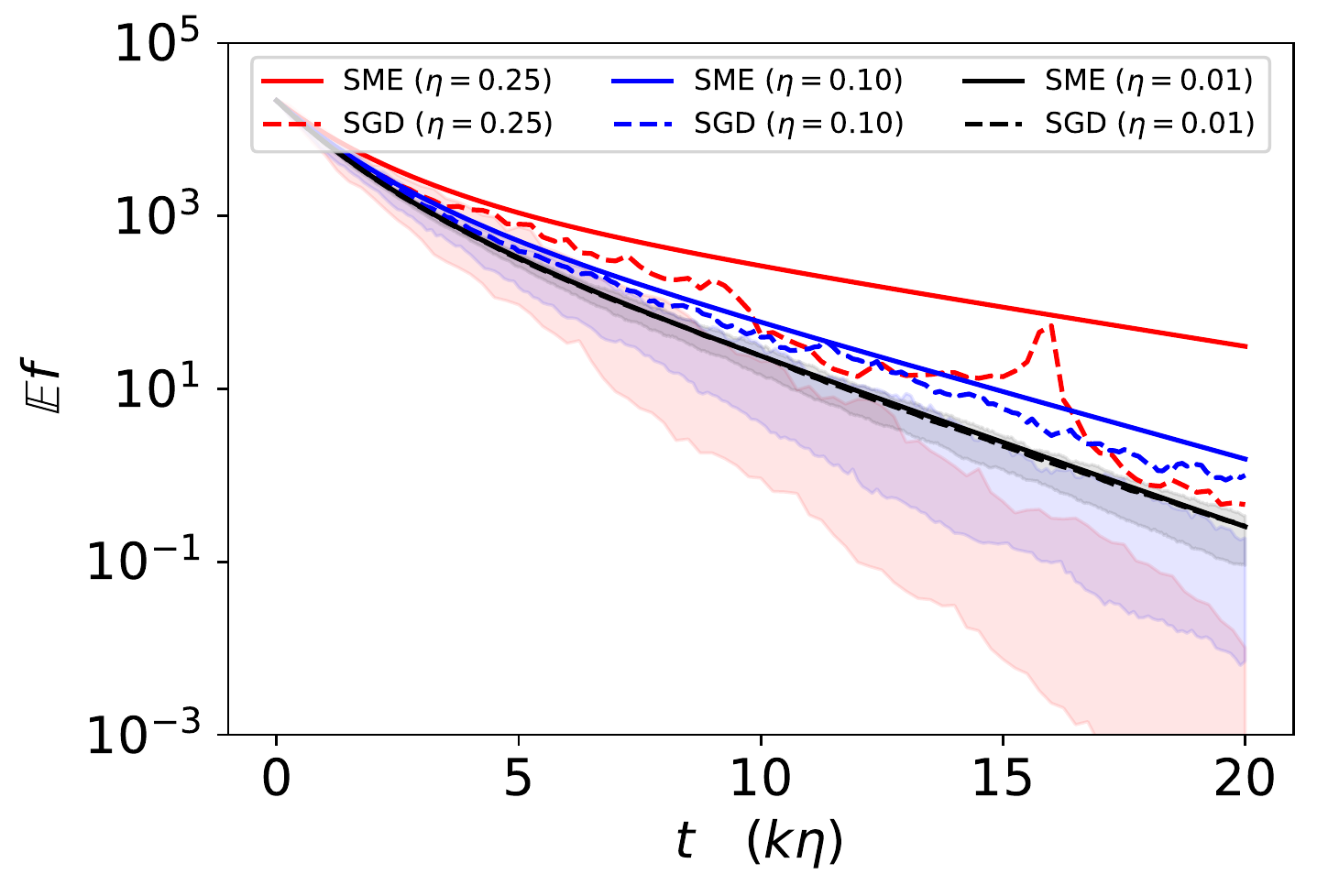}}

		\subfloat[]{\includegraphics[width=7cm]{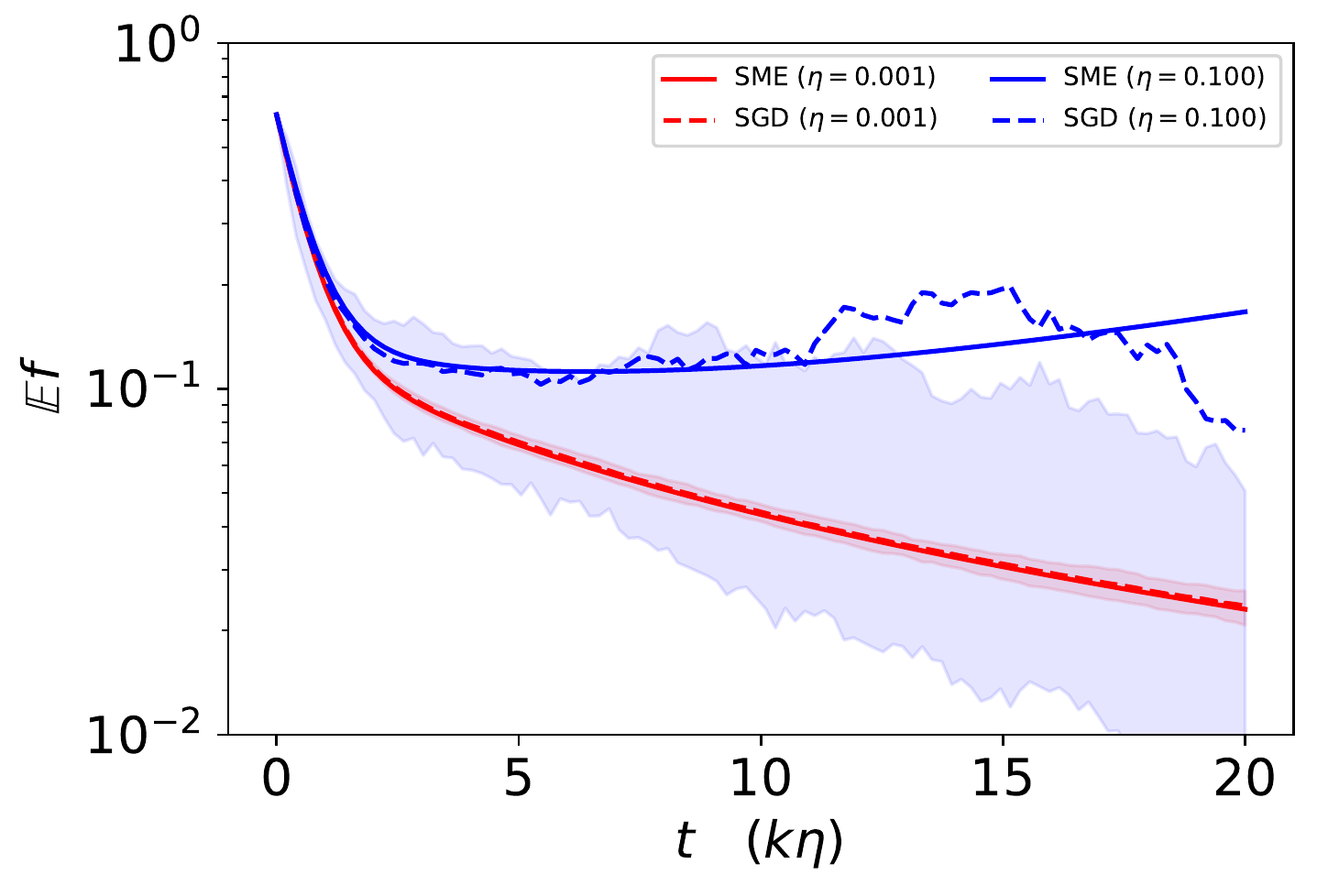}}
	\end{center}
	\caption{SME prediction vs SGD dynamics for the model variant~\eqref{eq:model_2}. (a) Order of convergence of the SME to the SGD. We use the same setup as in Fig.~\ref{fig:sgd_conv_cond}(a). Observe that our analysis again predicts the correct rate of weak error decay as $\eta$ decreases. (b) SGD paths vs order-1 SME prediction. Solid lines are SME exact solutions and dotted lines are means of SGD paths over 500 runs, and the 25-75 percentiles are shaded. We observe convergence of $\E f$ at constant $\eta$, and that the sample mean is dominated by few large values, as observed by the deviation of the percentiles from the mean.
	(b) Variance-induced explosion. As predicted by the SME analysis, if $\eta > 2\lambda_d(H)$ (Here, $\lambda_d(H) = 0.01$), variance-induced instability sets in.}
	\label{fig:sgd_bs_conv_cond}
\end{figure}

\subsection{SME analysis of MSGD}
Let us now use the SME to analyze MSGD applied to model~\eqref{eq:model_1}.
We have shown earlier that $\Sigma(x) = H$. Thus, according to Thm.~\ref{thm:sme_msgd}, the order-1 SME for MSGD is
\begin{align}\label{eq:sme_msgd_mod1}
	\begin{split}
		dV_t &= - [ \mu V_t + H X_t ] dt + \sqrt{\eta} H dW_t, \\
		dX_t &= V_t dt,
	\end{split}
\end{align}
with $X_0=x_0$ and $V_0=0$. If we set $Y_t := (V_t, X_t) \in \R^{2d}$, $U_t$ a $2d$-dimensional Brownian motion with first $d$ coordinates equal to $W_t$, and define block matrices
\begin{align}\label{eq:ABdef}
	A := \begin{pmatrix}
		\mu I &   H \\
		-I    &   0
	\end{pmatrix},
	\qquad
	B := \begin{pmatrix}
		H & 0 \\
		0 & 0
	\end{pmatrix},
\end{align}
we can then write~\eqref{eq:sme_msgd_mod1} as
\begin{align*}
	dY_t = - A Y_t + \sqrt{\eta} B dU_t, \qquad Y_0 = (0, x_0),
\end{align*}
which admits the explicit solution
\begin{align*}
	Y_t = e^{ - At}
	\left(
		Y_0 + \sqrt{\eta} \int_{0}^{t} e^{A s} B dU_s.
	\right).
\end{align*}
By It\^{o}'s isometry, we have
\begin{align}\label{eq:msgd_mod1_nonexactf}
	\E f(X_t)
	=& \tfrac{1}{2}
	\left[
		| \diag(0, H)^{\nicefrac{1}{2}} e^{-At} {Y_0} |^2
		+ \eta \int_{0}^{t} |\diag(0, H)^{\nicefrac{1}{2}} e^{-(t-s)A} B|^2 ds
	\right],
\end{align}
One can see immediately that a similar two-phase behavior is present, but the property of the descent phase now hinges on the spectral properties of the matrix $A$ (instead of $H$). Before proceeding, we first observe that the eigenvalues of $A$ can be written as
\begin{align}\label{eq:eigvals_sme_msgd}
	\lambda(A) := \{ \Lambda_+, \Lambda_- \},
	\qquad
	\Lambda_{\pm,i} = \tfrac{1}{2} {\left( \mu \pm \sqrt{\mu^2 - 4 \lambda_i(H)}\right)},
	\qquad
	i=1,2,\dots,d.
\end{align}
In particular, $\Re \lambda_i(A) > 0$ for all $i$ as long as $\mu>0$. We also need the following simple result concerning the decay of the norm of $e^{-tA}$ if all eigenvalues of $A$ have positive real part.
\begin{lemma}
	\label{lem:decay_estimate_A}
	Let $A$ be a real square matrix such that all eigenvalues have positive real part. Then,
	\begin{enumerate}[(i)]
		\item For each $\epsilon >0$, there exists a constant $C_\epsilon > 0$ independent of $t$ but depends on $\epsilon$, such that
		\begin{align*}
			| e^{-t A} | \leq C_\epsilon e^{- t (\min_{i} \Re \lambda_i(A) - \epsilon)}
		\end{align*}
		\item If in addition $A$ is diagonalizable, then there exists a constant $C>0$ independent of $t$ such that
		\begin{align*}
			| e^{-t A} | \leq C e^{- t \min_{i} \Re \lambda_i(A)}
		\end{align*}
	\end{enumerate}
\end{lemma}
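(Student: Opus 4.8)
The plan is to pass to the Jordan canonical form of $A$ and reduce the estimate to a bound on the exponential of a single Jordan block. Write $A = P J P^{-1}$, where $J = \diag(J_1,\dots,J_p)$ is block-diagonal with Jordan blocks $J_q = \lambda_q I + N_q$, each $N_q$ nilpotent of index $m_q$ (the block size), and $P$ invertible (possibly with complex entries, which causes no difficulty since we only estimate norms). Then $e^{-tA} = P e^{-tJ} P^{-1}$, and since the Frobenius norm used throughout the paper is submultiplicative, $|e^{-tA}| \le |P|\,|P^{-1}|\,|e^{-tJ}|$, while $|e^{-tJ}|^2 = \sum_q |e^{-tJ_q}|^2$ by block-diagonality. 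So everything reduces to estimating $|e^{-tJ_q}|$ for a single block.

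For one block, $\lambda_q I$ commutes with $N_q$, so $e^{-tJ_q} = e^{-\lambda_q t} e^{-tN_q} = e^{-\lambda_q t} \sum_{j=0}^{m_q - 1} \frac{(-t)^j}{j!} N_q^j$, whence $|e^{-tJ_q}| \le e^{-t\Re\lambda_q}\, q_q(t)$, where $q_q(t) := \sum_{j=0}^{m_q-1}\frac{t^j}{j!}|N_q|^j$ is a polynomial of degree $m_q - 1$ with nonnegative coefficients. Setting $\beta := \min_i \Re\lambda_i(A)$, we have $\Re\lambda_q \ge \beta$ for every $q$, hence $e^{-t\Re\lambda_q} \le e^{-t\beta}$ for $t \ge 0$; combining this over the blocks gives $|e^{-tA}| \le C\, e^{-t\beta}\, q(t)$ with $C := |P|\,|P^{-1}|\,\sqrt{p}$ and $q(t) := \max_q q_q(t)$, a polynomially-bounded function of $t$.

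Part (ii) is then immediate: if $A$ is diagonalizable every block has size $m_q = 1$, so $N_q = 0$, $q_q \equiv 1$, and $|e^{-tA}| \le C e^{-t\beta}$. For part (i), fix $\epsilon > 0$; since $q$ grows at most polynomially, $t \mapsto q(t)\,e^{-\epsilon t}$ is bounded on $[0,\infty)$ by some finite constant $C'_\epsilon$, and therefore $|e^{-tA}| \le C C'_\epsilon \, e^{-t(\beta - \epsilon)}$, which is the claim with $C_\epsilon := C C'_\epsilon$.

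There is essentially no deep obstacle here; the only points needing a little care are the choice of a submultiplicative matrix norm (the Frobenius norm works directly, or one argues with the operator norm and absorbs a dimensional constant), and the elementary fact that a polynomial times a strictly decaying exponential is bounded on the half-line — this is exactly what forces the $-\epsilon$ loss in part (i) when $A$ has nontrivial Jordan blocks, and explains why diagonalizability is precisely what removes it. If one wished to avoid the Jordan form entirely, an alternative is to estimate $\|e^{-nA}\|$ at integer times via Gelfand's formula applied to $e^{-A}$ (whose spectral radius is $e^{-\beta}$) and then interpolate to real $t$ using submultiplicativity together with the boundedness of $e^{-sA}$ for $s\in[0,1]$; this yields (i) but not the sharp constant in (ii), so I would present the Jordan-form argument as the main proof.
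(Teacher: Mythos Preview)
Your proof is correct and follows essentially the same Jordan-form approach as the paper: decompose $e^{-tA}$ via $A = PJP^{-1}$, bound each Jordan block by $e^{-t\Re\lambda_q}$ times a polynomial in $t$, and absorb the polynomial into $e^{-\epsilon t}$ for part~(i). Your treatment of part~(ii) is in fact cleaner than the paper's---the paper begins a trace computation via $Q = P^\dagger P$ that is left unfinished, whereas you simply observe that diagonalizability forces all Jordan blocks to be $1\times 1$, so the polynomial factor disappears.
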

\begin{proof}
	See Appendix~\ref{app:sec:computation}.
\end{proof}

With the above results, we can now characterize the decay of the objective under momentum SGD.
From expression~\eqref{eq:eigvals_sme_msgd}, we see that as long as $\mu^2 \neq 4\lambda_i$ for any $i=1,\dots,d$, $A$ has $2d$ distinct eigenvalues and is hence diagonalizable. We shall hereafter assume that $\mu$ is in general position such that this is the case.
Using Lem.~\ref{lem:decay_estimate_A} and expression~\eqref{eq:msgd_mod1_nonexactf}, we arrive at the estimate
\begin{align}\label{eq:msgd_mod1_bdf}
	\E f(X_t) \leq& \tfrac{1}{2} C^2 |x_0|^2 \lambda_1(H) e^{- 2 t \min_{i} \Re \lambda_i(A)}
	+ \tfrac{1}{2} \tfrac{\eta C^2 {\lambda_1(H)}^3}{\min_i \Re \lambda_i(A)}
	(1 - e^{- 2 t \min_i \Re \lambda_i(A)}).
\end{align}
This result tells us that the convergence rate of the descent phase is now controlled by the minimum real part of the eigenvalues of $A$, instead of the minimum eigenvalue of $H$. In particular, we achieve the best linear convergence rate by maximizing the smallest real part of the eigenvalues of $A$. This leads to the following optimization problem for the optimal convergence rate:
\begin{align*}
	\sup_{\mu\in(0,\infty)}
	\min_{i=1,\dots,d}
	\min_{s\in\{+1, -1\}}
	\left \{
		\Re \left[ \mu + s \sqrt{\mu^2 - 4\lambda_i(H) } \right]
	\right \}
\end{align*}
Since $H$ is positive definite, the supremum is attained at $\mu^*=2\sqrt{\lambda_d(H)}$ with the rate also equal to $2 \sqrt{\lambda_d(H)}$.
However, note that if we take $\mu = \mu^*$ exactly, one can check that $A$ is no longer diagonalizable and by Lem.~\ref{lem:decay_estimate_A}, the rate is slightly diminished, thus technically we can take $\mu$ as close to $\mu^*$ as we like (i.e.\,the rate is as close to $2 \sqrt{\lambda_d(H)}$ as we like), but exact equality is not technically deducible from current results. In Fig.~\ref{fig:msgd_conv_cond}(c), we demonstrate the optimal choice of $\mu$ and its effect on the convergence rate. Moreover, observe that as $\mu$ increases, the number of complex eigenvalues start to decrease, and the magnitudes of the imaginary parts of the complex eigenvalues also decrease. This signifies that increasing $\mu$ causes oscillations to decreases in magnitude and frequency. This is again corroborated by numerical experiments (Fig.~\ref{fig:msgd_conv_cond}(c)).
\begin{figure}[h]
	\begin{center}
		\subfloat[]{\includegraphics[width=7cm]{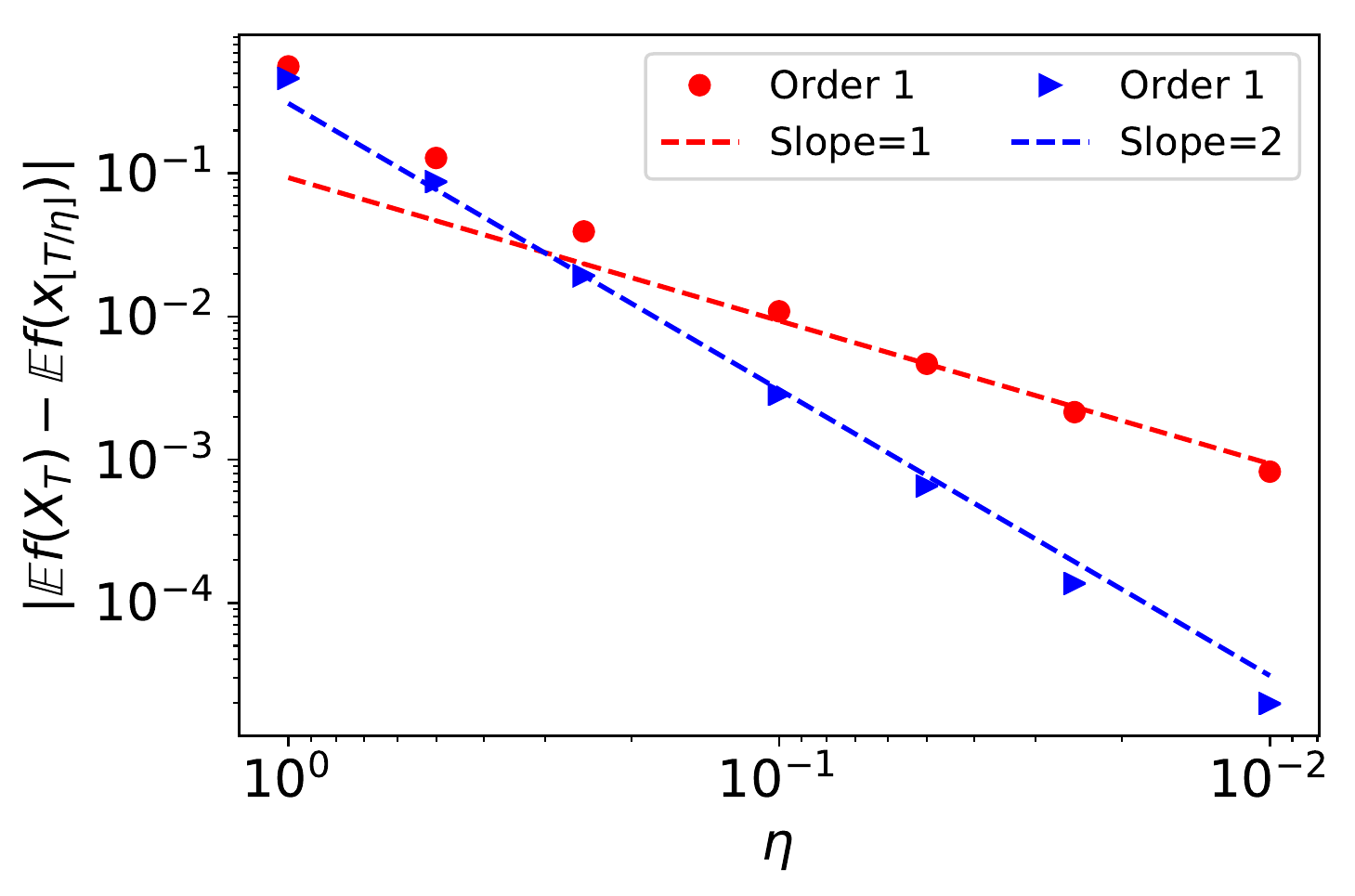}}
		\subfloat[]{\includegraphics[width=7cm]{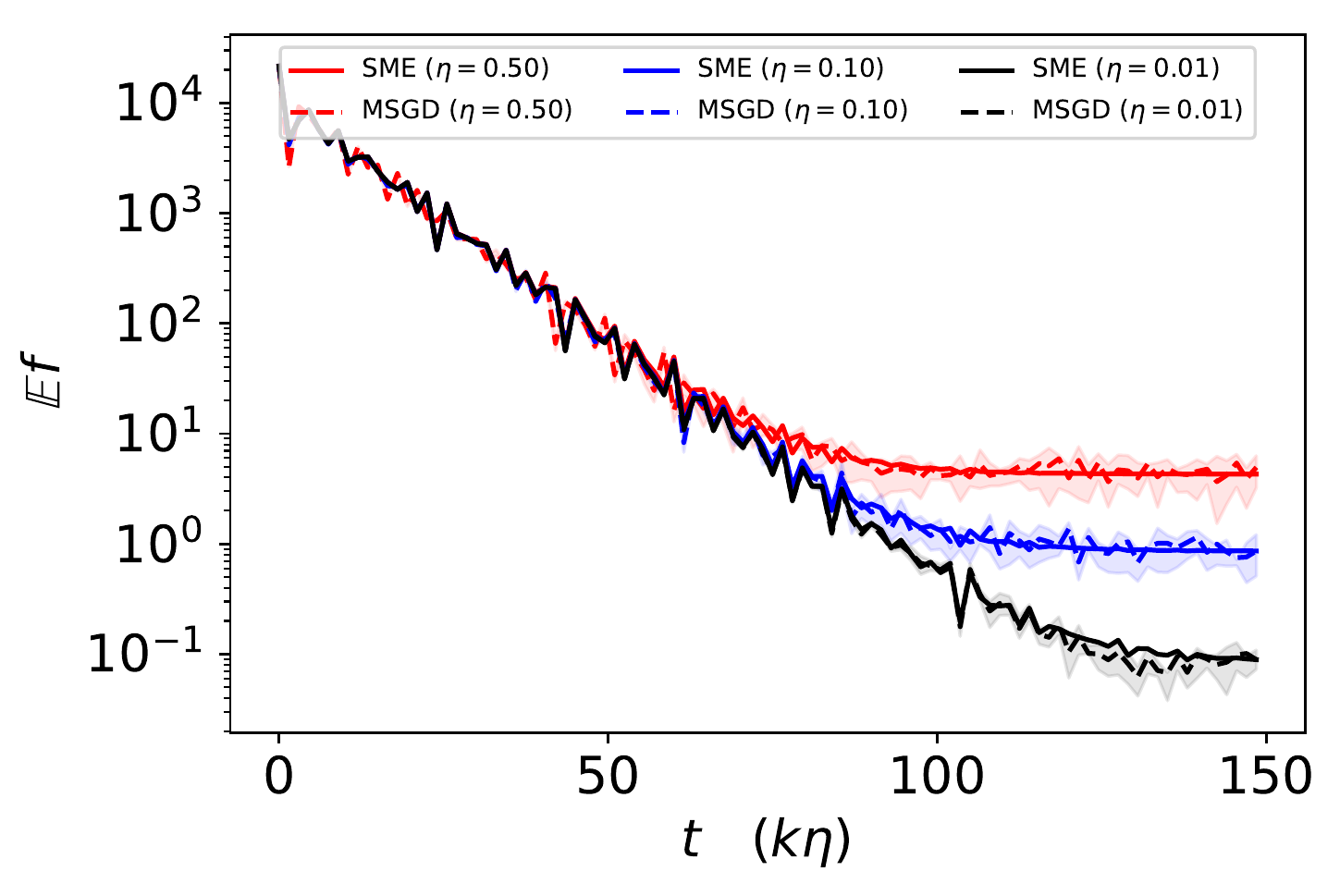}}

		\subfloat[]{\includegraphics[width=7cm]{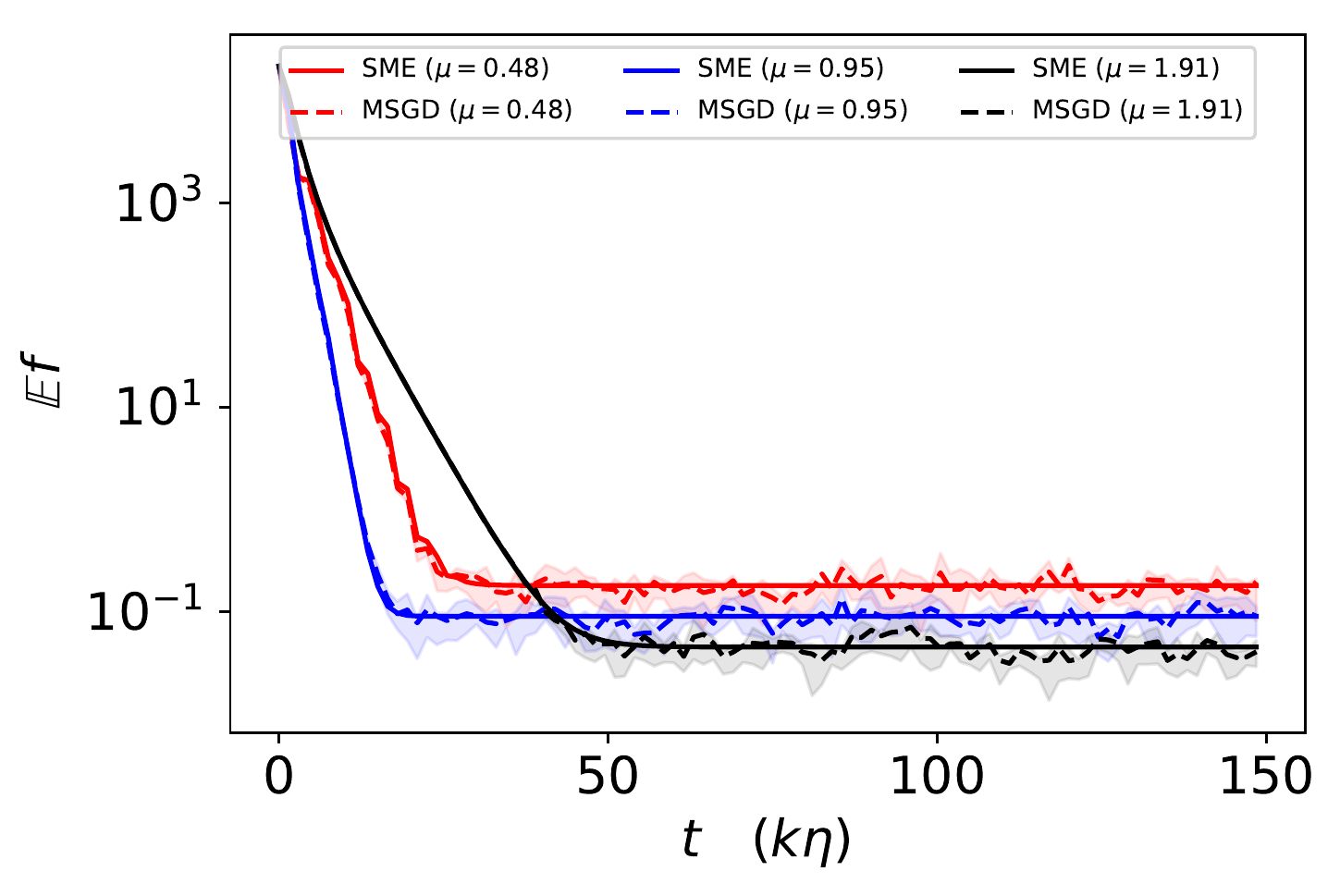}}
		\subfloat[]{\includegraphics[width=7cm]{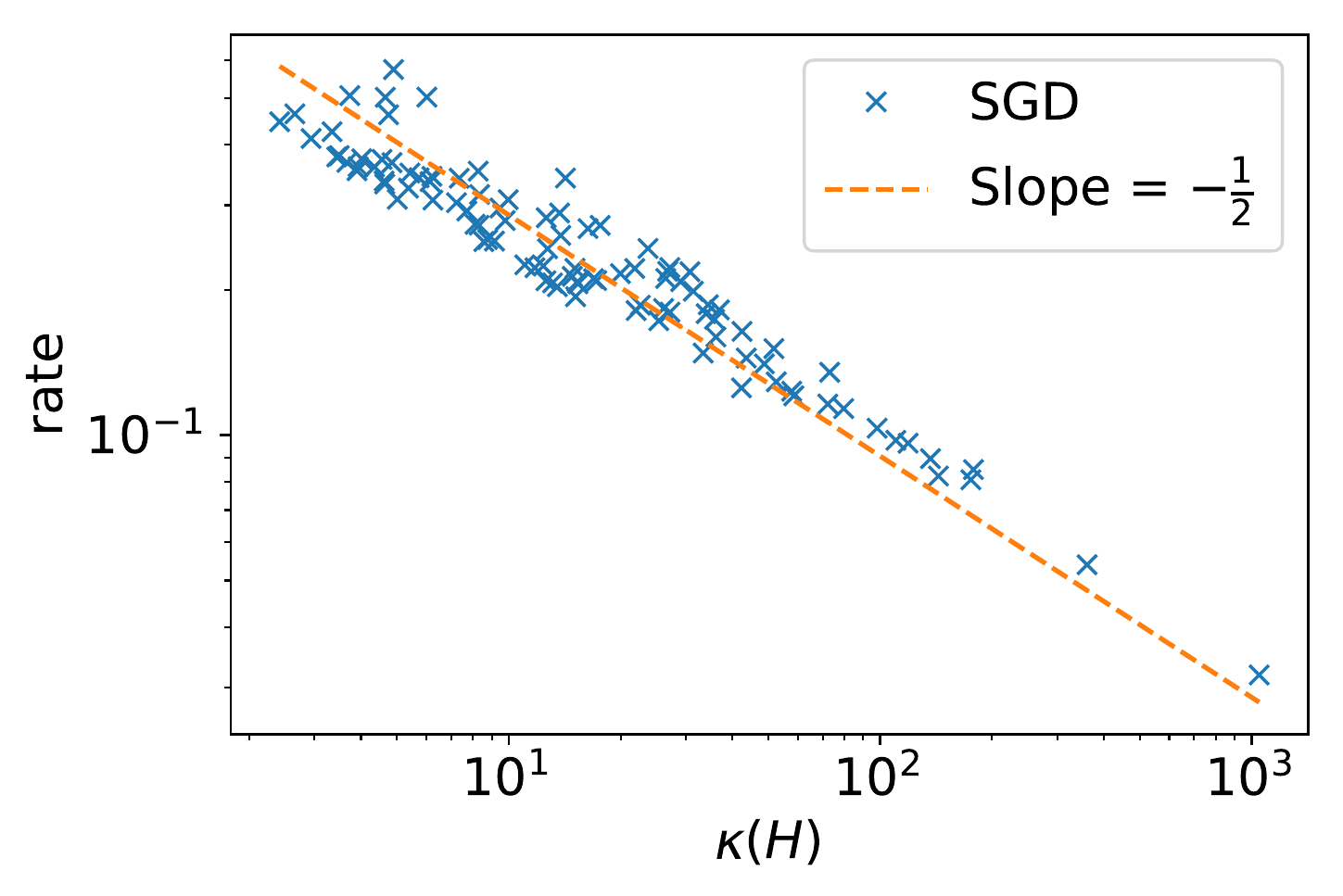}}
	\end{center}
	\caption{SME prediction vs MSGD dynamics. (a) and (b) SME vs MSGD dynamics at $\mu=0.1$ for different learning rates $\eta$. As before, the SME prediction gets better as $\eta$ decreases according to the predicted order. Notice also the presence of oscillations, due to the complex eigenvalues of $A$. (c) Optimal descent rate of the SGD is achieved by the SME prediction $\mu=\mu^*$, which is $0.95$ in this case. Notice that exactly as predicted by the SME, increasing $\mu$ decreases the oscillation frequency and magnitude (due to having fewer complex eigenvalues and smaller imaginary parts), as well as the asymptotic fluctuations (due to formula~\eqref{eq:msgd_mod1_exactf}). (d) Descent rate vs condition number. $H$ is generated with different condition numbers, and the descent rate of MSGD is $\propto {\kappa(H)}^{-1/2}$, as predicted by the SME, which for badly conditioned $H$ gives a large improvement.}
	\label{fig:msgd_conv_cond}
\end{figure}

Another interesting observation is that by the identification $t=\eta k$, the descent rate (in terms of $k$) is $2\sqrt{\lambda_d(H)}\eta$. As before, if we choose the maximal stable learning rate we would have $\hat{\eta} \propto 1/\lambda_1(H)$ ($\hat{\eta} = \eta^2$ according to the scaling introduced in~\eqref{eq:rescale}). Thus, for the MSGD iterates we have its descent rate $\propto {\kappa(H)}^{-1/2}$, which is a huge improvement over SGD, whose rate is $\propto \kappa(H)^{-1}$, especially for badly conditioned matrices where $\kappa(H) \gg 1$. In Fig.~\ref{fig:msgd_conv_cond}(d), we plot the MSGD initial descent rates for varying condition numbers of $H$. Again, we observe that the SME analysis gives the correct characterization of the precise dynamics and recovers the square-root relationship with condition number.

Finally, let us discuss the effect of adding momentum to the asymptotic fluctuations due to noisy gradients. Note that it is not correct to conclude, using Eq.~\eqref{eq:msgd_mod1_bdf}, that taking $\mu\approx\mu^*$ also gives the lowest fluctuations. This is because the constant $C$ depends on $\mu$ as well, as is evidenced in the proof of Lem.~\ref{lem:decay_estimate_A}, which shows that $C$ depends on the conditioning of the eigenvector matrix of $A$. To proceed, we do not use the bounds~\eqref{eq:msgd_mod1_bdf}. Instead, we explicitly diagonalize $A$ and after some computations, we arrive at the exact expression for $\E f(X_t)$
\begin{align}\label{eq:msgd_mod1_exactf}
	\E f(X_t)
	=& \tfrac{1}{2}
	| \diag(0, H)^{\nicefrac{1}{2}} e^{-At} {Y_0} |^2 \\
	&+ \tfrac{1}{2}\eta
	\sum_{i=1}^{d}
	\tfrac{\lambda_i^3}{|\mu^2 - 4 \lambda_i|}
	\left[
		\tfrac{1 - e^{-2t\Re \Lambda_{+,i}}}{2 \Re \Lambda_{+,i}}
		+ \tfrac{1 - e^{-2t\Re \Lambda_{-,i}}}{2 \Re \Lambda_{-,i}}
		- 2 R(t,\mu, \lambda_i(H))
	\right]
\end{align}
where
\begin{align}\label{eq:msgd_mod1_exactf_Rdef}
	R(t,\mu, \lambda) =
	\begin{cases}
		\tfrac{1 - e^{-t\mu}}{\mu}
		& \mu \geq 2 \sqrt{\lambda} \\
		\tfrac{\mu +\sqrt{4 \lambda -\mu ^2} e^{- \mu t} \sin (t \sqrt{4 \lambda -\mu ^2})-\mu  e^{- \mu t} \cos (t \sqrt{4 \lambda -\mu ^2})}{4 \lambda }
		& \mu < 2 \sqrt{\lambda}
	\end{cases}.
\end{align}
In particular, the asymptotic loss value induced by noise is
\begin{align}\label{eq:msgd_sme_asympnoise}
	\lim_{t\rightarrow \infty} \E f(X_t)
	=& \tfrac{1}{2}\eta
	\sum_{i=1}^{d}
	\tfrac{\lambda_i(H)^3}{|\mu^2 - 4 \lambda_i(H)|}
	\left[
		\tfrac{1}{2 \Re \Lambda_{+,i}}
		+ \tfrac{1}{2 \Re \Lambda_{-,i}}
		- 2 \min \left \{ \tfrac{\mu}{4\lambda_i(H)} , \tfrac{1}{\mu} \right \}
	\right]
\end{align}
Observe that this function (in fact, each term in the sum) is monotone-decreasing in $\mu$, and for $\mu \ll 1$ it scales like $\mu^{-1}$, and for $\mu \gg 1$ it scales like $\mu^{-3}$. Thus, increasing the momentum parameter decreases the asymptotic noise in the iterates, i.e. decreases the asymptotic value of $\E f$, which should be 0 in the absence of noise. This again agrees with the actual MSGD dynamics (Fig.~\ref{fig:msgd_conv_cond}(b)). Consequently, to obtain ``optimal'' tradeoff between descent and noise, we would like a momentum schedule that equals $\mu^*$ in the descent phase and increases to infinity (in the original scaling this corresponds to $\hat{\mu}\rightarrow 0$) as we approach the optimum. Finding this optimal schedule can be cast as an optimal control problem~\citep{li2017stochastic}, and a rigorous investigation of these approaches will be considered in subsequent work.

\subsection{SME analysis of SNAG}

Finally, let us see what we can say, using the SME approach, about the difference between MSGD and SNAG in this stochastic setting. Let us first consider the case of constant momentum. From Thm.~\ref{thm:sme_nag}, we know that the order-1 SMEs are identical, so we must consider higher order SMEs. A straightforward computation yields the following order-2 SMEs for MSGD and SNAG (again we let $Y_t=(V_t,X_t)$)
\begin{align*}
	&\text{MSGD:} & &dY_t = - A_{1} Y_t + \sqrt{\eta}   B dU_t, \qquad Y_0 = (0, x_0), \\
	&\text{SNAG:} & &dY_t = - A_{2} Y_t + \sqrt{\eta}   B dU_t, \qquad Y_0 = (0, x_0),
\end{align*}
where $A_{i} = A + \tfrac{1}{2} \eta E_{i}$ with $A,B$ as defined in~\eqref{eq:ABdef} and
\begin{align*}
	E_1 := \begin{pmatrix}
		\mu^2 I - H &  \mu H \\
		\mu I    &   H
	\end{pmatrix},
	\qquad
	E_2 := \begin{pmatrix}
		\mu^2 I + H &  \mu H \\
		\mu I    &   H
	\end{pmatrix}.
\end{align*}
From the analysis in Sec.~\ref{sec:sme_msgd}, the descent rate is dominated by the minimal real parts of the eigenvalues of $A_{i}$, which are respectively
\begin{align*}
	&\lambda(A_1) =
	\left \{
		\tfrac{1}{4} \left(\mu  (\eta  \mu +2) \pm \sqrt{\mu ^2 (\eta  \mu +2)^2+4 \eta ^2 {\lambda_i(H)}^2-8 {\lambda_i(H)} (\eta  \mu +2)}\right),
		\quad
		i=1,\dots,d
		\right \} \\
		&\lambda(A_2) =
		\left \{
			\tfrac{1}{4} \left(
				\mu  (\eta  \mu +2)
				+2 \eta  {\lambda_i(H)}
				\pm \sqrt{\eta  \mu +2} \sqrt{\mu ^2 (\eta  \mu +2)+4 {\lambda_i(H)} (\eta \mu -2)}
			\right),
			\quad
			i=1,\dots,d
		\right \}
\end{align*}
We observe that for small $\mu$ (i.e.\,$\hat{\mu}\approx 1$ in the usual MSGD scaling), the terms in square-roots are negative and hence for the same small $\mu$, the convergence rate of SNAG is $\tfrac{1}{2} \eta \lambda_d(H)$ larger than that of MSGD. This says in particular that for $H$ with larger $\lambda_d(H)$, the acceleration is more pronounced.
Moreover, recall that the asymptotic fluctuations is given by
\begin{align*}
	\eta \lim_{t\rightarrow\infty}\int_{0}^{t} |\diag(0, H)^{\nicefrac{1}{2}} e^{-(t-s)(A + \tfrac{1}{2}\eta E_i)} B|^2 ds.
\end{align*}
Without performing tedious computations, we can see that since $E_2 - E_1$ is positive definite, the exponential for the SNAG case decays more rapidly, and hence the eventual fluctuations are expected to be lower. These observations from the SME are again consistent with the behavior of their SGA counter-parts, as shown in Fig.~\ref{fig:msgd_vs_snag}(a).
On the other hand, if we pick $\mu$ for each case by separately maximizing the smallest real part of the eigenvalues (as in Sec.~\ref{sec:sme_msgd}), we obtain similar convergence rates up to $\eta^2$. In other words, if we tune $\mu$ well, there would be no difference between MSGD and SNAG in terms of descent rate (Fig.~\ref{fig:msgd_vs_snag}(b)).
\begin{figure}[h]
	\begin{center}
		\subfloat[]{\includegraphics[width=15cm]{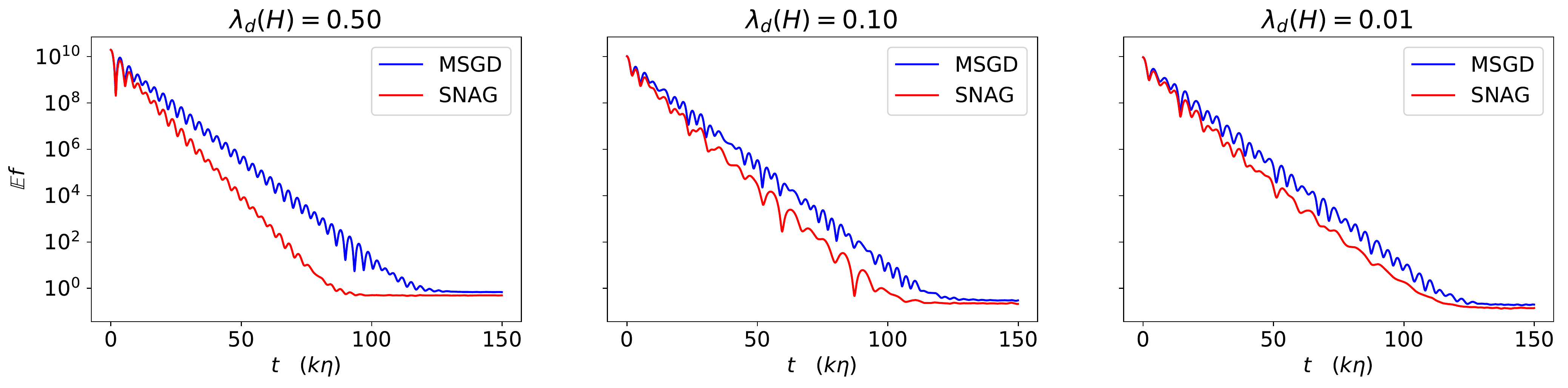}}

		\subfloat[]{\includegraphics[width=15cm]{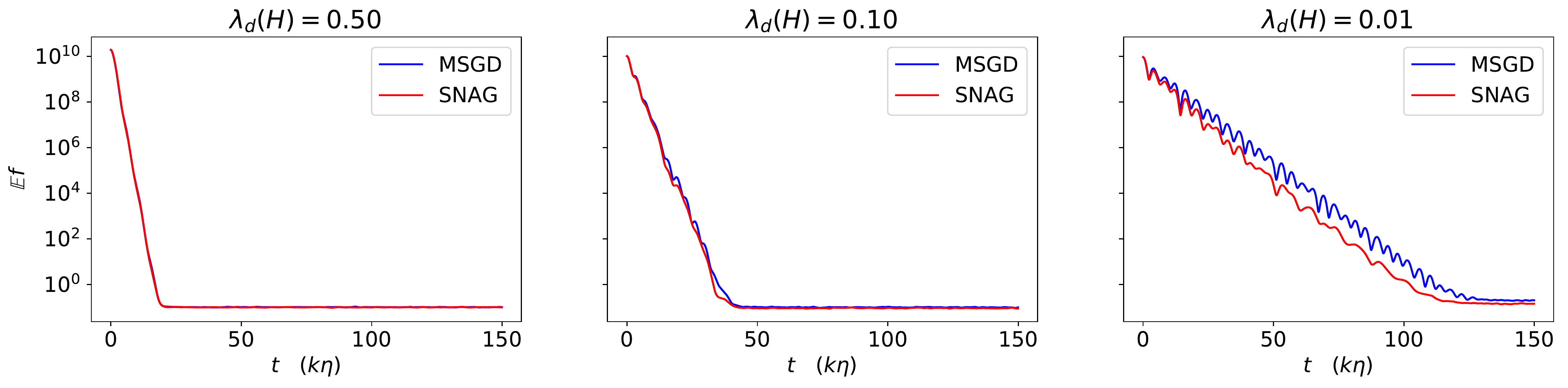}}
	\end{center}
	\caption{MSGD vs SNAG (with constant momentum) dynamics for $\eta=0.1$ and different $\lambda_d(H)$. (a) Dynamics at fixed $\mu=0.2$. We observe that as predicted by the SME analysis, SNAG enjoys a faster linear convergence rate in the descent phase, as well as lower asymptotic fluctuations. The acceleration is indeed more pronounced for larger $\lambda_d(H)$. (b) When, $\mu$ for each case is chosen optimally for the descent (by maximizing the minimal real part of the eigenvalues of $A_1,A_2$ respectively), the dynamics becomes similar. }
	\label{fig:msgd_vs_snag}
\end{figure}

Now, let us discuss the varying momentum case. According to~\eqref{eq:snag_dyn_sme_order1}, for some small $t_0 > 0$ we have the order-1 SME for $t\in[t_0,T]$
\begin{align*}
	dY_t = - A_t Y_t + \sqrt{\eta}   B dU_t, \qquad Y_{t_0} = (v_{t_0}, x_{t_0})
	\qquad
	A_t := \begin{pmatrix}
		\tfrac{3}{t} I &   H \\
		-I    &   0
	\end{pmatrix},
\end{align*}
and $B$ is defined as in~\eqref{eq:ABdef}. This admits the explicit solution
\begin{align*}
	Y_t = e^{ - (t-t_0) \tilde{A}_t}
	\left(
		Y_{t_0} + \sqrt{\eta} \int_{t_0}^{t} e^{s \tilde{A}_s} B dU_s.
	\right),
	\qquad
	\tilde{A}_t := \begin{pmatrix}
		3 \tfrac{\log(t/t_0)}{t-t_0}I &   H \\
		-I    &   0
	\end{pmatrix}.
\end{align*}
The eigenvalues of $\tilde{A}_t$ are
\begin{align*}
	\lambda(\tilde{A}_t) =
	\left \{
		\tfrac{1}{2}
		\left(
			3 \tfrac{\log(t/t_0)}{t-t_0} \pm
			\sqrt{
				9 {[\tfrac{\log(t/t_0)}{t-t_0}]}^2 - 4 \lambda_i(H)
			}
		\right),
		\qquad
 		i=1,\dots,d
	\right \}.
\end{align*}
Since there is no lower-bound on the minimal real part, the convergence is sub-linear. This is expected because the $\mathcal{O}(1/t)$ momentum schedule is suited for non-strongly-convex functions, whereas constant momentum is more appropriate for strong-convex functions~\citep{nesterov2013introductory}. Furthermore, we observe that since the real parts of all eigenvalues of $\tilde{A}_t$ converge to 0 as $t\rightarrow \infty$, according to the analysis in Sec.~\ref{sec:sme_msgd}, the asymptotic fluctuations due to noise should be large. Fig.~\ref{fig:msgd_vs_snag_dyn} confirms these observations and further suggests that in the case of stochastic gradient methods, more careful momentum schedules must be derived in order to balance descent and fluctuations, e.g.\,using the optimal control framework presented in~\cite{li2017stochastic}.
\begin{figure}[h]
	\begin{center}
		\includegraphics[width=15cm]{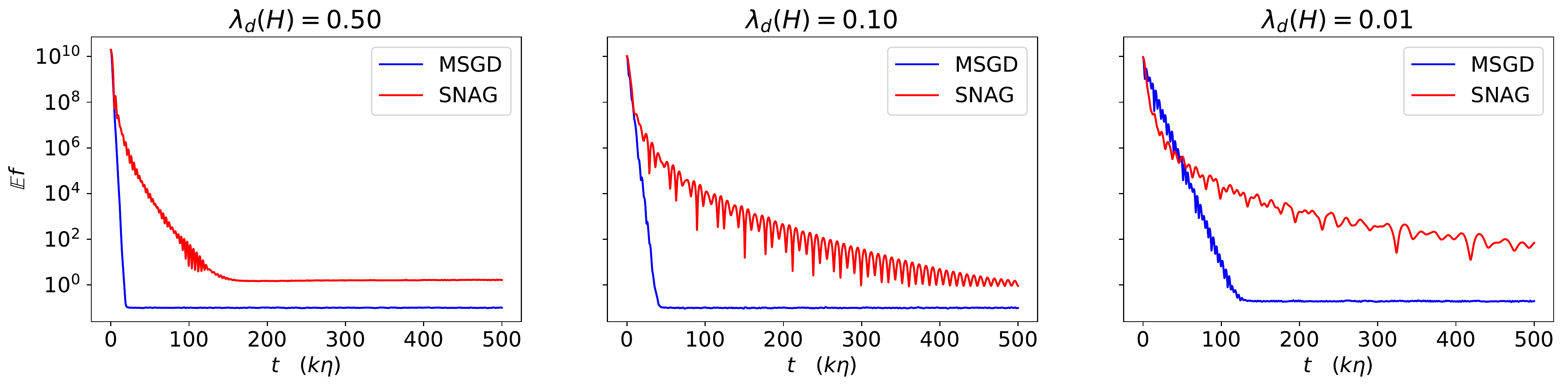}
	\end{center}
	\caption{MSGD vs SNAG (with dynamic momentum according to Nesterov's choice~\eqref{eq:nesterov_mu}) dynamics for $\eta=0.1$ and different $\lambda_d(H)$. We see that the convergence is indeed sub-linear, and moreover, the asymptotic fluctuations are large compared with MSGD, in which case $\mu$ here is picked to achieve optimal descent rate.}
	\label{fig:msgd_vs_snag_dyn}
\end{figure}

\section{Conclusion}
\label{sec:conclusion}

In this paper, we developed the general mathematical foundation of the stochastic modified equations framework for analyzing stochastic gradient algorithms. In particular, we demonstrate that this approach is (1) rigorous, (2) flexible and (3) useful. Indeed, the technique of weak approximations provides a precise mathematical framework for analyzing the relationship between stochastic gradient algorithms and  stochastic differential equations. This should be contrasted with strong approximations in the numerical analysis of SDEs, where approximations are required to hold path-wise, say in the mean-square sense~\citep{Kloeden1992}. The weak formulation greatly increases the flexibility of modelling different type of stochastic gradient algorithms, as we have demonstrated in Sec.~\ref{sec:approx_theorems}. In fact, the main result relating discrete-time algorithms and continuous-time SDEs (Thm.~\ref{thm:one_step_to_multi_step}) is proved in a fairly general setting that allows one to derive a variety of SMEs for different variations of the SGAs (Sec.~\ref{sec:approx_theorems}). Finally, in Sec.~\ref{sec:applications}, we demonstrated the usefulness of the SME approach through explicit calculations. This is enabled by the precise approximation nature of the SMEs and the application of stochastic calculus tools. In particular, we uncovered interesting behaviors of SGAs when applied to a simple yet non-trivial setting, including the tradeoff of descent and fluctuations, the relationship with condition numbers and the subtle differences of MSGD and SNAG in the stochastic setting. In the subsequent work in the series, we will focus on applications, where we extend the SME formalism to study adaptive algorithms and related topics.

\newpage
\appendix

\section{General existence, uniqueness and moment estimates for SDEs}
\label{app:sec:general_SDE_thms}

In this section, we establish general existence, uniqueness and moment estimates for the stochastic differential equations that we encounter in this paper. The results here will be used throughout the subsequent proofs. We note that although similar well-posedness results are well-known, here we require a slightly more general version (where the drift and diffusion terms are themselves random functions) in order to deal with the analysis in Appendix~\ref{app:sec:deriv_x}. Moreover, we need uniform estimates with respect to parameters ($\eta,\epsilon$), which warrants the following standard but necessary analysis.

Let $T>0$ and $Q$ be a subset of a Euclidean space. For $(x,t,q) \in  \R^d \times [0,T] \times Q$, let $B(x,t,q)$ be a $d$-dimensional random vector and $S(x,t,q)$ be a $d\times d$-dimensional random matrix. Throughout this section we assume:
\begin{assumption}
\label{app:assu:B_and_S}
	The random functions $B, S$ satisfy the following:
	\begin{enumerate}[(i)]
		\item $B ,S $ are $W_t$-adapted and continuous in $(x,t) \in \R^d \times [0,T]$ almost surely
		\item $B ,S $ satisfy a uniform linear growth condition, i.e.\,there exists a non-random constant $L>0$ such that
		\begin{align*}
			| B (x,t,q) |^2
			+
			| S (x,t,q) |^2
			\leq L^2(1 + \vert x \vert^2) \quad a.s.
		\end{align*}
		for all $x,y\in\R^d$, $t\in [s,T]$, $q \in Q$.
		\item $B ,S $ satisfy a uniform Lipschitz condition in $x$, i.e.
		\begin{align*}
			| B (x,t,q) - B (y,t,q) |
			+
			| S (x,t,q) - S (y,t,q) |
			\leq L| x - y | \quad a.s.
		\end{align*}
		for all $t\in [s,T]$, $q \in Q$.
	\end{enumerate}
\end{assumption}

\begin{theorem}
\label{app:thm:exist_uniq}
	Let $s \in [0,T)$ and for each $q\in Q$, let $\{ \phi^q_t : t\in[s,T]\}$ be a $\R^d$-valued, $W_t$-adapted random process that is continuous in $t\in [s,T]$ almost surely, with
	\begin{align}
	\label{eq:app:thm:exist_uniq:phi_moment_cond}
		\sup_{q \in Q} \E\sup_{t\in[s,T]} | \phi^q_t |^2 < \infty.
	\end{align}
	Then, for each $q\in Q$ the stochastic differential equation
	\begin{align}
	\label{eq:app:thm:exist_uniq:main_sde}
		\xi^q_t = \phi^q_t
		+ \int_{s}^{t} B (\xi^q_v,v,q) dv
		+ \int_{s}^{t} S (\xi^q_v,v,q) dW_v
	\end{align}
	admits a unique solution $\{\xi^q_t:t\in[s,T]\}$ which is continuous for $t\in [s,T]$ a.s.\,and satisfies
	\begin{align}
	\label{eq:app:thm:exist_uniq:main_estimate}
		\sup_{q \in Q} \E \sup_{t\in[s,T]} | \xi^q_t |^2
		\leq
		C
		\left(
			1 + \sup_{q \in Q} \E \sup_{t\in[s,T]} | \phi^q_t |^2
		\right)
	\end{align}
	for some constant $C>0$ that depends only on $L,T$.
\end{theorem}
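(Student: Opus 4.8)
The plan is to construct the solution by Picard iteration and to extract the moment bound from the same estimates, taking care that every constant depends only on $L$ and $T$ and never on $q$. Let $\mathcal{H}$ denote the space of $\R^d$-valued, $W_t$-adapted processes $\{Y_t:t\in[s,T]\}$ that are continuous in $t$ almost surely and satisfy $\|Y\|^2 := \E\sup_{t\in[s,T]}|Y_t|^2 < \infty$; this is a Banach space under $\|\cdot\|$. Set $\xi^{q,(0)}_t := \phi^q_t$ and define inductively
\begin{align*}
	\xi^{q,(n+1)}_t := \phi^q_t + \int_s^t B(\xi^{q,(n)}_v,v,q)\,dv + \int_s^t S(\xi^{q,(n)}_v,v,q)\,dW_v.
\end{align*}
First I would check that this map sends $\mathcal{H}$ into $\mathcal{H}$: adaptedness and a.s.\ continuity follow because $B,S$ are $W_t$-adapted and continuous in $(x,t)$ by Assumption~\ref{app:assu:B_and_S}(i), while finiteness of $\E\sup_t|\cdot|^2$ follows from the uniform linear growth bound in Assumption~\ref{app:assu:B_and_S}(ii), Cauchy--Schwarz applied to the $dv$-integral, and Doob's maximal inequality together with the It\^o isometry (or Burkholder--Davis--Gundy) applied to the $dW_v$-integral, using $\|\phi^q\|<\infty$.

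Next I would show the iterates are Cauchy in $\mathcal{H}$, uniformly in $q$. Writing $D^{(n)}_t := \xi^{q,(n+1)}_t - \xi^{q,(n)}_t$, the Lipschitz condition Assumption~\ref{app:assu:B_and_S}(iii), the inequality $|a+b|^2\le 2|a|^2+2|b|^2$, Cauchy--Schwarz and Doob/BDG give, for $u\in[s,T]$,
\begin{align*}
	\E\sup_{t\in[s,u]}|D^{(n)}_t|^2 \leq C_1 \int_s^u \E\sup_{v\in[s,r]}|D^{(n-1)}_v|^2\,dr,
\end{align*}
with $C_1=C_1(L,T)$. Iterating yields $\E\sup_{t\in[s,T]}|D^{(n)}_t|^2 \leq \frac{(C_1 T)^n}{n!}\E\sup_{t\in[s,T]}|D^{(0)}_t|^2$, and $D^{(0)}$ is controlled by $\|\phi^q\|$ via Assumption~\ref{app:assu:B_and_S}(ii). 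Since $\sum_n \sqrt{(C_1T)^n/n!}<\infty$, the sequence converges in $\mathcal{H}$ to some $\xi^q$, uniformly in $q$ because the right-hand sides depend on $q$ only through $\sup_q\|\phi^q\|^2<\infty$; passing to the limit inside the integrals along an a.s.-convergent subsequence (the Lipschitz bound controls the error) shows $\xi^q$ solves~\eqref{eq:app:thm:exist_uniq:main_sde}. For uniqueness, given two solutions $\xi^q,\zeta^q$ I would subtract and apply the same Lipschitz/Doob/BDG estimate to obtain $\E\sup_{t\in[s,u]}|\xi^q_t-\zeta^q_t|^2 \leq C_1\int_s^u \E\sup_{v\in[s,r]}|\xi^q_v-\zeta^q_v|^2\,dr$, then conclude by Gronwall's inequality that the difference vanishes, after localizing with the stopping times $\tau_M:=\inf\{t:|\xi^q_t|\vee|\zeta^q_t|\geq M\}$ and letting $M\to\infty$.

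Finally, for the estimate~\eqref{eq:app:thm:exist_uniq:main_estimate} I would apply Assumption~\ref{app:assu:B_and_S}(ii) directly to~\eqref{eq:app:thm:exist_uniq:main_sde}, using $|a+b+c|^2\le 3(|a|^2+|b|^2+|c|^2)$, Cauchy--Schwarz on the drift term and Doob on the diffusion term, again localizing at $\tau_M$ to guarantee finiteness, to get
\begin{align*}
	\E\sup_{t\in[s,u\wedge\tau_M]}|\xi^q_t|^2 \leq C(1+\|\phi^q\|^2) + C\int_s^u \E\sup_{v\in[s,r\wedge\tau_M]}|\xi^q_v|^2\,dr,
\end{align*}
with $C=C(L,T)$; Gronwall then gives the bound with final constant $C(L,T)e^{C(L,T)T}$, and letting $M\to\infty$ by monotone convergence and taking $\sup_q$ finishes the proof. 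The main obstacle I expect is bookkeeping rather than any deep difficulty: one must verify that every constant is genuinely $q$-independent (which works because $L$ is non-random and all $q$-dependence funnels through $\sup_q\|\phi^q\|^2$), and one cannot run Gronwall on $\E\sup_t|\xi^q_t|^2$ before knowing it is finite, so the stopping-time localization and a Fatou/monotone-convergence passage to the limit are needed at both the uniqueness step and the final estimate. The randomness of $B,S$ themselves causes no extra trouble, since they are $W_t$-adapted and their a.s.\ growth and Lipschitz bounds pass through expectations unchanged.
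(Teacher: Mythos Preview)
Your proposal is correct and follows essentially the same Picard iteration strategy as the paper's proof, with the same core ingredients (Lipschitz/linear growth, Doob's maximal inequality, It\^{o} isometry, and Gronwall). The only minor differences are that the paper first bounds $\E|\xi^{q,m+1}_t-\xi^{q,m}_t|^2$ pointwise in $t$ before upgrading to the $\sup_t$ via Doob (and uses Borel--Cantelli to obtain a.s.\ uniform convergence), and for the final estimate and uniqueness it avoids the stopping-time localization by using the a priori finiteness of $\sup_t \E|\xi^q_t|^2$ inherited from the $\Lcal^2$ convergence of the iterates; your direct work in the $\E\sup_t|\cdot|^2$ norm with localization is an equally standard variant.
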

\begin{proof}
	For each $q\in Q$, let us define the recursion
	\begin{align*}
		\xi^{q,0}_t &= \phi^q_t \\
		\xi^{q,m+1}_t &= \phi^q_t + \int_{s}^{t} B (\xi^{q,m}_v,v,\eta) dv
		+ \int_{s}^{t} S (\xi^{q,m}_v,v,\eta) dW_v, \quad m\geq 0.
	\end{align*}
	Note that Assumption~\ref{app:assu:B_and_S} implies each $\xi^{q,m}_t$ is well-defined. Now, let $m\geq 1$. By It\^{o}'s isometry, we have
	\begin{align}
	\label{eq:app:thm:exist_uniq:m_recurs}
		| \xi^{q,m+1}_t - \xi^{q,m}_t |^2
		\leq& 2
		\left|
			\int_{s}^{t}
			B (\xi^{q,m}_v, v, q) - B (\xi^{q,m-1}_v, v, q)
			dv
		\right|^2 \\
		&+ 2
		\left|
			\int_{s}^{t}
			S (\xi^{q,m}_v, v, q) - S (\xi^{q,m-1}_v, v, q)
			dW_v
		\right|^2 \\
		\leq& 2T
		\int_{s}^{t}
		|
			B (\xi^{q,m}_v, v, q) - B (\xi^{q,m-1}_v, v, q)
		|^2
		dv \\
		&+ 2
		\int_{s}^{t}
		|
			S (\xi^{q,m}_v, v, q) - S (\xi^{q,m-1}_v, v, q)
		|^2
		dv.
	\end{align}
	Thus, applying the Lipschitz assumption~\ref{app:assu:B_and_S} (iii) and taking expectations, we get
	\begin{align}
	\label{eq:app:thm:exist_uniq:m_iter}
		\E | \xi^{q,m+1}_t - \xi^{q,m}_t |^2
		\leq 2L^2(1+T) \int_{s}^{t}
		\E | \xi^{q,m}_v - \xi^{q,m-1}_v |^2 dv.
	\end{align}
	Now, for $m=0$, Assumption~\ref{app:assu:B_and_S} (ii) together with~\eqref{eq:app:thm:exist_uniq:phi_moment_cond} gives
	\begin{align}
		\label{eq:app:thm:exist_uniq:m_0}
		\E | \xi^{q,1}_t - \xi^{q,0}_t |^2
		\leq C \int_{s}^{t}
		\left(
			1 + \sup_{q\in Q} \E | \phi^q_v |^2
		\right) dv
		\leq C' (t - s).
	\end{align}
	Combining~\eqref{eq:app:thm:exist_uniq:m_iter} and~\eqref{eq:app:thm:exist_uniq:m_0}, we have
	\begin{align}
	\label{eq:app:thm:exist_uniq:2_norm_est}
		\E | \xi^{q,m+1}_t - \xi^{q,m}_t |^2
		\leq
		\tfrac{{[C(t-s)]}^{m+1}}{(m+1)!}, \qquad m\geq 0
	\end{align}
	for some $C>0$ that only depends on $T$, $L$ and $C_\phi := \sup_{q \in Q} \E\sup_{t\in[s,T]} | \phi^q_t |^2$.
	Moreover, Eq.~\eqref{eq:app:thm:exist_uniq:m_recurs} implies
	\begin{align*}
		\E \sup_{t\in [s, T]} | \xi^{q,m+1}_t - \xi^{q,m}_t |^2
		\leq&
		2L^2T \int_{s}^{T} \E | \xi^{q,m}_t - \xi^{q,m-1}_t |^2 dt \\
		&+
		2 \E \sup_{t\in[s,T]}
		{
			\left|
				\int_{s}^{t}
					S (\xi^{q,m}_v, v, q) - S (\xi^{q,m-1}_v, v, q)
				dW_v
			\right|
		}^2.
	\end{align*}
	Estimate~\eqref{eq:app:thm:exist_uniq:2_norm_est} implies the last stochastic integral is a martingale, and hence using Doob's maximal inequality and It\^{o}'s isometry, we have
	\begin{align*}
		\E \sup_{t\in [s, T]} | \xi^{q,m+1}_t - \xi^{q,m}_t |^2
		\leq&
		2L^2T \int_{s}^{T} \E | \xi^{q,m}_t - \xi^{q,m-1}_t |^2 dt
		+
		8L^2 \int_{s}^{T} \E | \xi^{q,m}_t - \xi^{q,m-1}_t |^2 dt \\
		\leq& 2L^2(T+4) \tfrac{C^m{T}^{m+1}}{(m+1)!}
		= 2L^2(T+4) \tfrac{C^m{T}^{m+1}}{(m+1)!}
	\end{align*}
	Applying Markov's inequality,
	\begin{align*}
		\sum_{m\geq 0} \PP
		\left[
			\sup_{t\in[s,T]} | \xi^{q,m+1}_t - \xi^{q,m}_t | > 2^{-m}
		\right]
		\leq  2L^2(T+4) \sum_{m\geq 0} 2^{2m} \tfrac{C^m{T}^{m+1}}{(m+1)!} < \infty.
	\end{align*}
	Thus, by the Borel-Cantelli lemma,
	\begin{align*}
		\PP \left[
			\sup_{t\in[s,T]} | \xi^{q,m+1}_t - \xi^{q,m}_t | > 2^{-m} \text{ infinitely often}
		\right] = 0,
	\end{align*}
	which immediately implies
	\begin{align*}
		\xi^{q,k}_t = \xi^{q,0}_t +
		\sum_{m=0}^{k-1} (
			\xi^{q,m+1}_t - \xi^{q,m}_t
		)
		\rightarrow \xi^q_t
		\quad
		a.s.
	\end{align*}
	uniformly in $t\in[s,T]$, for some limiting process $\xi^q_t$ which is necessarily continuous almost surely and $W_t$-adapted.
	Moreover, we also have convergence in $\Lcal^2(\Omega)$ uniformly in $t$. To see this, for each $k>l$ we observe that
	\begin{align*}
		\sup_{t\in[s,T]} {(\E | \xi^{k}_t - \xi^{l}_t |^2 )}^{\nicefrac{1}{2}}
		\leq&
		\sup_{t\in[s,T]} \sum_{m=l}^{k-1}
		{(\E | \xi^{q,m+1}_t - \xi^{q,m}_t |^2 )}^{\nicefrac{1}{2}} \\
		\leq&
		\sum_{m=l}^{\infty} \sqrt{2L^2(T+4) \tfrac{C^m{T}^{m+1}}{(m+1)!}}
		\overset{l\rightarrow\infty}{\longrightarrow} 0.
	\end{align*}
	And hence $\xi^{q,k}_t$ converges uniformly in $\Lcal^2(\Omega)$ to $\xi^q_t$ as $k\rightarrow\infty$ (the limit is the same as the a.s.\,limit since a sub-sequence of it must converge a.s.). This immediately implies via the Lipschitz condition and It\^{o}'s isometry that
	\begin{align*}
		& \E
		\left|
			\int_{s}^{T} B(\xi^{q,k}_t, t, \eta) - B(\xi^{q}_t, t, \eta) dt
		\right|^2
		\leq T^2 L^2
		\sup_{t\in[s,T]}
		\E \left|
			\xi^{q,k}_t - \xi^{q}_t
		\right|^2
		\rightarrow 0, \\
		& \E
		\left|
			\int_{s}^{T} S(\xi^{q,k}_t, t, \eta) - S(\xi^{q}_t, t, \eta) dW_t
		\right|^2
		\leq T L^2
		\sup_{t\in[s,T]}
		\left|
			\xi^{q,k}_t - \xi^{q}_t
		\right|^2
		\rightarrow 0.
	\end{align*}
	Thus, $\xi^{q}_t$ satisfies~\eqref{eq:app:thm:exist_uniq:main_sde}.

	We now show the estimate~\eqref{eq:app:thm:exist_uniq:main_estimate}. From Eq.~\eqref{eq:app:thm:exist_uniq:main_sde}, we have by It\^{o}'s isometry,
	\begin{align*}
		\E | \xi^{q}_t |^2
		\leq&
		3 \E | \phi^{q}_t |^2
		+ 3 \E
		\left|
			\int_{s}^{T} B(\xi^{q}_v,v,q) dv
		\right|^2
		+ 3 \E
		\left|
			\int_{s}^{t}
				S(\xi^{q}_v,v,q)
			dW_v
		\right|^2 \\
		\leq&
		3 C_\phi
		+ 3 T^2 L^2
		\int_{s}^{T}
			\E ( 1 + | \xi^{q}_v |^2 )
		dv
		+ 3 T L^2
		\int_{s}^{T}
			\E ( 1 + | \xi^{q}_v |^2 )
		dv
	\end{align*}
	Thus, by Gronwall's lemma, we have
	\begin{align}
	\label{eq:app:thm:exist_uniq:2_norm_gronwall}
		\E | \xi^q_t |^2 \leq C (1 + C_\phi)
	\end{align}
	for some $C>0$ depending only on $T,L$.
	Consequently, we have
	\begin{align}
	\label{eq:app:thm:exist_uniq:comb_1}
		\E \sup_{t\in[s,T]} | \xi^{q}_t |^2
		\leq&
		3 C_\phi
		+ 3 T^2 L^2
		\E
		\int_{s}^{T}
			1 + | \xi^{q}_s |^2
		dt \\
		&+ 3 \E \sup_{t\in[s,T]}
		\left|
			\int_{s}^{t}
				S(\xi^{q}_t,t,q)
			dW_t
		\right|^2.
	\end{align}
	Assumption~\ref{app:assu:B_and_S} (ii) and~\eqref{eq:app:thm:exist_uniq:2_norm_gronwall} implies the last stochastic integral is a martingale, and so by Doob's maximal inequality,
	\begin{align}
	\label{eq:app:thm:exist_uniq:comb_2}
		\left(
			\E \sup_{t\in[s,T]} \left|
				\int_{s}^{t} S(\xi_t,t,\eta) dW_t
			\right|
		\right)^2
		\leq& \int_{s}^{T}
		4 L^2 (1 + \E | \xi_t |^2) dt.
	\end{align}
	Combining~\eqref{eq:app:thm:exist_uniq:comb_1} and~\eqref{eq:app:thm:exist_uniq:comb_2}, we arrive at ~\eqref{eq:app:thm:exist_uniq:main_estimate}.

	Finally, we show uniqueness. Suppose that $\xi_t,\xi'_t$ are two solutions to~\eqref{eq:app:thm:exist_uniq:main_sde}. The same calculation as before shows that
	\begin{align*}
		\E | \xi_t - \xi'_t |^2
		\leq 2L^2T(1+T) \int_{s}^{t}
		\E | \xi_v - \xi'_v |^2 dv.
	\end{align*}
	and Gronwall's lemma implies
	\begin{align*}
		\E | \xi_t - \xi'_t |^2 \leq e^{2L^2T(1+T)} \E | \xi_s - \xi'_s |^2 = 0.
	\end{align*}
\end{proof}

\begin{theorem}
\label{app:thm:moment_estimates}
	Let us assume the same conditions as in Thm.~\ref{app:thm:exist_uniq} and for each $q\in Q$, let $\xi^q_t$ be the unique solution of~\eqref{eq:app:thm:exist_uniq:main_sde}. Let $m\geq 1$ and suppose $\sup_{q\in Q} \E \sup_{t\in[s,T]} | \phi^q_t |^{2m} < \infty$.
	Then, there exists a constant $C>0$ depending only on $L,T,m$ such that
	\begin{align*}
		& \sup_{q \in Q} \E \sup_{t\in[s,T]} | \xi^q_t |^{2m}
		\leq
		C \left(
			1 + \sup_{q\in Q} \E \sup_{t\in[s,T]} | \phi^q_t |^{2m}
		\right).
	\end{align*}
\end{theorem}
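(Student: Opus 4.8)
The plan is to repeat the argument of Thm.~\ref{app:thm:exist_uniq}, replacing the second-moment estimates there (obtained via It\^{o}'s isometry and Doob's maximal inequality) with their $2m$-th moment analogues, obtained via the Burkholder--Davis--Gundy (BDG) inequality. Fix $q\in Q$ and suppress it from the notation. Starting from $\xi_t = \phi_t + \int_s^t B(\xi_v,v,q)\,dv + \int_s^t S(\xi_v,v,q)\,dW_v$, I would use the elementary bound $|a+b+c|^{2m} \le 3^{2m-1}(|a|^{2m}+|b|^{2m}+|c|^{2m})$ together with Jensen's (or H\"older's) inequality on the time integrals to obtain, for $t\in[s,T]$,
\begin{align*}
	\E\sup_{u\in[s,t]} |\xi_u|^{2m}
	\le C\,\E\sup_{u\in[s,T]}|\phi_u|^{2m}
	+ C \int_s^t \E\,|B(\xi_v,v,q)|^{2m}\,dv
	+ C\,\E\sup_{u\in[s,t]}\Bigl| \int_s^u S(\xi_v,v,q)\,dW_v\Bigr|^{2m},
\end{align*}
with $C$ depending only on $T,m$. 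I would then bound the last term by BDG, $\E\sup_{u\in[s,t]}\bigl|\int_s^u S\,dW_v\bigr|^{2m} \le C_m\,\E\bigl(\int_s^t |S(\xi_v,v,q)|^2\,dv\bigr)^m \le C_m T^{m-1}\int_s^t \E|S(\xi_v,v,q)|^{2m}\,dv$, and finally use Assumption~\ref{app:assu:B_and_S}(ii) together with $(1+|x|^2)^m \le 2^{m-1}(1+|x|^{2m})$ to replace both $\E|B|^{2m}$ and $\E|S|^{2m}$ by $C L^{2m}(1 + \E|\xi_v|^{2m}) \le C L^{2m}(1 + \E\sup_{u\in[s,v]}|\xi_u|^{2m})$.

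Writing $\varphi(t) := \E\sup_{u\in[s,t]}|\xi_u|^{2m}$, this produces an integral inequality $\varphi(t) \le C_0 + C_1\int_s^t \varphi(v)\,dv$ in which $C_0, C_1$ depend only on $L, T, m$ and on $C_\phi := \sup_{q\in Q} \E\sup_{u\in[s,T]}|\phi_u^q|^{2m}$. Gronwall's lemma then gives $\varphi(T) \le C(1 + C_\phi)$, and since none of the constants depend on $q$, taking the supremum over $q\in Q$ yields the claimed bound. If one prefers to separate the two steps exactly as in Thm.~\ref{app:thm:exist_uniq}, one can first run the argument without the $\sup$ inside the expectations, Gronwall to get $\E|\xi_t|^{2m}\le C(1+C_\phi)$ pointwise in $t$, and then feed that estimate back in to control $\E\sup_{u\in[s,T]}|\xi_u|^{2m}$.

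The one genuine obstacle is that the chain of inequalities above — and in particular the use of BDG as a quantitative estimate — presupposes that $\varphi(t)$ and $\E\int_s^T|S(\xi_v,v,q)|^{2m}\,dv$ are finite, which is exactly what is to be proved; taken literally the argument is circular. The standard remedy is localization: introduce $\tau_R := \inf\{t\ge s: |\xi_t|\ge R\}\wedge T$, carry out the entire estimate with $\xi_{u\wedge\tau_R}$ in place of $\xi_u$ so that every integrand is bounded and the stopped stochastic integral is a genuine $\Lcal^{2m}$-martingale, note that $C_0, C_1$ are independent of $R$, and then let $R\to\infty$, using $\tau_R\to T$ a.s.\ (a consequence of the a.s.\ continuity of $\xi$ from Thm.~\ref{app:thm:exist_uniq}) and Fatou's lemma to recover the bound for $\sup_{u\in[s,T]}|\xi_u|^{2m}$. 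An alternative is to run the argument directly on the Picard iterates $\xi^{q,k}_t$ of the proof of Thm.~\ref{app:thm:exist_uniq}, which have finite moments of all orders, obtain a bound on $\sup_k \E\sup_u|\xi^{q,k}_u|^{2m}$ uniform in $k$, and pass to the limit via the $\Lcal^2$-convergence (hence a.s.\ convergence along a subsequence) established there, again invoking Fatou. Either way, once the localization is in place the remaining steps are routine and require no ideas beyond those already used for the $m=1$ case.
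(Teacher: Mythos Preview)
Your proposal is correct and essentially matches the paper's proof. The paper actually takes the two-step variant you describe at the end of your second paragraph: it first bounds $\E|\xi^q_t|^{2m}$ pointwise in $t$ via Gronwall, then feeds that back in (using Doob's $L^p$ inequality on the stochastic integral, now a genuine $L^{2m}$-martingale) to control $\E\sup_{t\in[s,T]}|\xi^q_t|^{2m}$. Where you invoke BDG, the paper cites an ``It\^{o}'s isometry (inequality version)'' giving $\E|\int_s^t S\,dW|^{2m}\le (m(2m-1))^m (t-s)^{m-1}\int_s^t \E|S|^{2m}\,dv$ and then Doob for the supremum; these are interchangeable here. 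Your discussion of the localization needed to justify the a~priori finiteness of $\varphi(t)$ is more careful than the paper's, which leaves that point implicit.
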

\begin{proof}
	We have
	\begin{align*}
		| \xi^{q}_t |^{2m}
		\leq&
		3^{2m-1} | \phi^{q}_t |^{2m}
		+ {(3(t-s))}^{2m-1} \int_{s}^{t}
		|
			B(\xi^{q}_v, v, q)
		|^{2m} dv \\
		&+ 3^{2m-1}
		\left|
			\int_{s}^{t}
				S(\xi^{q}_v, v, q)
			dW_v
		\right|^{2m}
	\end{align*}
	Taking expectations, using It\^{o}'s isometry (inequality version) and Gronwall's inequality, we obtain
	\begin{align*}
		\E | \xi^{q}_t |^{2m}
		\leq&
		3^{2m-1} \sup_{q\in Q} \E \sup_{t\in[s,T]} | \phi^{q}_t |^{2m}
		+ {(3(t-s))}^{2m-1} L^{2m} \int_{s}^{t} (1 + \E | \xi^{q}_v |^{2m}) dv \\
		&+
		3^{2m-1}{(m(2m-1))}^{m} (t-s)^{m-1} L^{2m} \int_{s}^{t} (1 + \E | \xi^{q}_v |^{2m}) dv \\
		\leq&
		C \left(
			1 + \sup_{q\in Q} \E \sup_{t\in[s,T]} | \phi^{q}_t |^{2m}
		\right) < \infty,
	\end{align*}
	with $C$ depending only on $L,T,m$. Next,
	\begin{align*}
		\E \sup_{t\in[s,T]}
		| \xi^{q}_t |^{2m}
		\leq&
		3^{2m-1} \sup_{q\in Q} \E \sup_{t\in[s,T]} | \phi^{q}_t |^{2m} \\
		&+ {3T}^{2m-1} \E
		\int_{s}^{T}
		|
			B(\xi^{q}_v, v, q)
		|^{2m} dv \\
		&+ 3^{2m-1}
		\E \sup_{t\in[s,T]}
		\left|
			\int_{s}^{t}
				S(\xi^{q}_v, v, q)
			dW_v
		\right|^{2m}
	\end{align*}
	Now, in the last term, the stochastic integral is a local martingale and so its absolute value is a submartingale, and hence the last term is bounded by
	\begin{align*}
	\E \sup_{t\in[s,T]}
	\left|
		\int_{s}^{t}
			S(\xi^{q}_v, v, q)
		dW_v
	\right|^{2m}
	\leq&
	\left|
	\int_{s}^{T}
		S(\xi^{q}_v, v, q)
	dW_v
	\right|^{2m} \\
	\leq& C 	\int_{s}^{T}
		| S(\xi^{q}_v, v, q) |^{2m}
	dv.
	\end{align*}
	Thus, using Thm.~\ref{app:thm:moment_estimates} and the linear growth condition, we conclude that
	\begin{align*}
		\sup_{q\in Q} \E \sup_{t\in[s,T]} | \xi^{q}_t |^{2m}
		\leq C
		\left(
			1 + \sup_{q\in Q} \E \sup_{t\in[s,T]} | \phi^q_t |^{2m}
		\right)
	\end{align*}
\end{proof}

Finally, we examine some limiting behavior of solutions $\xi^{q}_t$ as $q\rightarrow q^*$ for some $q^*\in Q$.

\begin{theorem}
\label{app:thm:limit_q}
	Let us assume the same conditions as in Thm.~\ref{app:thm:exist_uniq} and let $q^*\in Q$ be fixed. Suppose further that the following holds for any $t\in[s,T]$, $R>0$ and $\epsilon>0$:
	\begin{enumerate}[(i)]
		\item $\lim_{q\rightarrow q^*}
		\PP \left[
			\sup_{| x | \leq R}
			| B(x,t,q) - B(x,t,q^*) | > \epsilon
		\right] = 0$
		\item $\lim_{q\rightarrow q^*}
		\PP \left[
			\sup_{| x | \leq R}
			| S(x,t,q) - S(x,t,q^*) | > \epsilon
		\right] = 0$
		\item $\lim_{q\rightarrow q^*} \sup_{t\in[s,T]}
		\E | \phi^q_t - \phi^{q^*}_t |^2 = 0$
	\end{enumerate}
	Then, the solutions $\xi^q_t$ of~\eqref{eq:app:thm:exist_uniq:main_sde} satisfy
	\begin{align*}
		\lim_{q\rightarrow q^*} \sup_{t\in [s,T]} \E | \xi^{q}_t - \xi^{q^*}_t |^2 = 0,
	\end{align*}
	i.e. $\xi^q_t \rightarrow \xi^{q^*}_t$ in $\Lcal^2(\Omega)$ uniformly in $t\in[s,T]$.
\end{theorem}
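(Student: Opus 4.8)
The plan is to run the standard comparison-and-Gronwall argument for the SDE, with the only new ingredient being the passage to the limit in the parameter dependence of $B$ and $S$. First I would subtract the integral equations~\eqref{eq:app:thm:exist_uniq:main_sde} for $\xi^q_t$ and $\xi^{q^*}_t$ and split each coefficient difference into a \emph{state} part and a \emph{parameter} part, e.g.
\begin{align*}
	B(\xi^q_v,v,q) - B(\xi^{q^*}_v,v,q^*)
	= \big[\, B(\xi^q_v,v,q) - B(\xi^{q^*}_v,v,q) \,\big]
	+ \big[\, B(\xi^{q^*}_v,v,q) - B(\xi^{q^*}_v,v,q^*) \,\big],
\end{align*}
and similarly for $S$. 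The first bracket is bounded by $L\,|\xi^q_v - \xi^{q^*}_v|$ via the uniform Lipschitz condition in Assumption~\ref{app:assu:B_and_S}, while the second bracket is evaluated along the \emph{fixed} process $\xi^{q^*}$ and is precisely the quantity controlled by hypotheses (i)--(ii).

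Applying Cauchy--Schwarz to the drift integral and It\^o's isometry to the diffusion integral, and then taking expectations, I would obtain an estimate of the form
\begin{align*}
	\E | \xi^q_t - \xi^{q^*}_t |^2
	\leq C \Big( \delta_q + \int_{s}^{t} \E | \xi^q_v - \xi^{q^*}_v |^2 \, dv \Big),
	\qquad t\in[s,T],
\end{align*}
where $C$ depends only on $L,T$, and
\begin{align*}
	\delta_q := \sup_{t\in[s,T]} \E | \phi^q_t - \phi^{q^*}_t |^2
	+ \int_{s}^{T} \E | B(\xi^{q^*}_v,v,q) - B(\xi^{q^*}_v,v,q^*) |^2 \, dv
	+ \int_{s}^{T} \E | S(\xi^{q^*}_v,v,q) - S(\xi^{q^*}_v,v,q^*) |^2 \, dv .
\end{align*}
Since $\E|\xi^q_t|^2$ and $\E|\xi^{q^*}_t|^2$ are bounded on $[s,T]$ by Thm.~\ref{app:thm:exist_uniq}, Gronwall's lemma applies and yields $\sup_{t\in[s,T]} \E|\xi^q_t - \xi^{q^*}_t|^2 \leq C' \delta_q$, so the whole statement reduces to showing $\delta_q \to 0$ as $q\to q^*$.

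The first term of $\delta_q$ vanishes by hypothesis (iii). For the two integral terms I would use dominated convergence in $v\in[s,T]$: by the uniform linear growth in Assumption~\ref{app:assu:B_and_S} and the uniform second-moment bound $\sup_{t\in[s,T]}\E|\xi^{q^*}_t|^2<\infty$ coming from~\eqref{eq:app:thm:exist_uniq:main_estimate}, the integrands are bounded uniformly in $v$ and $q$ (e.g.\ by $4L^2(1+\E|\xi^{q^*}_v|^2)$), so it suffices to prove that for each fixed $v$, $\E|B(\xi^{q^*}_v,v,q) - B(\xi^{q^*}_v,v,q^*)|^2 \to 0$, and likewise for $S$. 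Fixing $v$, the family $\{|B(\xi^{q^*}_v,v,q) - B(\xi^{q^*}_v,v,q^*)|^2 : q\}$ is dominated by the single integrable random variable $4L^2(1+|\xi^{q^*}_v|^2)$, hence uniformly integrable, so it is enough to establish convergence to $0$ in probability; this I would get by truncation: for $R>0$,
\begin{align*}
	\PP\big[\, |B(\xi^{q^*}_v,v,q) - B(\xi^{q^*}_v,v,q^*)| > \epsilon \,\big]
	\leq \PP\Big[\, \sup_{|x|\leq R} |B(x,v,q) - B(x,v,q^*)| > \epsilon \,\Big]
	+ \PP\big[\, |\xi^{q^*}_v| > R \,\big],
\end{align*}
where the last term is made small by choosing $R$ large (Markov's inequality and the second-moment bound), and then the first term is made small by taking $q$ close to $q^*$ using hypothesis (i); the argument for $S$ is identical with hypothesis (ii).

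The step I expect to be the main obstacle is exactly this last one --- converting the ``uniform-on-compacts, in probability'' hypotheses (i)--(ii) into the $\Lcal^2$-type control of the parameter-perturbation integrals $\int_s^T \E|B(\xi^{q^*}_v,v,q)-B(\xi^{q^*}_v,v,q^*)|^2\,dv$ --- which requires the truncation together with the uniform-in-$v$ moment bound on $\xi^{q^*}$ and a uniform-integrability argument; everything else is the routine comparison/Gronwall machinery already used in the proof of Thm.~\ref{app:thm:exist_uniq}. I would also note in passing that the joint measurability of $(v,\omega)\mapsto B(\xi^{q^*}_v,v,q)(\omega)$ needed to make the time integrals meaningful follows from the adaptedness and sample-path continuity assumed in Assumption~\ref{app:assu:B_and_S}.
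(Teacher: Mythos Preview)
Your proposal is correct and follows essentially the same route as the paper: split each coefficient difference into a Lipschitz ``state'' part and a ``parameter'' part evaluated along the fixed process $\xi^{q^*}$, apply Gronwall, and then show the remaining parameter term vanishes via dominated convergence plus convergence in probability. Your truncation argument for the convergence in probability of $B(\xi^{q^*}_v,v,q)\to B(\xi^{q^*}_v,v,q^*)$ is in fact more explicit than the paper's, which simply asserts this step ``follows from assumption (i) together with the a.s.\ continuity of $B$''.
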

\begin{proof}
	We have
	\begin{align*}
		\xi^{q}_t - \xi^{q^*}_t
		=&
		\zeta^{q}_t
		+ \int_{s}^{t}
			B(\xi^{q}_v, v, q) - B(\xi^{q^*}_v, v, q)
		dv \\
		&+ \int_{s}^{t}
			S(\xi^{q}_v, v, q) - S(\xi^{q^*}_v, v, q)
		dW_v,
	\end{align*}
	where
	\begin{align*}
		\zeta^{q}_t :=&
		\phi^{q}_t - \phi^{q^*}_t
		+ \int_{s}^{t}
		B(\xi^{q^*}_v, v, q) - B(\xi^{q^*}_v, v, q^*)
		dv \\
		&+ \int_{s}^{t}
			S(\xi^{q^*}_v, v, q) - S(\xi^{q^*}_v, v, q^*)
		dW_v.
	\end{align*}
	Using the Lipschitz conditions,
	\begin{align*}
		\E | \xi^{q}_t - \xi^{q^*}_t |^2
		\leq 3 \E | \zeta^{q}_t |^2
		+ 6L^2 \int_{s}^{t} \E | \xi^{q}_v - \xi^{q^*}_v |^2 dv,
	\end{align*}
	which by Gronwall's lemma implies
	\begin{align*}
		\sup_{t\in[s,T]} \E | \xi^{q}_t - \xi^{q^*}_t |^2
		\leq 3e^{6L^2T} \sup_{t\in[s,T]} \E | \zeta^{q}_t |^2.
	\end{align*}
	Thus, it remains to show that $\sup_{t\in[s,T]} \E | \zeta^{q}_t |^2\rightarrow 0$ as $h\rightarrow 0$. Now,
	\begin{align*}
		\sup_{t\in[s,T]}\E | \zeta^{q}_t |^2
		\leq&
		3 \sup_{t\in[s,T]} \E | \phi^{q}_t - \phi^{q^*}_t |^2
		+ 3 T \int_{s}^{T}
			\E | B(\xi^{q^*}_v, v, q) - B(\xi^{q^*}_v, v, q^*) |^2
		dv \\
		&+ 3 \int_{s}^{T}
			\E | S(\xi^{q^*}_v, v, q) - S(\xi^{q^*}_v, v, q^*) |^2
		dv.
	\end{align*}
	For each $v\in[s,T]$, the assumption (i) together with the a.s.\,continuity of $B$ implies $B(\xi^{q^*}_v,v,q) \rightarrow B(\xi^{q^*}_v,v,q^*)$ in probability. Moreover, by Assumption~\ref{app:assu:B_and_S} (ii) the last integrand is bounded by $2L^2(1+ \sup_{v\in[s,T]}| \xi^{q^*}_v |^2)$, which is integrable. By the dominated convergence theorem, the integral vanishes in the limit $h\rightarrow 0$. A similar calculation shows the last integral also vanishes in the same limit. Together with (iii), we arrive at our assertion.
\end{proof}

\section{Derivatives with respect to initial condition}
\label{app:sec:deriv_x}

Let us denote by $\{ X^{x,s,q}_t : t\geq 0 \}$ the stochastic process defined by the SDE
\begin{align}
\label{app:eq:sde_deriv_x}
	dX^{x,s,q}_t &= b(X^{x,s,q}_t, q) dt + \sigma(X^{x,s,q}_t, q) dW_t,
	\quad t \in [s,T], \nonumber \\
	X^{x,s,q}_s &= x.
\end{align}
As in the previous section, $q\in Q$ where $Q$ is a subset of a Euclidean space. Throughout this section, we assume the following:
\begin{assumption}
\label{assu:b_and_sigma}
	The (non-random) functions $b,\sigma$ satisfy
	\begin{enumerate}
		\item Uniform linear growth condition
		\begin{align*}
			| b(x, q) |^2 + | \sigma(x, q) |^2 \leq L^2 ( 1 + |x|^2)
		\end{align*}
		for all $x\in \R^d$, $q\in Q$.
		\item Uniform Lipschitz condition
		\begin{align*}
			| b(x, q) - b(y, q) | + | \sigma(x, q) - \sigma(y, q) | \leq L |x - y|
		\end{align*}
		for all $x, y\in \R^d$, $q\in Q$.
	\end{enumerate}
\end{assumption}

With the above assumptions, by Thm.~\ref{app:thm:exist_uniq} the SDE~\eqref{app:eq:sde_deriv_x} admits a unique solution. The focus of this section is to derive the SDEs that characterize the derivatives of $X^{x,s,q}_t$ with respect to $x$, the initial condition. In doing so, we will make use the results proved in Sec.~\ref{app:sec:general_SDE_thms}.

\begin{definition}
	Let $\Psi:\R^d\rightarrow\R$ and $\psi:\R^d\rightarrow\R^d$ be random functions and suppose for each $i=1,\dots,d$,
	\begin{align*}
		\lim_{h\rightarrow 0}
		\E
		\left| \tfrac{1}{h} [
			\Psi(x_{(1)},\dots,x_{(i-1)},x_{(i)}+h,x_{(i+1)},\dots,x_{(d)}) - \psi_i(x_{(1)},\dots,x_{(d)})
		]
		- \psi_{(i)}(x)
		\right|^2
		= 0.
	\end{align*}
	Then, we call $\psi$ the derivative (in the $\Lcal^2(\Omega)$ sense) of $\Psi$ and write $\partial_{(i)}\Psi = \psi_{(i)}$, or $\nabla \Psi = \psi$. For multidimensional $\Psi$, we similarly define the derivative element-wise. Note that the derivative is almost surely unique, if it exists.
\end{definition}

\begin{lemma}
\label{app:lem:d1x}
	Let $s\in[0,T)$, $q\in Q$ and suppose that $b$ and $\sigma$ are continuously differentiable with respect to $x$. Then, $\nabla X^{x,s,q}_t$ exists and if we write $\xi^{x,s,q}_{(i,j),t} := \partial_{(j)} X^{x,s,q}_{(i),t}$, then it satisfies the linear random-coefficient stochastic differential equation
	\begin{align}
	\label{app:eq:sde_d1x}
		\xi^{x,s,q}_{(i,j),t}
		= \delta_{(i,j)}
		+ \int_{s}^{t} \xi^{x,s,q}_{(k,j),v} \partial_{(k)} {b(X^{x,s,q}_v, v, q)}_{(i)} dv
		+ \int_{s}^{t} \xi^{x,s,q}_{(k,j),v} \partial_{(k)} {\sigma(X^{x,s,q}_v, v, q)}_{(i,l)} dW_{(l),v},
	\end{align}
	where $\delta$ is the usual Kronecker delta.
	Moreover, we have
	\begin{align*}
		\sup_{q\in Q} \E \sup_{t\in[s,T]} | \xi^{x,s,q}_t |^{2m} < \infty
	\end{align*}
	for all $m\geq 1$.
\end{lemma}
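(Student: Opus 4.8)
The plan is to first build the candidate derivative process $\xi^{x,s,q}$ directly as the unique solution of the linear SDE~\eqref{app:eq:sde_d1x}, and then to verify that it really is the $\Lcal^2$-derivative of $X^{x,s,q}_t$ by a difference-quotient argument combined with Gronwall's inequality.

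\emph{Construction of $\xi$ and its moments.} Because $b,\sigma$ are continuously differentiable in $x$ and, by Assumption~\ref{assu:b_and_sigma}, globally Lipschitz with constant $L$, their derivatives $\partial_{(k)}{b(\cdot,q)}$ and $\partial_{(k)}{\sigma(\cdot,q)}$ are continuous and bounded by $L$ (up to a dimensional factor), uniformly in $q\in Q$. Hence, for fixed $x,s,q$, the maps $(\xi,t)\mapsto \xi_{(k,j)}\partial_{(k)}{b(X^{x,s,q}_t,q)}_{(i)}$ and $(\xi,t)\mapsto\xi_{(k,j)}\partial_{(k)}{\sigma(X^{x,s,q}_t,q)}_{(i,l)}$ are $W_t$-adapted, a.s.\,continuous in $(\xi,t)$, and satisfy the uniform linear growth and Lipschitz conditions of Assumption~\ref{app:assu:B_and_S} in the variable $\xi$, with constants depending only on $L,d$. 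I would therefore invoke Thm.~\ref{app:thm:exist_uniq}, with the (deterministic, hence square-integrable) ``initial data'' $\phi \equiv \delta_{(i,j)}$, to obtain a unique solution $\xi^{x,s,q}_t$, and then Thm.~\ref{app:thm:moment_estimates} to conclude $\sup_{q\in Q}\E\sup_{t\in[s,T]}|\xi^{x,s,q}_t|^{2m}<\infty$ for every $m\geq1$. This already gives the last assertion of the lemma.

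\emph{Identifying $\xi$ with the derivative.} Fix $j$, let $h\neq0$, and set $\eta^h_t:=h^{-1}(X^{x+he_j,s,q}_t-X^{x,s,q}_t)$. Subtracting the two SDEs and applying the fundamental theorem of calculus componentwise, $\eta^h$ solves the linear SDE
\[
	\eta^h_{(i),t}=\delta_{(i,j)}+\int_s^t a^h_{(i,k),v}\,\eta^h_{(k),v}\,dv+\int_s^t c^h_{(i,k,l),v}\,\eta^h_{(k),v}\,dW_{(l),v},
\]
where $a^h_{(i,k),v}:=\int_0^1\partial_{(k)}{b\big(X^{x,s,q}_v+\theta(X^{x+he_j,s,q}_v-X^{x,s,q}_v),q\big)}_{(i)}\,d\theta$ and $c^h$ is defined analogously from $\sigma$, so that $|a^h|,|c^h|\leq L$. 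I would first record, via a short Gronwall estimate on the difference of the two state processes (or directly from Thm.~\ref{app:thm:limit_q}), that $\sup_{t\in[s,T]}\E|X^{x+he_j,s,q}_t-X^{x,s,q}_t|^2\leq Ch^2\to0$. Then, writing $\zeta^h_t:=\eta^h_{(\cdot),t}-\xi^{x,s,q}_{(\cdot,j),t}$ and subtracting~\eqref{app:eq:sde_d1x}, each integrand splits as $a^h\eta^h-(\partial b)\xi=a^h\zeta^h+(a^h-\partial b)\xi$ (similarly for the diffusion term), so by It\^o's isometry and Gronwall's lemma,
\[
	\sup_{t\in[s,T]}\E|\zeta^h_t|^2\leq C\Big(\E\textstyle\int_s^T|r^h_v|^2\,dv+\E\textstyle\int_s^T|\tilde r^h_v|^2\,dv\Big),
\]
with $r^h_{(i),v}:=\big(a^h_{(i,k),v}-\partial_{(k)}{b(X^{x,s,q}_v,q)}_{(i)}\big)\xi^{x,s,q}_{(k,j),v}$ and $\tilde r^h$ the analogue for $\sigma$.

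\emph{Main obstacle.} The crux is showing these error terms vanish as $h\to0$, and I expect this limiting step to be the most delicate part of the argument. Since $X^{x+he_j,s,q}_v\to X^{x,s,q}_v$ in probability for each $v$ and $\partial b(\cdot,q)$ is continuous and bounded, dominated convergence in $\theta$ gives $a^h_{(i,k),v}\to\partial_{(k)}{b(X^{x,s,q}_v,q)}_{(i)}$ in probability; combined with the uniform bound $|r^h_v|^2\leq 4L^2|\xi^{x,s,q}_v|^2$, which is integrable over $[s,T]\times\Omega$ by the moment estimate from the first step, a further application of the dominated convergence theorem yields $\E\int_s^T|r^h_v|^2\,dv\to0$, and likewise for $\tilde r^h$. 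Hence $\sup_{t\in[s,T]}\E|\eta^h_t-\xi^{x,s,q}_{(\cdot,j),t}|^2\to0$, which is exactly the statement that $\partial_{(j)}X^{x,s,q}_{(i),t}=\xi^{x,s,q}_{(i,j),t}$ in the $\Lcal^2(\Omega)$ sense. The difficulty lies in marrying the a priori moment control on $\xi$ with the mere continuity (not uniform continuity) of $\nabla b,\nabla\sigma$; the Gronwall bookkeeping, and the fact that all constants depend only on $L,T,d$ and hence are uniform in $q$, are then routine.
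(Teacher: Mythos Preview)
Your proposal is correct and follows essentially the same route as the paper: write the difference quotient $\eta^h_t=h^{-1}(X^{x+he_j,s,q}_t-X^{x,s,q}_t)$ as the solution of a linear SDE with averaged coefficients $a^h,c^h$, use the uniform Lipschitz bound to place everything in the framework of Assumption~\ref{app:assu:B_and_S}, and then pass to the limit $h\to0$ using the $\Lcal^2$-continuity $X^{x+he_j,s,q}_t\to X^{x,s,q}_t$ together with moment control on the solution. The only packaging difference is that the paper treats both $\eta^h$ and the target $\xi$ as members of a single family of SDEs indexed by $(q,h)\in Q\times[0,1]$ and invokes the abstract parameter-continuity result Thm.~\ref{app:thm:limit_q} to obtain $\eta^h\to\xi$ in $\Lcal^2$, whereas you construct $\xi$ first and then carry out the Gronwall/dominated-convergence argument on $\zeta^h=\eta^h-\xi$ by hand; your argument is precisely the content of the proof of Thm.~\ref{app:thm:limit_q} specialized to this situation.
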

\begin{proof}
	Let $j$ be fixed and $h^j$ be a $d$-dimensional vector of $0$'s except the $j^\text{th}$ coordinate where it is equal $h^j_{(j)} = h\neq 0$. Then, we have
	\begin{align*}
		\tfrac{1}{h} (X^{x+h^j,s,q}_{(i),t} - X^{x,s,q}_{(i),t})
		=& \delta_{(i,j)} +
		\tfrac{1}{h} \int_{s}^{t} b(X^{x+h^j,s,q}_v, q)_{(i)} - b(X^{x,s,q}_v, q)_{(i)} dv\\
		&+ \tfrac{1}{h} \int_{s}^{t} \sigma(X^{x+h^j,s,q}_v, q)_{(i,l)} - \sigma(X^{x,s,q}_v, q)_{(i,l)} dW_{(l),v}.
	\end{align*}
	But,
	\begin{align*}
		&\tfrac{1}{h} \int_{s}^{t} b(X^{x+h^j,{(i),s}}_v, q)_{(i)} - b(X^{x,s,q}_{v}, q)_{(i)} dv \\
		=& \int_{0}^{1}
				\int_{s}^{t} \tfrac{1}{h}(X^{x+h^j,s,q}_{(k),v} - X^{x,s,q}_{(k),v})
					\partial_{(k)} {b ( \lambda X^{x+h^j,s,q}_v + (1-\lambda) X^{x,s,q}_v )}_{(i)}
				dv
		d\lambda,
	\end{align*}
	and similarly,
	\begin{align*}
		&\tfrac{1}{h} \int_{s}^{t} {\sigma(X^{x+h^j,s,q}_v, q)}_{(i,l)}
		- {\sigma(X^{x,s,q}_v, q)}_{(i,l)} dW_{(l),v} \\
		=& \int_{0}^{1}
				\int_{s}^{t} \tfrac{1}{h}(X^{x+h^j,s,q}_{(k),v} - X^{x,s,q}_{(k),v})
					\partial_{(k)}
					{\sigma ( \lambda X^{x+h^j,s,q}_v + (1-\lambda) X^{x,s,q}_v )}_{(i,l)}
				dW_{(l),v}
		d\lambda.
	\end{align*}
	Therefore, $\xi^{x,s,q,h}_{t} := \tfrac{1}{h}(X^{x+h^j,s,q}_t - X^{x,s,q}_t)$ satisfies~\eqref{eq:app:thm:exist_uniq:main_sde} with $Q \times [0,1]$ in place of $Q$, $\phi^{q,h}_{t,(i,j)} = \delta_{(i,j)}$ and
	\begin{align*}
		&{B(z,t,q, h)}_{(i)} = z_{(k)}
		\int_{0}^{1} \int_{s}^{t} \partial_{(k)}
		{b ( \lambda X^{x+h^j,s,q}_v + (1 - \lambda) X^{x+h^j,s,q}_v , q)}_{(i)} dv d\lambda, \\
		&{S(z,t,q, h)}_{(i)} = z_{(k)}
		\int_{0}^{1} \int_{s}^{t} \partial_{(k)}
		{\sigma ( \lambda X^{x+h^j,s,q}_v + (1 - \lambda) X^{x+h^j,s,q}_v , q)}_{(i,l)}
		dW_{(l), v} d\lambda,
	\end{align*}
	if $h>0$. If $h<0$ we simply consider $-h$ on the left hand side instead and the proof is identical. Furthermore, the uniform Lipschitz conditions on $b,\sigma$ implies bounded derivatives and so we may apply Thm.~\ref{app:thm:exist_uniq} to conclude that there is a process $\xi^{0,q}_t$ satisfying~\eqref{eq:app:thm:exist_uniq:main_sde} with $h=0$, i.e. satisfies~\eqref{app:eq:sde_d1x}.

	It remains to show $\xi^{q,h}_t \rightarrow \xi^{q, 0}_t$ in $\Lcal^2(\Omega)$ uniformly in $t\in[s,T]$, which amounts to checking conditions (i)-(iii) in Thm.~\ref{app:thm:limit_q}, with $q^* = (q, 0)$. The last condition (iii) is trivially satisfied. As for the first two, it is enough to show that $X^{x+h^j,s,q}_t \rightarrow X^{x,s,q}_t$ in $\Lcal^2(\Omega)$ as $h\rightarrow 0$, uniformly in $x$, which follows from the straightforward estimate
	\begin{align*}
		\E | X^{x,s,q}_t - X^{x+h^j,s,q}_t |^2
		\leq& 3 h^2 + C \int_{s}^{t} \E | X^{x,s,q}_v - X^{x+h^j,s,q}_v |^2 dv \\
		\leq& C'h^2.
	\end{align*}
	Now, we may apply Thm.~\ref{app:thm:limit_q} to deduce the satisfaction of the SDE. Finally, the last moment estimate follows from Thm.~\ref{app:thm:moment_estimates}.
\end{proof}

Let us now extend the above result to higher order derivatives. As before, we denote the order $\alpha$ partial derivative of $\Psi$ in the $\Lcal^2(\Omega)$ sense by
\begin{align*}
	\partial^\alpha_{(J)} \Psi
	\equiv \partial^\alpha_{(j_1,\dots,j_\alpha)} \Psi
\end{align*}
where $J$ is an order $\alpha$ multi-index.

\begin{lemma}
\label{app:lem:d2x}
	Suppose that $b,\sigma \in G^{2}$. Then, for each $i,j_1,j_2\in \{ 1, \dots, d \}$, the derivative
	$\xi^{2,x,s,q}_{(i,j_1,j_2),t} := \partial^2_{(j_1,j_2)} X^{x,s,q}_{(i),t}$ exists and is the unique solution of the linear random-coefficient stochastic differential equation
	\begin{align}
	\label{app:eq:lem:d2x:sde}
		\xi^{2,x,s,q}_{(i,j_1,j_2),t}
		=&
		\int_{s}^{t}
		\partial^2_{(k_1,k_2)} {b(X^{x,s,q}_v, q)}_{(i)}
		\xi^{1,x,s,q}_{(k_1,j_1),v} \xi^{1,x,s,q}_{(k_2,j_2),v} dv \\
		&+ \int_{s}^{t}
		\partial^2_{(k_1,k_2)} {\sigma(X^{x,s,q}_v, q)}_{(i,l)}
		\xi^{1,x,s,q}_{(k_1,j_1),v} \xi^{1,x,s,q}_{(k_2,j_2),v} dW_{(l),v} \\
		&+ \int_{s}^{t}
		\partial_{(k)} {b(X^{x,s,q}_v, q)}_{(i)}
		\xi^{2,x,s,q}_{(k,j_1,j_2),v} dv \\
		&+ \int_{s}^{t}
		\partial_{(k)} {\sigma(X^{x,s,q}_v, q)}_{(i,l)}
		\xi^{2,x,s,q}_{(k,j_1,j_2),v} dW_{(l),v}
	\end{align}
	where $\xi^{1,x,s,q}_{(i,j),t} := \partial_{(j)} X^{x,s,q}_{(i),t}$ is the first derivative. Moreover,
	for each $m\geq 1$, we have $\E \sup_{t\in[s,T]} | \xi^{2,x,s,q}_t |^{2m} \in G$, i.e.
	\begin{align}
	\label{app:eq:lem:d2x:moment}
		\sup_{q \in Q, s\in[0,T]} \E \sup_{t\in[s,T]} | \xi^{2,x,s,q}_t |^{2m}
		\leq \kappa_1 (1 + |x|^{2\kappa_2})
	\end{align}
\end{lemma}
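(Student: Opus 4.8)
The plan is to iterate the argument of Lem.~\ref{app:lem:d1x}, now differentiating the first-derivative process $\xi^{1,x,s,q}_{(i,j_1),\cdot}$ a second time in the initial condition. Fix $j_2$ and let $h^{j_2}$ be the vector equal to $h\neq 0$ in coordinate $j_2$ and zero elsewhere, and set $\zeta^{h}_{(i,j_1,j_2),t} := \tfrac{1}{h}\big(\xi^{1,x+h^{j_2},s,q}_{(i,j_1),t} - \xi^{1,x,s,q}_{(i,j_1),t}\big)$. Subtracting the integral equations~\eqref{app:eq:sde_d1x} at $x+h^{j_2}$ and $x$ and writing each difference of products as $\tfrac{1}{h}(ab-a'b') = \tfrac{1}{h}(a-a')\,b + a'\,\tfrac{1}{h}(b-b')$ with $a=\xi^{1,x+h^{j_2}}_{(k,j_1),v}$ and $b=\partial_{(k)}{b(X^{x+h^{j_2}}_v,q)}_{(i)}$ (and likewise for the $\sigma$-term), the first summand contributes a term linear in $\zeta^h$ with the bounded coefficient $\partial_{(k)}{b(X^{x+h^{j_2}}_v,q)}_{(i)}$, while in the second summand I would expand $\partial_{(k)}{b(X^{x+h^{j_2}}_v,q)}_{(i)}-\partial_{(k)}{b(X^{x,s,q}_v,q)}_{(i)}$ by the fundamental theorem of calculus as $\int_0^1 \partial^2_{(k,k')}{b(\lambda X^{x+h^{j_2}}_v+(1-\lambda)X^{x,s,q}_v,q)}_{(i)}(X^{x+h^{j_2}}_{(k'),v}-X^{x,s,q}_{(k'),v})\,d\lambda$; dividing by $h$ and using Lem.~\ref{app:lem:d1x} identifies the limiting forcing as $\partial^2_{(k,k')}{b(X^{x,s,q}_v,q)}_{(i)}\,\xi^{1,x,s,q}_{(k,j_1),v}\,\xi^{1,x,s,q}_{(k',j_2),v}$. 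Thus $\zeta^h$ solves an equation of the form~\eqref{eq:app:thm:exist_uniq:main_sde} over the parameter set $Q$ augmented by $h$, with state-linear drift/diffusion coefficients (Lipschitz and of linear growth uniformly, since $b,\sigma$ uniformly Lipschitz have bounded first derivatives) and an $h$-dependent forcing/initial process $\phi^{q,h}$.

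I would then apply Thm.~\ref{app:thm:exist_uniq} with the parameter $h$ set formally to $0$ to obtain the candidate limit $\xi^{2,x,s,q}$ solving~\eqref{app:eq:lem:d2x:sde}, and verify the hypotheses (i)--(iii) of Thm.~\ref{app:thm:limit_q} at $q^\ast=(q,0)$. Conditions (i)--(ii) reduce, for $|z|\le R$, to $\partial_{(k)}{b(X^{x+h^{j_2}}_t,q)}\to\partial_{(k)}{b(X^{x,s,q}_t,q)}$ and the analogue for $\sigma$ in probability, which follows from continuity of $\nabla b,\nabla\sigma$ (since $b,\sigma\in G^{2}$) together with $X^{x+h^{j_2}}_t\to X^{x,s,q}_t$ (a Gronwall estimate exactly as in Lem.~\ref{app:lem:d1x}). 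Condition (iii), $\sup_t\E|\phi^{q,h}_t-\phi^{q,0}_t|^2\to0$, combines the $\Lcal^2$-convergence of $\tfrac{1}{h}(X^{x+h^{j_2}}_v-X^{x,s,q}_v)$ to $\xi^{1,x,s,q}_{(k',j_2),v}$ from Lem.~\ref{app:lem:d1x}, continuity of $\nabla^2 b,\nabla^2\sigma$, and the dominated convergence theorem, the dominating functions coming from the polynomial growth of $\nabla^2 b,\nabla^2\sigma$ and the uniform moment bounds on $X^{x,s,q}$, $\xi^{1,x,s,q}$ and $\tfrac{1}{h}(X^{x+h^{j_2}}-X^{x,s,q})$ supplied by Thm.~\ref{app:thm:moment_estimates} and Lem.~\ref{app:lem:d1x}. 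This gives $\zeta^h\to\xi^{2,x,s,q}$ in $\Lcal^2(\Omega)$ uniformly in $t$, so that $\xi^{2,x,s,q}_{(i,j_1,j_2),t}=\partial^2_{(j_1,j_2)} X^{x,s,q}_{(i),t}$ in the $\Lcal^2$ sense, and uniqueness is part of Thm.~\ref{app:thm:exist_uniq}.

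For~\eqref{app:eq:lem:d2x:moment}, I would regard~\eqref{app:eq:lem:d2x:sde} as an equation with state-linear (hence uniformly Lipschitz, linear-growth) coefficients and forcing/initial term $\phi^q_t=\int_s^t\partial^2 b(X^{x,s,q}_v,q)\,\xi^{1,x,s,q}_v\xi^{1,x,s,q}_v\,dv+\int_s^t\partial^2\sigma(X^{x,s,q}_v,q)\,\xi^{1,x,s,q}_v\xi^{1,x,s,q}_v\,dW_v$, so that Thm.~\ref{app:thm:moment_estimates} yields $\sup_{q,s}\E\sup_t|\xi^{2,x,s,q}_t|^{2m}\le C\big(1+\sup_{q,s}\E\sup_t|\phi^q_t|^{2m}\big)$. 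It then remains to bound $\E\sup_t|\phi^q_t|^{2m}$: applying the It\^{o}-isometry moment estimate (inequality version) and H\"older's inequality separates $|\partial^2 b(X^{x,s,q}_v,q)|$ from higher powers of $\xi^{1,x,s,q}$; since $b\in G^{2}$, the first factor is controlled by $\kappa_1(1+|X^{x,s,q}_v|^{2\kappa_2})$, whose moments are of the form $\kappa_1(1+|x|^{2\kappa_2})$ by Thm.~\ref{app:thm:moment_estimates}, while the remaining factors have finite moments of every order uniformly in $q,s$ by Lem.~\ref{app:lem:d1x}; the product is again of polynomial growth in $x$, which is precisely~\eqref{app:eq:lem:d2x:moment}.

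The chain-rule bookkeeping — tracking which path each coefficient is evaluated on and keeping every estimate uniform in $q$, $s$ and the auxiliary parameter $h$ — is routine but lengthy. The genuinely delicate point is the moment control of the forcing term $\phi^q$: because $\nabla^2 b,\nabla^2\sigma$ are only of polynomial (not bounded) growth, one must know that $X^{x,s,q}$ and $\xi^{1,x,s,q}$ have finite moments of \emph{every} order, uniformly in the parameters, before H\"older's inequality can be used to close the argument and conclude $\phi^q\in G$ in the required uniform sense — and this is exactly what Thm.~\ref{app:thm:moment_estimates} and Lem.~\ref{app:lem:d1x} supply.
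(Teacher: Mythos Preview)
Your proposal is correct and follows essentially the same strategy as the paper: cast the difference quotient of $\xi^{1}$ as an SDE of the form~\eqref{eq:app:thm:exist_uniq:main_sde} over an augmented parameter set, invoke Thm.~\ref{app:thm:exist_uniq} for the candidate limit and Thm.~\ref{app:thm:limit_q} for convergence, and then bound $\E\sup_t|\phi^q_t|^{2m}$ by H\"older/Cauchy--Schwarz using the polynomial growth of $\nabla^2 b,\nabla^2\sigma$ together with the uniform moment estimates on $X^{x,s,q}$ (Thm.~\ref{app:thm:moment_estimates}) and on $\xi^{1,x,s,q}$ (Lem.~\ref{app:lem:d1x}). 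The paper presents these steps in the opposite order---first the moment bound on $\phi$ and well-posedness of~\eqref{app:eq:lem:d2x:sde}, then a terse ``proceed analogously as in the proof of Lem.~\ref{app:lem:d1x}'' for the derivative identification---but the content is the same, and you have in fact spelled out the verification of hypotheses (i)--(iii) of Thm.~\ref{app:thm:limit_q} more explicitly than the paper does.
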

\begin{proof}
	Let us denote
	\begin{align*}
		\phi^{x,s,q}_{(i,j_1,j_2),t} =& \int_{s}^{t}
		\partial^2_{(k_1,k_2)} {b(X^{x,s,q}_v, q)}_{(i)}
		\xi^{1,x,s,q}_{(k_1,j_1),v} \xi^{1,x,s,q}_{(k_2,j_2),v} dv \\
		&+ \int_{s}^{t}
		\partial^2_{(k_1,k_2)} {\sigma(X^{x,s,q}_v, q)}_{(i,l)}
		\xi^{1,x,s,q}_{(k_1,j_1),v} \xi^{1,x,s,q}_{(k_2,j_2),v} dW_{(l),v}.
	\end{align*}
	Note that by Lem.~\ref{app:lem:d1x}, $\E \sup_{t\in[s,T]} | \xi^{1,x,s,q}_t |^{2m}$ is finite for any $m\geq 1$. Then, proceeding as in the proof of Lem.~\ref{app:lem:d1x}, we have
	\begin{align*}
		& \E \sup_{t\in[s,T]} | \phi^{x,s,q}_t |^2 \\
		\leq&
		C \E \sup_{t\in[s,T]}
		\left(
			| \nabla^2 b (X^{x,s,q}_t,q) |^2
			+ | \nabla^2 \sigma (X^{x,s,q}_t,q) |^2
		\right)
		| \xi^{1,x,s,q}_t |^4 \\
		\leq&
		C
		\left[
			\E \sup_{t\in[s,T]}
			\left(
				| \nabla^2 b (X^{x,s,q}_t,q) |^2
				+ | \nabla^2 \sigma (X^{x,s,q}_t,q) |^2
			\right)^2
		\right]^{1/2}
		{[\E \sup_{t\in[s,T]}
		| \xi^{1,x,s,q}_t |^8]}^{\nicefrac{1}{2}} \\
	\end{align*}
	Here, $C$ is independent of $q$ and $s$.
	From the above, using the assumption that $b,\sigma \in G^2$, and the moment estimate in Thm.~\ref{app:thm:moment_estimates} on $X^{x,s,q}_t$, we conclude that
	\begin{align*}
		\sup_{q\in Q, s \in [0,T]} \E \sup_{t\in[s,T]} | \phi^{x,s,q}_t |^2
		\leq \kappa_1 (1 + | x |^{2\kappa})
	\end{align*}
	thus~\eqref{app:eq:lem:d2x:sde} admits a unique solution by Thm.~\ref{app:thm:exist_uniq}, and the solution $\xi^{2,x,s,q}_t$ satisfies the same estimate. Moreover, the estimate above holds for any $2m$ power for $m\geq 1$ by a similar calculation, which shows that $\E \sup_{t\in[s,T]} | \xi^{2,x,s,q}_t |^{2m} \in G$.

	Finally, To show that $\xi^{2,x,s,q}_t$ is the second derivative of $X^{x,s,q}_t$ with respect to $x$, we proceed analogously as in the proof of~\ref{app:lem:d1x}, thanks to estimate~\eqref{app:eq:lem:d2x:moment} and polynomial growth conditions, all the estimates required for interchanging the derivative and the integral signs are satisfied, so the equation for $\xi^{2,x,s,q}_t$ is obtained by formally differentiating under the integral sign with respect to $x$, which is precisely~\eqref{app:eq:lem:d2x:sde}.
\end{proof}

\begin{lemma}
\label{app:lem:dax}
	For each $\alpha\geq 1$, suppose that $b,\sigma \in G^{\alpha+1}$, then, the derivative $\nabla^{\alpha+1} X^{x,s,q}_t$ exists and is the unique a.s.\,continuous solution of the linear random-coefficient SDE
	\begin{align}
	\label{app:eq:lem:dax:sde}
		\xi^{\alpha+1,x,s,q}_{(i,J), t} =& \phi^{x,s,q}_{(i,J), t}
		+ \int_{s}^{t}
		\partial_{(k)} {b(X^{x,s,q}_v, q)}_{(i)}
		\xi^{\alpha+1,x,s,q}_{(k,J), t} dv \\
		&+ \int_{s}^{t}
		\partial_{(k)} {\sigma(X^{x,s,q}_v, q)}_{(i,l)}
		\xi^{\alpha+1,x,s,q}_{(k,J), t} dW_{(l),v},
	\end{align}
	where $J$ is a multi-index of order $\alpha+1$ and $\phi^{x,s,q}_t$ is an a.s.\,continuous stochastic process satisfying $\E \sup_{t\in[s,T]} | \phi^{x,s,q}_t |^{2m} \in G$ for all $m\geq 1$.
	In fact,~\eqref{app:eq:lem:dax:sde} is obtained by formally differentiating~\eqref{app:eq:sde_d1x} under the integral sign $\alpha$ times.
	Moreover, we have $\E \sup_{t\in[s,T]} | \xi^{\alpha+1,x,s,q}_t |^{2m} \in G$ for all $m\geq 1$.
\end{lemma}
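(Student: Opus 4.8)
The plan is to argue by induction on $\alpha$. The base case $\alpha=1$ is precisely Lemma~\ref{app:lem:d2x}: there $\xi^{2,x,s,q}_t$ is shown to exist as the $\Lcal^2$-derivative $\nabla^2 X^{x,s,q}_t$, to solve a linear random-coefficient SDE of the form~\eqref{app:eq:lem:dax:sde} with the linear part driven by $\partial_{(k)}b_{(i)}$, $\partial_{(k)}\sigma_{(i,l)}$ and forcing $\phi$ built from $\nabla^2 b,\nabla^2\sigma$ and products of first derivatives $\xi^{1,x,s,q}$, and to satisfy $\E\sup_t|\phi^{x,s,q}_t|^{2m},\,\E\sup_t|\xi^{2,x,s,q}_t|^{2m}\in G$. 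For the inductive step, assume the statement holds for $\alpha-1$, so that $\xi^{\alpha,x,s,q}_t:=\nabla^\alpha X^{x,s,q}_t$ exists, is a.s.\ continuous, solves the linear SDE~\eqref{app:eq:lem:dax:sde} at order $\alpha$ with a forcing $\phi^{\alpha,x,s,q}_t$ obeying $\E\sup_t|\phi^{\alpha,x,s,q}_t|^{2m}\in G$, and $\E\sup_t|\xi^{\alpha,x,s,q}_t|^{2m}\in G$ for every $m$.

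To produce $\xi^{\alpha+1}$ I would formally differentiate the order-$\alpha$ equation once more in $x$. Differentiating a coefficient $\partial_{(k)}b_{(i)}(X^{x,s,q}_v,q)$ with respect to $x_{(j)}$ gives, by the chain rule and Lemma~\ref{app:lem:d1x}, the term $\partial^2_{(k',k)}b_{(i)}(X^{x,s,q}_v,q)\,\xi^{1,x,s,q}_{(k',j),v}$, and likewise for $\sigma$; hence differentiating the linear term $\partial_{(k)}b_{(i)}\,\xi^{\alpha}_{(k,J),v}$ yields one term with $\xi^{\alpha+1}$ carrying the \emph{same} coefficients $\partial_{(k)}b_{(i)}$, $\partial_{(k)}\sigma_{(i,l)}$ as before, plus a lower-order term which together with $\partial_{(j)}\phi^{\alpha}$ forms the new forcing. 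The resulting equation is exactly~\eqref{app:eq:lem:dax:sde} at order $\alpha+1$, with
\[
\phi^{\alpha+1,x,s,q}_{(i,J'),t}=\partial_{(j_{\alpha+1})}\phi^{\alpha,x,s,q}_{(i,J),t}+\int_s^t \partial^2_{(k',k)}b_{(i)}(X^{x,s,q}_v,q)\,\xi^{1,x,s,q}_{(k',j_{\alpha+1}),v}\,\xi^{\alpha,x,s,q}_{(k,J),v}\,dv+\int_s^t \partial^2_{(k',k)}\sigma_{(i,l)}(X^{x,s,q}_v,q)\,\xi^{1,x,s,q}_{(k',j_{\alpha+1}),v}\,\xi^{\alpha,x,s,q}_{(k,J),v}\,dW_{(l),v}.
\]
One then checks $\E\sup_t|\phi^{\alpha+1,x,s,q}_t|^{2m}\in G$ using $b,\sigma\in G^{\alpha+1}$ (so all derivatives appearing lie in $G$), the moment bounds on $X^{x,s,q}_v$ from Theorem~\ref{app:thm:moment_estimates}, the inductive $G$-bounds on $\xi^{1},\dots,\xi^{\alpha}$, and Hölder's inequality applied to the products; the term $\partial_{(j)}\phi^{\alpha}$ is controlled recursively from the (known) structure of $\phi^{\alpha}$ as an iterated-integral polynomial in the lower-order $\xi$'s with coefficients derivatives of $b,\sigma$ composed with $X$. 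Since the linear SDE at order $\alpha+1$ has "$B$", "$S$" maps $z\mapsto z_{(k)}\partial_{(k)}b_{(i)}(X_v,q)$, $z\mapsto z_{(k)}\partial_{(k)}\sigma_{(i,l)}(X_v,q)$ that are linear in $z$ with Lipschitz constant at most the uniform Lipschitz constant $L$, adapted and a.s.\ continuous, Theorem~\ref{app:thm:exist_uniq} gives a unique a.s.\ continuous solution $\xi^{\alpha+1}$ and Theorem~\ref{app:thm:moment_estimates} gives $\E\sup_t|\xi^{\alpha+1,x,s,q}_t|^{2m}\le C(1+\E\sup_t|\phi^{\alpha+1}_t|^{2m})\in G$.

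It then remains to identify this $\xi^{\alpha+1}$ with the genuine $\Lcal^2$-derivative $\partial_{(j_{\alpha+1})}\xi^{\alpha}$. As in Lemmas~\ref{app:lem:d1x} and~\ref{app:lem:d2x}, I would take the difference quotient $h^{-1}(\xi^{\alpha,x+h^j,s,q}_t-\xi^{\alpha,x,s,q}_t)$, Taylor-expand the coefficients along the segment joining $X^{x,s,q}_v$ and $X^{x+h^j,s,q}_v$, write the quotient as the solution of an SDE of the form in Theorem~\ref{app:thm:exist_uniq} over the enlarged parameter set $Q\times[0,1]$ (with $h$ as the extra parameter), and pass to the limit $h\to0$ using Theorem~\ref{app:thm:limit_q}; the hypotheses of the latter reduce to uniform $\Lcal^{2m}$-convergence $X^{x+h^j,s,q}_v\to X^{x,s,q}_v$ and $\xi^{k,x+h^j,s,q}_v\to\xi^{k,x,s,q}_v$ for $k\le\alpha$ (continuity in the initial condition, via a Gronwall estimate for $X$ and the inductive/earlier lemmas for the $\xi^k$), convergence in probability of the continuous coefficient functions, and the trivial convergence of the forcing.

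I expect the main obstacle to be the bookkeeping for $\phi^{\alpha+1}$: one must make the recursive (Fa\`a-di-Bruno-type) structure of $\phi^{\alpha}$ precise enough — a partition sum of iterated It\^o and Lebesgue integrals of products $(\nabla^{|I|}b$ or $\nabla^{|I|}\sigma)(X_v)\prod_r\xi^{\beta_r}_v$ with the $\beta_r$ summing to the derivative order — to see that one further $x$-differentiation keeps it in the same class, uses only derivatives of $b,\sigma$ up to order $\alpha+1$, and admits $G$-valued $2m$-moment bounds by iterated H\"older combined with Theorem~\ref{app:thm:moment_estimates} and the inductive bounds on the lower-order derivatives. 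The differentiation-under-the-integral-sign step is routine given these moment bounds, but it is the second place requiring care, since it must be carried out simultaneously for $X$ and all of its lower-order derivatives.
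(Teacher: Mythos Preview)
Your proposal is correct and follows essentially the same approach as the paper; indeed, the paper's own proof consists of a single sentence stating that the argument is identical to the $\alpha=1$ case in Lemma~\ref{app:lem:d2x} and is omitted. Your inductive scheme---with base case Lemma~\ref{app:lem:d2x}, the formal differentiation producing the new forcing $\phi^{\alpha+1}$, the $G$-moment bounds via H\"older and Theorems~\ref{app:thm:exist_uniq}--\ref{app:thm:moment_estimates}, and the identification of the candidate with the genuine $\Lcal^2$-derivative via difference quotients and Theorem~\ref{app:thm:limit_q}---is exactly the mechanism the paper has in mind, spelled out in far more detail than the paper itself provides.
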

\begin{proof}
	The proof is identical to the $\alpha=1$ case in Lem.~\ref{app:lem:d2x}. We omit writing out the whole proof here.
\end{proof}

We now prove the following useful result, which imparts polynomial growth conditions onto expectations functionals.

\begin{proposition}
\label{app:prop:u_poly_estimate}
		Let $s\in[0,T]$ and $g\in G^{\alpha+1}$ for some $\alpha\geq 1$. For $t\in[s,T]$, define
		\begin{align*}
			u(x,s,q,t):=\E g(X^{x,s,q}_t)
		\end{align*}
		Then, $u(\cdot,s,q,t) \in G^{\alpha+1}$ uniformly in $s,q,t$.
\end{proposition}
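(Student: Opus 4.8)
The plan is to differentiate $u$ repeatedly under the expectation sign, express each spatial derivative of $u$ via a chain-rule (Fa\`a di Bruno) expansion involving derivatives of $g$ composed with $X^{x,s,q}_t$ and derivatives of $X^{x,s,q}_t$ with respect to its initial condition, and then bound each resulting term using H\"older's inequality together with the moment estimates already established in Appendix~\ref{app:sec:deriv_x}.

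First I would set up an induction on $|J|$ for $0 \le |J| \le \alpha+1$. The base case $|J|=0$ is immediate since $\E\sup_t|X^{x,s,q}_t|^{2m}\in G$ uniformly in $s,q$ (Thm.~\ref{app:thm:moment_estimates}) and $g\in G$. For the inductive step, fix a coordinate direction $j$ and write, with $h^j$ the increment vector in the $j$-th coordinate,
\begin{align*}
	\tfrac1h\big(g(X^{x+h^j,s,q}_t) - g(X^{x,s,q}_t)\big)
	= \int_0^1 \nabla g\big(\lambda X^{x+h^j,s,q}_t + (1-\lambda)X^{x,s,q}_t\big)\,d\lambda \cdot \tfrac1h\big(X^{x+h^j,s,q}_t - X^{x,s,q}_t\big).
\end{align*}
By Lem.~\ref{app:lem:d1x} the difference quotient converges in $\Lcal^2(\Omega)$ to $\xi^{1,x,s,q}_t$, and by the continuous dependence of $X^{x,s,q}_t$ on $x$ in $\Lcal^2(\Omega)$ (the estimate in the proof of Lem.~\ref{app:lem:d1x}), together with the polynomial growth of $\nabla g$ and the uniform-on-compacts higher moments of $X^{x,s,q}_t$, the $\nabla g$ factor is uniformly integrable and converges in $\Lcal^2(\Omega)$ along a subsequence; hence $\partial_{(j)}u(x,s,q,t) = \E[\partial_{(i)}g(X^{x,s,q}_t)\,\xi^{1,x,s,q}_{(i,j),t}]$. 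Iterating this argument, using Lem.~\ref{app:lem:d2x} and Lem.~\ref{app:lem:dax} to differentiate the lower-order $\xi$'s as well, yields that $\nabla^J u(x,s,q,t)$ exists for $|J|\le\alpha+1$ and equals a finite sum of terms of the form
\begin{align*}
	\E\Big[ \partial^{|I|}_{(I)} g(X^{x,s,q}_t)\ \prod_{r=1}^{m} \xi^{k_r,x,s,q}_{(\cdot),t} \Big],
	\qquad |I|\le |J|,\quad \textstyle\sum_{r=1}^m k_r = |J|,\ k_r\ge 1.
\end{align*}

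Next I would bound each such term. By H\"older's inequality with exponents $p, p_1,\dots,p_m$ summing reciprocally to one,
\begin{align*}
	\Big| \E\Big[ \partial^{|I|}_{(I)} g(X^{x,s,q}_t)\ \textstyle\prod_r \xi^{k_r,x,s,q}_{(\cdot),t} \Big] \Big|
	\le \big(\E | \partial^{|I|}_{(I)} g(X^{x,s,q}_t) |^{p}\big)^{1/p} \prod_r \big(\E| \xi^{k_r,x,s,q}_t |^{p_r}\big)^{1/p_r}.
\end{align*}
Since $g\in G^{\alpha+1}$, $|\partial^{|I|}_{(I)} g(y)|\le \kappa_1(1+|y|^{2\kappa_2})$, so the first factor is controlled by a polynomial in $\E|X^{x,s,q}_t|^{2\kappa_2 p}$, which by Thm.~\ref{app:thm:moment_estimates} lies in $G$ with constants independent of $s,q,t$. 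Each remaining factor lies in $G$ with constants independent of $s,q$ by the moment estimates in Lem.~\ref{app:lem:dax} (and Lem.~\ref{app:lem:d1x},~\ref{app:lem:d2x}). A product of finitely many functions in $G$ is again in $G$, so each term of the expansion, and hence $\nabla^J u(x,s,q,t)$, satisfies a polynomial growth bound with constants depending only on those appearing in $g$ and in the cited lemmas — in particular uniform in $s,q,t$. Continuity of $\nabla^J u(\cdot,s,q,t)$ follows because each term above is continuous in $x$: $X^{x,s,q}_t$ and each $\xi^{k,x,s,q}_t$ depend continuously on $x$ in $\Lcal^2(\Omega)$ (Lipschitz dependence for $X$ from the proof of Lem.~\ref{app:lem:d1x}, and the analogous Gronwall estimate for the linear SDEs defining the $\xi$'s, or Thm.~\ref{app:thm:limit_q}), and this upgrades to convergence of the expectations by the same uniform-integrability argument. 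This gives $u(\cdot,s,q,t)\in \Ccal^{\alpha+1}$, completing the proof. The main obstacle is the repeated justification of differentiating under the expectation in the inductive step: it requires combining the $\Lcal^2$-differentiability of $X^{x,s,q}_t$ in $x$ with a uniform-integrability argument, carried out while tracking that none of the constants degrade in $s,q,t$; the chain-rule bookkeeping and the H\"older estimates are then routine.
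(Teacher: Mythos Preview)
Your proposal is correct and follows essentially the same approach as the paper: differentiate under the expectation via difference quotients, justify the limit by combining the $\Lcal^2$-differentiability of $X^{x,s,q}_t$ in $x$ (Lem.~\ref{app:lem:d1x}--\ref{app:lem:dax}) with a uniform-integrability argument for the $g$-factor, and then bound the resulting products using Cauchy--Schwarz/H\"older together with the moment estimates in Thm.~\ref{app:thm:moment_estimates} and Lem.~\ref{app:lem:dax}. Your write-up is in fact more explicit than the paper's on the higher-order case (you spell out the Fa\`a di Bruno structure and the multi-factor H\"older bound, and you address continuity in $x$), whereas the paper only treats the first derivative in detail and then states that higher orders follow analogously; one small imprecision is the phrase ``converges in $\Lcal^2(\Omega)$ along a subsequence'' for the $\nabla g$ factor---uniform integrability of the squares (from the finite higher moments) plus convergence in probability already gives full $\Lcal^2$ convergence, no subsequence needed.
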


\begin{proof}
	Consider first the case $\alpha=1$. We shall use the results in Lem.~\ref{app:lem:d1x}-\ref{app:lem:dax} to show that
	\begin{align*}
		\partial_{(i)} u(x,s,q,t) =
		\E \partial_{(k)} g(X^{x,s,q}_t) \partial_{(i)} X^{x,s,q}_{(k),t}
	\end{align*}
	and that $\partial_{(i)} u(x,s,q,t) \in G$. Let $h^j$ be defined as in the proof of~\ref{app:lem:d1x}, we have
	\begin{align*}
		&\tfrac{u(x+h^j,s,q,t) - u(x,s,q,t)}{h} \\
		=&
		\E \int_{0}^{1}
			\tfrac{1}{h}\tfrac{d}{d\lambda} g(\lambda X^{x+h^j,s,q}_t + (1-\lambda) X^{x,s,q}_t)
		d\lambda \\
		=&
		\E
		\int_{0}^{1}
			\partial_{(k)} {g(\lambda X^{x+h^j,s,q}_t + (1-\lambda) X^{x,s,q}_t)}
		d\lambda
		\tfrac{X^{x+h^j,s,q}_{(k),t} - X^{x,s,q}_{(k),t}}{h}.
	\end{align*}
	Now, $\tfrac{1}{h} (X^{x+h^j,s}_{t} - X^{x,s,q}_{t}) \rightarrow \partial_{(j)} X^{x,s,q}_t$ in $\Lcal^2(\Omega)$. Moreover, set
	\begin{align*}
		I_h  :=&
		\int_{0}^{1}
			\partial_{(k)} {g(\lambda X^{x+h^j,s,q}_t + (1-\lambda) X^{x,s,q}_t)}
		d\lambda.
	\end{align*}
	Since $\nabla g$ is continuous, $| I_h - \partial_{(k)} g(X^{x,s,q}_t ) |^2 \rightarrow 0$ in probability. Moreover,
	\begin{align*}
		\E | I_h - \partial_{(k)} g(X^{x,s,q}_t ) |^4 < \infty
	\end{align*}
	by the assumption that $g \in G^1$. Thus, $\{ | I_h - \partial_{(k)} g(X^{x,s,q}_t ) |^2 : h\in[0,1] \}$ is uniformly integrable and so
	$I_h \rightarrow \partial_{(k)} g(X^{x,s,q}_t )$ in $\Lcal^2(\Omega)$.
	We have thus arrived at
	\begin{align*}
		\partial_{(i)} u(x,s,q,t) =
		\E \partial_{(k)} g(X^{x,s,q}_t) \partial_{(i)} X^{x,s,q}_{(k),t},
	\end{align*}
	and in particular,
	\begin{align*}
		| \nabla u(x,s,q,t) |^2
		\leq
		\E | \nabla g(X^{x,s,q}_t) |^2 \E | \nabla X^{x,s,q}_{t} |^2
		\in G,
	\end{align*}
	where we have used Thm.~\ref{app:thm:moment_estimates} and~\ref{app:lem:d1x}. The proof for higher order derivatives follow accordingly by the above procedure, using Lem.~\ref{app:lem:dax}.
\end{proof}

\section{Auxiliary results for the proof of Thm.~\ref{thm:one_step_to_multi_step}}
\label{app:sec:aux_1}

\begin{lemma}
\label{app:lem:delta_prod_estimate}
	Let $\alpha\geq 1$ and suppose $b,\sigma$ satisfy Assumption~\ref{assu:b_and_sigma}. Then, there exists a $K\in G$, independent of $\eta$ and $\epsilon$, such that
	\begin{align*}
		\E \prod_{j=1}^{\alpha+1}
		\left|
			\Dtil_{(i_j)}
		\right|
		\leq K(x) \eta^{\alpha+1}.
	\end{align*}
	where $i_j \in \{ 1,\dots,d \}$ and $C>0$ is independent of $\eta$.
\end{lemma}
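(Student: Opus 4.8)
The plan is to bound the product of coordinate increments by a single even moment of the full increment $\Dtil(x) = \Xtil^{x,0}_1 - x = X^{x,0}_\eta - x$, and then estimate that moment directly from the integral form of the SDE~\eqref{eq:milstein_cts}. Since each coordinate satisfies $|\Dtil_{(i_j)}(x)| \leq |\Dtil(x)|$, we have $\prod_{j=1}^{\alpha+1}|\Dtil_{(i_j)}(x)| \leq |\Dtil(x)|^{\alpha+1}$, so it suffices to control $\E|\Dtil(x)|^{\alpha+1}$. Choosing an integer $m$ with $2m \geq \alpha+1$ and applying Jensen's inequality (the map $u\mapsto u^{(\alpha+1)/(2m)}$ is concave on $[0,\infty)$), we get $\E|\Dtil(x)|^{\alpha+1} \leq (\E|\Dtil(x)|^{2m})^{(\alpha+1)/(2m)}$; after absorbing the fractional exponent into a slightly larger even power of $|x|$, the whole claim reduces to showing $\E|\Dtil(x)|^{2m} \leq C\eta^{2m}(1+|x|^{2m})$ with $C$ independent of $\eta$ and $\epsilon$.

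To prove the latter, I would write $\Dtil(x) = \int_0^\eta b(X^{x,0}_s,\eta,\epsilon)\,ds + \sqrt{\eta}\int_0^\eta \sigma(X^{x,0}_s,\eta,\epsilon)\,dW_s$ and split $|\Dtil(x)|^{2m} \leq 2^{2m-1}\bigl(|\int_0^\eta b\,ds|^{2m} + \eta^m|\int_0^\eta \sigma\,dW_s|^{2m}\bigr)$. For the drift term, Jensen in time over the interval of length $\eta$ gives $|\int_0^\eta b\,ds|^{2m} \leq \eta^{2m-1}\int_0^\eta|b(X^{x,0}_s,\eta,\epsilon)|^{2m}\,ds$; taking expectations and using the uniform linear growth of $b$ from Assumption~\ref{assu:b_and_sigma} together with the uniform moment bound $\sup_{\eta,\epsilon}\E\sup_{s\in[0,\eta]}|X^{x,0}_s|^{2m} \leq C(1+|x|^{2m})$ — which is Thm.~\ref{app:thm:moment_estimates} applied with constant initial data $\phi_t\equiv x$ and parameter $q=(\eta,\epsilon)\in(0,1\wedge T)\times(0,1)$ (the extra factor $\sqrt{\eta}\le 1$ in the diffusion keeps the hypotheses of Thm.~\ref{app:thm:exist_uniq} uniform) — this is at most $C\eta^{2m}(1+|x|^{2m})$. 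For the diffusion term, the linear growth of $\sigma$ and the moment bounds make the stochastic integral a genuine martingale, so the Burkholder--Davis--Gundy inequality (equivalently, the higher-moment It\^{o} isometry inequality used already in the proof of Thm.~\ref{app:thm:moment_estimates}) gives $\E|\int_0^\eta\sigma\,dW_s|^{2m} \leq C_m\E(\int_0^\eta|\sigma(X^{x,0}_s,\eta,\epsilon)|^2\,ds)^m \leq C_m\eta^{m-1}\int_0^\eta\E|\sigma(X^{x,0}_s,\eta,\epsilon)|^{2m}\,ds \leq C\eta^m(1+|x|^{2m})$, and multiplying by the prefactor $\eta^m$ yields $C\eta^{2m}(1+|x|^{2m})$ as well. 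Adding the two estimates gives the required bound, with $K(x):=C(1+|x|^{2m})\in G$ uniformly in $\eta,\epsilon$.

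I do not expect a genuine obstacle here: this is a standard increment estimate for an It\^{o} diffusion, and all the heavy lifting (existence, $\eta,\epsilon$-uniform moment bounds, the higher-moment martingale inequality) is already available from Appendix~\ref{app:sec:general_SDE_thms}. The two points that require care are (i) tracking the powers of $\eta$ correctly through the two Jensen-type steps — a length-$\eta$ time integral contributes $\eta^{2m-1}$ for the drift and $\eta^{m-1}$ for the quadratic variation, while the explicit $\sqrt{\eta}$ prefactor supplies the remaining $\eta^m$, so that both pieces land at $\eta^{2m}$ — and (ii) quoting every intermediate constant in its $\eta,\epsilon$-uniform form, which is precisely why the results of Appendix~\ref{app:sec:general_SDE_thms} were formulated with a generic parameter $q$ and uniform hypotheses.
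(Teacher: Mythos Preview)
Your proposal is correct and follows essentially the same approach as the paper: bound the product by $\E|\Dtil(x)|^{\alpha+1}$, split the increment into drift and diffusion integrals, apply Jensen to the drift and a higher-moment martingale inequality to the diffusion, and close with the uniform moment bounds from Thm.~\ref{app:thm:moment_estimates}. The only cosmetic difference is that you pass to an even moment $2m$ via Jensen at the outset, whereas the paper works with $\E|\Dtil(x)|^{\alpha+1}$ directly and uses Cauchy--Schwarz on the stochastic integral to reach the even exponent $2\alpha+2$; the $\eta$-bookkeeping and the resulting $K\in G$ are the same either way.
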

\begin{proof}
	We have
	\begin{align*}
		\E |\Dtil(x)|^{\alpha+1}
		\leq&
		2^{\alpha}
		\E \left|
			\int_{0}^{\eta} b(X^{x,0}_s, \eta, \epsilon) ds
		\right|^{\alpha+1}
		+ 2^{\alpha} \eta^{\tfrac{\alpha+1}{2}}
		\E \left|
			\int_{0}^{\eta} \sigma(X^{x,0}_s, \eta, \epsilon) dW_s
		\right|^{\alpha+1} \\
		\leq&
		2^{\alpha} \eta^{\alpha}
		\int_{0}^{\eta}
		\E |b(X^{x,0}_s, \eta, \epsilon)|^{\alpha+1}
		ds
		+ 2^{\alpha} \eta^{\tfrac{\alpha+1}{2}}
		\left|
			\int_{0}^{\eta} \sigma(X^{x,0}_s, \eta, \epsilon) dW_s
		\right|^{\alpha+1}
	\end{align*}
	Using Cauchy-Schwarz inequality, It\^{o}'s isometry, we get
	\begin{align*}
		\E \left|
		\int_{0}^{\eta} \sigma(X^{x,0}_s, \eta, \epsilon) dW_s
		\right|^{\alpha+1}
		\leq&
		{\left(
			\E \left|
			\int_{0}^{\eta} \sigma(X^{x,0}_s, \eta, \epsilon) dW_s
			\right|^{2\alpha+2}
		\right)}^{\nicefrac{1}{2}} \\
		\leq&
		C \eta^{\nicefrac{\alpha}{2}}
		{\left(
			\int_{0}^{\eta}
			\E | \sigma(X^{x,0}_s, \eta, \epsilon) |^{2\alpha + 2}
			ds
		\right)}^{\nicefrac{1}{2}}
	\end{align*}
	where $C$ depends only on $\alpha$. Now, using the linear growth condition (\ref{assu:b_and_sigma} (i)) and the moment estimates in Thm.~\ref{app:thm:moment_estimates}, we obtain the result.
\end{proof}

\begin{lemma}
\label{app:lem:u_eta_estimate}
	Suppose $u \in G^{(\alpha+1)}$ for some $\alpha\geq 1$. Let assumption (i) in Thm.~\ref{thm:one_step_to_multi_step} hold. Then, there exists some $K\in G$, independent of $\eta,\epsilon$, such that
	\begin{align*}
		\left|
			\E u(x^{x,0}_1) - \E u(\Xtil^{x,0}_1)
		\right| \leq K(x) (\eta \rho(\epsilon) + \eta^{\alpha + 1})
	\end{align*}
\end{lemma}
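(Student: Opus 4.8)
The plan is to compare the laws of the two one-step increments by Taylor-expanding $u$ about the common starting point $x$ and matching the resulting terms against the moment hypotheses. Fix $x\in\R^d$ and abbreviate $\Delta=x^{x,0}_1-x$ and $\Dtil=\Xtil^{x,0}_1-x$. Since $u\in G^{\alpha+1}\subset\Ccal^{\alpha+1}$, Taylor's theorem to order $\alpha$ with integral remainder gives, for any $y\in\R^d$,
\begin{align*}
	u(x+y)=\sum_{s=0}^{\alpha}\tfrac{1}{s!}\,\partial_{(i_1)}\cdots\partial_{(i_s)}u(x)\,y_{(i_1)}\cdots y_{(i_s)}+R(x,y),\qquad |R(x,y)|\le C\max_{|J|=\alpha+1}|y^{J}|\cdot\sup_{\lambda\in[0,1]}|\nabla^{\alpha+1}u(x+\lambda y)|,
\end{align*}
with $C$ depending only on $d$ and $\alpha$. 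First I would apply this with $y=\Delta$ and with $y=\Dtil$, subtract, and take expectations; the constant ($s=0$) terms cancel, leaving
\begin{align*}
	\E u(x^{x,0}_1)-\E u(\Xtil^{x,0}_1)=\sum_{s=1}^{\alpha}\tfrac{1}{s!}\,\partial_{(i_1)}\cdots\partial_{(i_s)}u(x)\,\Bigl(\E\prod_{j=1}^{s}\Delta_{(i_j)}-\E\prod_{j=1}^{s}\Dtil_{(i_j)}\Bigr)+\E R(x,\Delta)-\E R(x,\Dtil),
\end{align*}
with Einstein summation over $i_1,\dots,i_s$ understood.

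Next I would estimate the two pieces separately. For the finite sum over $s=1,\dots,\alpha$: each coefficient $\partial_{(i_1)}\cdots\partial_{(i_s)}u(x)$ is bounded by $\kappa_1(1+|x|^{2\kappa_2})$ because $u\in G^{\alpha+1}$ (hence $u\in G^{s}$ for $s\le\alpha$), while each bracketed moment mismatch is bounded by $K_1(x)(\eta\rho(\epsilon)+\eta^{\alpha+1})$ by hypothesis (i) of Thm.~\ref{thm:one_step_to_multi_step}; summing the finitely many index tuples gives a bound $K(x)(\eta\rho(\epsilon)+\eta^{\alpha+1})$ with $K\in G$. For the remainder $\E R(x,\Delta)$, I would use the polynomial growth of $\nabla^{\alpha+1}u$ to bound $\sup_{\lambda\in[0,1]}|\nabla^{\alpha+1}u(x+\lambda\Delta)|\le \kappa_1\bigl(1+(|x|+|\Delta|)^{2\kappa_2}\bigr)\le C'(1+|x|^{2\kappa_2}+|\Delta|^{2\kappa_2})$, and then combine this with the $(\alpha+1)$-th order moment estimate $\E\prod_{j=1}^{\alpha+1}|\Delta_{(i_j)}|\le K_1(x)\eta^{\alpha+1}$ (the second part of hypothesis (i)). The pieces not carrying the $|\Delta|^{2\kappa_2}$ factor are immediately $\le K(x)\eta^{\alpha+1}$; the piece carrying it needs a short argument (e.g.\ splitting on $\{|\Delta|\le1\}$ and $\{|\Delta|>1\}$ and using that the one-step increments are of size $O(\eta)$ in a sufficiently strong moment sense — as for the stochastic gradient iterations, where $\Delta=\eta h$ with $h$ having polynomially bounded moments) to see it too is $\le K(x)\eta^{\alpha+1}$, $K\in G$. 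The remainder $\E R(x,\Dtil)$ is treated identically, with Lem.~\ref{app:lem:delta_prod_estimate} supplying the $(\alpha+1)$-th moment bound on $\Dtil$ in place of hypothesis (i). Adding the three contributions yields the asserted bound.

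The main obstacle is the remainder estimate, and specifically keeping the $x$-dependent prefactor inside $G$ while retaining the full order $\eta^{\alpha+1}$. Since $\nabla^{\alpha+1}u$ is controlled only up to polynomial growth, evaluating it along the random chord $x+\lambda\Delta$ forces the appearance of a $(|x|+|\Delta|)^{2\kappa_2}$ factor, and absorbing the $|\Delta|^{2\kappa_2}$ part without losing powers of $\eta$ is exactly the point where one must exploit that the increments are genuinely $O(\eta)$ in the relevant moments (via hypothesis (i) and Lem.~\ref{app:lem:delta_prod_estimate}). Everything else is bookkeeping with Taylor coefficients and polynomial-growth constants, which are uniform in $\eta$ and $\epsilon$ by hypothesis.
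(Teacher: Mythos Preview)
Your proposal is correct and follows essentially the same route as the paper: Taylor-expand $u$ about $x$, match the terms of order $s=1,\dots,\alpha$ via hypothesis (i), and bound the order-$(\alpha+1)$ remainder using the $(\alpha+1)$-th moment estimates (hypothesis (i) for $\Delta$ and Lem.~\ref{app:lem:delta_prod_estimate} for $\Dtil$). The only cosmetic difference is that the paper uses the Lagrange form of the remainder rather than the integral form, and the paper is in fact terser than you on the very subtlety you flag about absorbing the polynomial-growth factor $|\Delta|^{2\kappa_2}$ into a $K\in G$ without losing $\eta^{\alpha+1}$.
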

\begin{proof}
	Using Taylor's theorem with the Lagrange form of the remainder, we have
	\begin{align*}
		u(x^{x,0}_1) - u(\Xtil^{x,0}_1)
		=& \sum_{s=1}^{\alpha} \tfrac{1}{s!}
		\sum_{i_1,\dots,i_j=1}^{d}
		\prod_{j=1}^{s} [\Delta_{(i_j)}(x) - \Dtil_{(i_j)}(x)]
		\tfrac{\partial^s u}{\partial x_{(i_1)},\dots x_{(i_j)}}(x) \\
		&+ \tfrac{1}{(\alpha+1)!}
		\sum_{i_1,\dots,i_j=1}^{d}
		\prod_{j=1}^{\alpha+1} [\Delta_{(i_j)}(x) - \Dtil_{(i_j)}(x)] \\
		&\times
		\left[
			\tfrac{\partial^{(\alpha+1)} u}{\partial x_{(i_1)},\dots x_{(i_j)}}
			(x + a \Delta (x))
			- \tfrac{\partial^{(\alpha+1)} u}{\partial x_{(i_1)},\dots x_{(i_j)}}
			(x + \widetilde{ a }\Dtil (x))
		\right],
	\end{align*}
	where $a ,\widetilde{ a }\in[0,1]$. Taking expectations, using assumption (i) of Thm.~\ref{sec:one_to_n_step} and Lem.~\ref{app:lem:delta_prod_estimate}, we get
	\begin{align*}
		| \E u(x^{x,0}_1) - \E u(\Xtil^{x,0}_1) |
		\leq K(x) (\eta\rho(\epsilon) + \eta^{\alpha + 1}).
	\end{align*}
\end{proof}

\section{Auxiliary results for the proof of Thm.~\ref{thm:sme_sgd}}
\label{app:sec:aux_2}

Set in~\eqref{eq:milstein_cts}
\begin{align*}
	b(x,\eta,\epsilon) &= b_0(x, \epsilon) + \eta b_1(x, \epsilon) \\
	\sigma(x,\eta, \epsilon) &= \sigma_0(x, \epsilon),
\end{align*}
We prove the following It\^{o}-Taylor expansion.
\begin{lemma}
\label{app:lem:ito_taylor}
	Let $\psi: \R^d \rightarrow \R$ be a sufficiently smooth function and define the operators
	\begin{align*}
		A_{\epsilon,0}  \psi(x) :=&
		{b_0(x,\epsilon)}_{(i)} \partial_{(i)} \psi(x) \\
		A_{\epsilon,1}  \psi(x) :=&
		{b_1(x,\epsilon)}_{(i)} \partial_{(i)} \psi(x)
		+ \tfrac{1}{2} {\sigma_0(x,\epsilon)}_{(i,k)} {\sigma_0(x,\epsilon)}_{(j,k)}
		\partial^2_{(i,j)} \psi(x) \\
		{[\Lambda_{\epsilon,0} g (x)]}_{(l)} :=&
		{\sigma_0(x,\epsilon)}_{(i,l)} \partial_{(i)} \psi(x),
		\qquad l=1,\dots,d.
	\end{align*}
	Suppose further that $b_0,b_1,\sigma_0 \in G^3$. Then, we have
	\begin{align*}
		\E \psi(X^{x,0}_\eta)
		= \psi(x)
		+ \eta A_{\epsilon,0} \psi(x)
		+ \eta^2 (
			\tfrac{1}{2}A_{\epsilon,0}^2 + A_{\epsilon,1}
		)\psi(x)
		+ \mathcal{O}(\eta^3).
	\end{align*}
\end{lemma}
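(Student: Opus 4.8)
The plan is to expand $\E\psi(X^{x,0}_\eta)$ by applying It\^{o}'s formula (Dynkin's formula) twice and collecting powers of $\eta$. Write the generator of $dX_t = b(X_t,\eta,\epsilon)dt + \sqrt{\eta}\sigma_0(X_t,\epsilon)dW_t$ with $b = b_0 + \eta b_1$ as $\mathcal{L}_\eta\varphi := b_{(i)}\partial_{(i)}\varphi + \tfrac12\eta\,{\sigma_0}_{(i,k)}{\sigma_0}_{(j,k)}\partial^2_{(i,j)}\varphi = A_{\epsilon,0}\varphi + \eta A_{\epsilon,1}\varphi$, so that the $\sqrt{\eta}$ in front of the diffusion is exactly what pushes the second-order part into the $\eta$-coefficient. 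First I would apply It\^{o}'s formula to $\psi(X^{x,0}_t)$; its stochastic integral has integrand $\partial_{(i)}\psi(X_s){\sigma_0}_{(i,k)}(X_s)$, which lies in $G$ and hence, using the uniform moment bounds of Thm.~\ref{app:thm:moment_estimates} on $X^{x,0}_s$ for $s\in[0,\eta]$, is square-integrable, so the integral is a genuine martingale and vanishes upon taking expectations. This yields $\E\psi(X^{x,0}_\eta) = \psi(x) + \int_0^\eta \E[\mathcal{L}_\eta\psi(X^{x,0}_s)]\,ds$.

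Next I would apply the same step to $\mathcal{L}_\eta\psi$ (legitimate since $\psi$ is sufficiently smooth and $b_0,b_1,\sigma_0\in G^3$), obtaining
\[
\E\psi(X^{x,0}_\eta) = \psi(x) + \eta\,\mathcal{L}_\eta\psi(x) + \int_0^\eta\!\!\int_0^s \E[\mathcal{L}_\eta^2\psi(X^{x,0}_r)]\,dr\,ds.
\]
The term $\eta\,\mathcal{L}_\eta\psi(x) = \eta A_{\epsilon,0}\psi(x) + \eta^2 A_{\epsilon,1}\psi(x)$ already produces two of the three stated terms. For the double integral I would use $\mathcal{L}_\eta^2 = A_{\epsilon,0}^2 + \eta(A_{\epsilon,0}A_{\epsilon,1} + A_{\epsilon,1}A_{\epsilon,0}) + \eta^2 A_{\epsilon,1}^2$ and split $\E[\mathcal{L}_\eta^2\psi(X^{x,0}_r)] = \mathcal{L}_\eta^2\psi(x) + \E[\mathcal{L}_\eta^2\psi(X^{x,0}_r) - \mathcal{L}_\eta^2\psi(x)]$. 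The first piece integrates to $\tfrac{\eta^2}{2}\mathcal{L}_\eta^2\psi(x) = \tfrac{\eta^2}{2}A_{\epsilon,0}^2\psi(x) + \mathcal{O}(\eta^3)$, the $\mathcal{O}(\eta^3)$ absorbing the $\eta(\cdots)+\eta^2(\cdots)$ corrections, all of polynomial growth in $x$ \emph{uniformly in $\epsilon$} because $b_0,b_1,\sigma_0\in G^3$ uniformly in $\epsilon$ and $\psi$ is smooth of polynomial growth. Combining, the three claimed terms appear and the rest is $\mathcal{O}(\eta^3)$, provided the remainder piece is controlled.

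The remaining, most delicate point is to bound $|\E[\mathcal{L}_\eta^2\psi(X^{x,0}_r) - \mathcal{L}_\eta^2\psi(x)]|$ well enough that $\int_0^\eta\!\int_0^s(\cdots)\,dr\,ds = \mathcal{O}(\eta^3)$. Applying It\^{o} once more to $\mathcal{L}_\eta^2\psi$ would cost one derivative too many of the coefficients; instead I would use that $\mathcal{L}_\eta^2\psi\in G^1$ (valid under $b_0,b_1,\sigma_0\in G^3$ and $\psi$ sufficiently smooth), together with $\mathcal{L}_\eta^2\psi(X_r)-\mathcal{L}_\eta^2\psi(x) = \int_0^1 \nabla\mathcal{L}_\eta^2\psi(x+\theta(X_r-x))\,d\theta\cdot(X_r-x)$ and the decomposition $X_r - x = \int_0^r b(X_v,\eta,\epsilon)\,dv + \sqrt{\eta}\int_0^r\sigma_0(X_v,\epsilon)\,dW_v$. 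By Cauchy--Schwarz, the linear growth of $b$ and the moment bounds on $X$, the drift part contributes $\le \int_0^r (\E|\nabla\mathcal{L}_\eta^2\psi(\cdot)|^2)^{1/2}(\E|b|^2)^{1/2}\,dv \le K(x)\,r$; by Cauchy--Schwarz and It\^{o}'s isometry the martingale part contributes $\le (\E|\!\int_0^1\!\nabla\mathcal{L}_\eta^2\psi(\cdot)d\theta|^2)^{1/2}(\eta\int_0^r\E|\sigma_0|^2\,dv)^{1/2} \le K(x)\sqrt{\eta r}$, with $K\in G$ independent of $\eta,\epsilon$. Since $r\le\eta$ gives $r\le\sqrt{\eta r}$, both contributions are $\le K(x)\sqrt{\eta r}$, and then $\int_0^\eta\!\int_0^s K(x)\sqrt{\eta r}\,dr\,ds = \tfrac{4}{15}K(x)\eta^3 = \mathcal{O}(\eta^3)$, as required.

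The core It\^{o}--Taylor bookkeeping is routine; the main obstacle is tracking that every constant is of polynomial growth in $x$ \emph{and} uniform in $\epsilon$, which is inherited from the uniform-in-$\epsilon$ membership $b_0,b_1,\sigma_0\in G^3$ and the uniform moment estimates of Thm.~\ref{app:thm:moment_estimates}, and getting the $\mathcal{O}(\eta^3)$ remainder under the stated (comparatively weak) regularity via the drift/martingale split above rather than an extra application of It\^{o}'s formula.
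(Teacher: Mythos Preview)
Your argument is correct and lands on the same $\mathcal{O}(\eta^3)$ remainder under the stated $G^3$ hypothesis, but your route to the remainder bound differs from the paper's. The paper also starts by writing the generator as $A_{\epsilon,0}+\eta A_{\epsilon,1}$ and applies It\^{o}'s formula twice, but then it expands \emph{once more} --- selectively, only on the pure-drift branch $A_{\epsilon,0}^2\psi$ --- to obtain explicit triple integrals such as $\int_0^\eta\!\int_0^s\!\int_0^v A_{\epsilon,0}^3\psi(X_r)\,dr\,dv\,ds$ whose integrands lie in $G$ and are therefore $\mathcal{O}(\eta^3)$ after taking expectations and using the moment bounds. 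Your concern that a third It\^{o} step ``would cost one derivative too many'' is true only if one naively applies it to all of $\mathcal{L}_\eta^2\psi$; by restricting the third expansion to $A_{\epsilon,0}^2\psi$ (the terms carrying an extra $\eta$ already sit at order $\eta^3$ after the double integral), the paper stays within $G^3$. Your alternative --- freezing $\mathcal{L}_\eta^2\psi$ at $x$ and controlling $\E[\mathcal{L}_\eta^2\psi(X_r)-\mathcal{L}_\eta^2\psi(x)]$ via the mean value theorem and the increment estimate $(\E|X_r-x|^2)^{1/2}\le K(x)\sqrt{\eta r}$ --- is equally valid and uses exactly the same regularity (you need $\nabla\mathcal{L}_\eta^2\psi\in G$, which again requires three derivatives of $b_0,b_1,\sigma_0$). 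The trade-off: the paper's approach writes out more terms but keeps every estimate of the form ``integrand in $G$ plus moment bound'', while yours is more compact but relies on an additional Cauchy--Schwarz/It\^{o}-isometry step. Both are standard It\^{o}--Taylor bookkeeping and neither is materially simpler than the other.
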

\begin{proof}
	Using It\^{o}'s formula, we have
	\begin{align*}
		\psi(X^{x,0}_\eta)
		=&
		\psi(x)
		+ \int_{0}^{\eta} A_{\epsilon,0} \psi(X^{x,0}_s) ds
		+ \eta \int_{0}^{\eta} A_{\epsilon,1} \psi(X^{x,0}_s) ds \\
		&+ \sqrt{\eta} \int_{0}^{\eta} \Lambda_{\epsilon,0} \psi(X^{x,0}_s) dW_s
	\end{align*}
	By further application of the above formula to $A_{\epsilon,0}\psi$ and $A_{\epsilon,1}\psi$, we have
	\begin{align*}
		\psi(X^{x,0}_\eta)
		=&
		\psi(x)
		+ \eta A_{\epsilon,0} \psi(x)
		+ \eta^2 (
			\tfrac{1}{2}A_{\epsilon,0}^2 + A_{\epsilon,1}
		)\psi(x) \\
		&+
		\eta \int_{0}^{\eta} \int_{0}^{s}
		(A_{\epsilon,1} A_{\epsilon,0} + A_{\epsilon,0} A_{\epsilon,1}) \psi(X^{x,0}_v) dv ds \\
		&+
		\int_{0}^{\eta} \int_{0}^{s} \int_{0}^{v}
		A_{\epsilon,0}^3 \psi(X^{x,0}_r) dr dv ds \\
		&+
		\eta^2 \int_{0}^{\eta} \int_{0}^{s}
		A_{\epsilon,1}^2 \psi(X^{x,0}_v) dv ds \\
		&+
		\eta \int_{0}^{\eta} \int_{0}^{s} \int_{0}^{v}
		A_{\epsilon,1} A_{\epsilon,0}^2 \psi(X^{x,0}_r) dr dv ds\\
		&+ \sqrt{\eta} \int_{0}^{\eta}
		\Lambda_{\epsilon,0} \psi (X^{x,0}_s) dW_s \\
		&+ \sqrt{\eta} \int_{0}^{\eta} \int_{0}^{s}
		\Lambda_{\epsilon,0} A_{\epsilon,0} \psi (X^{x,0}_v) dW_v ds \\
		&+ \sqrt{\eta} \int_{0}^{\eta} \int_{0}^{s} \int_{0}^{v}
		\Lambda_{\epsilon,0} A_{\epsilon,0}^2 \psi (X^{x,0}_r) dW_r dv ds\\
		&+ \eta^{\nicefrac{3}{2}} \int_{0}^{\eta} \int_{0}^{s}
		\Lambda_{\epsilon,0} A_{\epsilon,1} \psi (X^{x,0}_v) dW_v ds
	\end{align*}
	Taking expectations of the above, it remains to show that each of the terms in the integral either vanishes, or is $\mathcal{O}(\eta^3)$. This follows immediately from the assumption that $b_0,b_1,\sigma_0\in G^3$ and $\psi \in G^4$. Indeed, observe that all the integrands have at most 3 derivatives in $b_0, b_1, \sigma_0$ and 4 derivatives in $\psi$, which by our assumptions all belong to $G$. Thus, the expectation of each integrand is bounded by $ \kappa_1(1 + \sup_{t\in[0,\eta]} \E | X^{x,0}_t |^{2\kappa_2})$ for some $\kappa_1,\kappa_2$, which by Thm.~\ref{app:thm:moment_estimates} must be finite. Thus, the last 3 stochastic integrals are martingales and their expectation vanish, and the expectations of the other integrals are $\mathcal{O}(\eta^3)$ by the polynomial growth assumption and moment estimates in Thm.~\ref{app:thm:moment_estimates}.
\end{proof}

We also prove a general moment estimate for the generalized SGA iterations~\ref{eq:milstein_discrete}.
\begin{lemma}
\label{app:lem:sgd_moment_estimate}
	Let $\{ x_k : k\geq 0 \}$ be the generalized SGA iterations defined in~\ref{eq:milstein_discrete}. Suppose
	\begin{align*}
		| h(x, \gamma, \eta) | \leq L_\gamma ( 1 + | x |)
	\end{align*}
	for some random variable $L_\gamma > 0$ a.s.\,and $\E {L_\gamma}^{m} < \infty$ for all $m\geq 1$. Then, for fixed $T>0$ and any $m\geq 1$, $\E | x_k |^{m} $ exists and is uniformly bounded in $\eta$ and $k=0,\dots,N \equiv \lfloor T/\eta \rfloor$.
\end{lemma}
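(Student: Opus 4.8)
The plan is to reduce the vector iteration \eqref{eq:milstein_discrete} to a scalar multiplicative recursion for $1+|x_k|$, iterate it over the first $k$ steps, raise it to the $m$-th power, and then take expectations, exploiting the i.i.d.\ structure of $\{\gamma_k\}$ and the bound $k \le N \le T/\eta$ to turn the resulting product into an $\eta$-independent exponential. In other words, this is a discrete Gr\"onwall estimate.

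First I would apply the growth hypothesis on $h$: for each $k$,
\[
	|x_{k+1}| \le |x_k| + \eta|h(x_k,\gamma_k,\eta)| \le |x_k| + \eta L_{\gamma_k}(1+|x_k|),
\]
so that $1 + |x_{k+1}| \le (1+\eta L_{\gamma_k})(1+|x_k|)$ almost surely. Since $x_0$ is deterministic, iterating this gives
\[
	(1+|x_k|)^m \le (1+|x_0|)^m \prod_{j=0}^{k-1}(1+\eta L_{\gamma_j})^m
	\qquad\text{a.s.}
\]
Existence of $\E|x_k|^m$ at each finite $k$ then follows by induction: the $\gamma_j$ with $j<k$ are independent (and $L_\gamma$ has all moments by hypothesis), so the right-hand side has finite expectation.

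Next, taking expectations of the displayed bound and using that the factors $(1+\eta L_{\gamma_j})$ are i.i.d., $\E(1+|x_k|)^m \le (1+|x_0|)^m\big(\E(1+\eta L_\gamma)^m\big)^k$. Expanding the binomial and using $\eta \in (0,1\wedge T)$ so that $\eta^{i-1}\le 1$,
\[
	\E(1+\eta L_\gamma)^m = \sum_{i=0}^m \binom{m}{i}\eta^i\,\E L_\gamma^i \le 1 + \eta C_m,
	\qquad C_m := \sum_{i=1}^m \binom{m}{i}\E L_\gamma^i < \infty .
\]
Hence $\big(\E(1+\eta L_\gamma)^m\big)^k \le (1+\eta C_m)^k \le e^{\eta C_m k} \le e^{C_m T}$, the last step because $\eta k \le \eta N \le T$. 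Combining, $\E|x_k|^m \le \E(1+|x_k|)^m \le (1+|x_0|)^m e^{C_m T}$, which is the asserted bound, uniform in $\eta$ and in $k=0,\dots,N$.

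I do not anticipate a genuine obstacle here. The only points requiring mild care are verifying that $C_m$ carries no hidden $\eta$-dependence — which is exactly why one first bounds $\eta^{i-1}\le 1$ and factors out a single power of $\eta$ — and noting that the product over $j<k$ of the noise factors splits under expectation precisely because $\{\gamma_j\}$ are i.i.d. Everything else is routine.
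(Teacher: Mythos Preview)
Your proof is correct and is a discrete Gr\"onwall argument, as is the paper's. The implementations differ slightly: the paper expands $|x_{k+1}|^m$ binomially, conditions on $x_k$ to handle the cross terms $\E[|x_k|^{m-l}|h(x_k,\gamma_k,\eta)|^{l}]$, and obtains an affine recursion $a_{k+1}\le(1+C\eta)a_k+C'\eta$ on the moments $a_k=\E|x_k|^m$, which it then iterates. You instead keep the bound pathwise and multiplicative by working with $1+|x_k|$, iterate to a product of i.i.d.\ factors, and only then take expectations. Your route is arguably cleaner here because the growth hypothesis is exactly of the form $|h|\le L_\gamma(1+|x|)$, so the multiplicative structure is available; the paper's additive recursion on moments is the more generic template one would use if the increment bound were less tidy. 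Both give the same $\eta$-independent exponential-in-$T$ bound.
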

\begin{proof}
	For each $k\geq 0$, we have
	\begin{align*}
		|x_{k+1}|^{m} \leq |x_k|^{l}
		+ \sum_{l=1}^{m} \binom{m}{l} |x_k|^{m-l} \eta^{l} |h(x_k,\gamma_k,\eta)|^{m-l}
	\end{align*}
	Now, for $1\leq l\leq m$,
	\begin{align*}
		\E |x_{k}|^{m-l} |h(x_k,\gamma_k,\eta)|^{l}
		=& \E |x_{k}|^{m-l} \E ( |h(x_k,\gamma_k,\eta)|^{l} \big| x_k ) \\
		\leq& \E(L^l_\gamma) \E |x_{k}|^{m-l} (1 + |x_k|^{l}) \\
		\leq& 2 \E(L^l_\gamma) (1 + \E |x_{k}|^{m}).
	\end{align*}
	Hence, if we let $a_k:=\E |x_k|^m$, we have
	\begin{align*}
		a_{k+1} \leq (1 + C \eta) a_k + C' \eta
	\end{align*}
	where $C,C'>0$ are independent of $\eta$ and $k$, which immediately implies
	\begin{align*}
		a_k
		\leq& (a_0 + C'/C) (1+C\eta)^k - C'/C \\
		\leq& (|x_0|^{m} + C'/C) e^{(T/\eta)\log(1 + C\eta)} - C'/C \\
		\leq& (|x_0|^{m} + C'/C) e^{CT} - C'/C.
	\end{align*}
\end{proof}

We also need the following result concerning mollified functions.
\begin{lemma}
\label{app:lem:weak_deriv_poly_growth}
	Let $\epsilon\in(0,1)$ and $\psi$ be continuous with its weak derivative $D \psi$ belonging to $G_w$. Denote by $\psi^\epsilon = \nu^\epsilon * \psi$ the mollification of $\psi$. Then, there exists a $K\in G$ independent of $\epsilon$ such that
	\begin{align*}
		| \psi^\epsilon(x) - \psi(x) | \leq \epsilon K(x)
	\end{align*}
\end{lemma}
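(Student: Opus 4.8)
The plan is to write the mollification error as an average of increments against $\nu^\epsilon$ and then express each increment as an integral of the weak derivative $D\psi$ along the connecting segment. Since $\int_{\R^d}\nu^\epsilon = 1$ and $\nu^\epsilon$ is supported in $\Bcal(0,\epsilon)$, we have
\begin{align*}
	\psi^\epsilon(x) - \psi(x)
	= \int_{\Bcal(0,\epsilon)} \nu^\epsilon(y)\,\bigl(\psi(x-y) - \psi(x)\bigr)\,dy.
\end{align*}
If $\psi$ were smooth, the fundamental theorem of calculus would give $\psi(x-y) - \psi(x) = -\int_0^1 y_{(i)}\,D_{(i)}\psi(x-ty)\,dt$, and substituting this in and estimating would finish the proof. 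Since $\psi$ is only continuous with $D\psi\in G_w\subset\Lcalloc^1$, the first step is to justify this identity in the averaged sense.

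To do so I would mollify: set $\psi_n := \nu^{1/n}*\psi$, which is smooth with $\nabla\psi_n = \nu^{1/n}*D\psi$. The classical chain rule gives $\psi_n(x-y) - \psi_n(x) = -\int_0^1 y_{(i)}\,\partial_{(i)}\psi_n(x-ty)\,dt$; integrating against $\nu^\epsilon(y)$ and letting $n\to\infty$, the left-hand side converges to $\psi^\epsilon(x)-\psi(x)$ because $\psi_n\to\psi$ uniformly on the compact ball $\Bcal(x,1)$. For the right-hand side, after the change of variables $z = x - ty$ the points appearing lie in $\Bcal(x,\epsilon)\subset\Bcal(x,1)$ for all $t\in[0,1]$ and $|y|<\epsilon<1$, and $\nabla\psi_n\to D\psi$ in $\Lcal^1(\Bcal(x,1))$; a domination argument in the $t$-variable lets one interchange the $t$-integral with the limit. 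This yields the representation
\begin{align*}
	\psi^\epsilon(x) - \psi(x)
	= -\int_{\Bcal(0,\epsilon)} \nu^\epsilon(y) \int_0^1 y_{(i)}\,D_{(i)}\psi(x-ty)\,dt\,dy.
\end{align*}

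The final step is the estimate. On the support of $\nu^\epsilon$ we have $|y|\le\epsilon<1$, hence $|x-ty|\le|x|+1$ for every $t\in[0,1]$; using $D\psi\in G_w$, i.e. $|D\psi(z)|\le\kappa_1(1+|z|^{2\kappa_2})$ for a.e. $z$, together with $\int\nu^\epsilon=1$, gives
\begin{align*}
	|\psi^\epsilon(x) - \psi(x)|
	\leq \epsilon \int_{\Bcal(0,\epsilon)} \nu^\epsilon(y) \int_0^1 |D\psi(x-ty)|\,dt\,dy
	\leq \epsilon\,\kappa_1\bigl(1 + (1+|x|)^{2\kappa_2}\bigr).
\end{align*}
Bounding $(1+|x|)^{2\kappa_2}\le 2^{2\kappa_2}(1+|x|^{2\kappa_2})$, we get $|\psi^\epsilon(x)-\psi(x)|\le\epsilon K(x)$ with $K(x) := \kappa_1 2^{2\kappa_2+1}(1+|x|^{2\kappa_2})\in G$, and since $\kappa_1,\kappa_2$ are the growth constants of $D\psi$, the function $K$ is independent of $\epsilon$, as required.

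I expect the main obstacle to be the second paragraph: rigorously justifying the averaged fundamental-theorem-of-calculus identity for a merely continuous $\psi$ whose weak derivative is only $\Lcalloc^1$. This is essentially the absolute-continuity-on-lines property of Sobolev functions; the care needed is in checking that all relevant points $x-ty$ remain in a fixed compact set (so that $\Lcalloc^1$ convergence of $\nabla\psi_n$ applies) and that the $t$-integration can be exchanged with the $n\to\infty$ limit by domination. The remaining steps are routine.
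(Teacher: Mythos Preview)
Your proposal is correct and follows essentially the same route as the paper: write $\psi^\epsilon(x)-\psi(x)$ as an average of increments against $\nu^\epsilon$, apply the fundamental theorem of calculus along segments to get $\int_0^1 y\cdot D\psi(x-ty)\,dt$, bound $|y|\le\epsilon$, and invoke the polynomial growth of $D\psi$. The paper's own proof is terser on the FTC step (it simply asserts the line-integral identity for a.e.\ $x$ and then uses continuity of $\psi$ to extend to all $x$), whereas you supply a full mollification justification---this extra care is exactly the point you flagged as the main obstacle, and it is the only place your write-up differs.
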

\begin{proof}
	We have for almost every $x$,
	\begin{align*}
		| \psi^\epsilon(x) - \psi(x) |
		\leq&
		\int_{\Bcal(0,\epsilon)} \nu^\epsilon(y) |\psi(x-y) - \psi(x)| dy \\
		=&
		\int_{\Bcal(0,\epsilon)} \nu^\epsilon(y)
		\left|
			\int_{0}^{1} D\psi(x-\lambda y) \cdot y d\lambda
		\right|
		dy \\
		\leq& \epsilon
		\int_{\Bcal(0,\epsilon)}
		\int_{0}^{1}
		\nu^\epsilon(y)
		|D\psi(x-\lambda y)| d\lambda
		dy \\
		\leq& \epsilon
		\int_{\Bcal(0,\epsilon)}
		\nu^\epsilon(y)
		\kappa_1 [1 + \kappa_2 (|x| + |y|)] dy \\
		\leq&
		\epsilon K(x).
	\end{align*}
	Since $\psi$ is continuous, the above equality holds for all $x \in \R^d$.
\end{proof}

\section{Auxiliary results for computations in Sec.~\ref{sec:applications}}
\label{app:sec:computation}

\begin{lemma}
	Let $A$ be a real square matrix such that all eigenvalues have positive real part. Then,
	\begin{enumerate}[(i)]
		\item For each $\epsilon >0$, there exists a constant $C_\epsilon > 0$ independent of $t$ but depends on $\epsilon$, such that
		\begin{align*}
			| e^{-t A} | \leq C_\epsilon e^{- t (\min_{i} \Re \lambda_i(A) - \epsilon)}
		\end{align*}
		\item If in addition $A$ is diagonalizable, then there exists a constant $C>0$ independent of $t$ such that
		\begin{align*}
			| e^{-t A} | \leq C e^{- t \min_{i} \Re \lambda_i(A)}
		\end{align*}
	\end{enumerate}
\end{lemma}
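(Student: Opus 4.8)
The plan is to reduce everything to a canonical form of $A$ and to read off the decay from the diagonal (eigenvalue) part, treating the off-diagonal part as a source of at most polynomial growth in $t$ that can be absorbed into the exponential at the cost of an arbitrarily small loss in rate. Since the eigenvalues form a finite set, $\min_i \Re\lambda_i(A)$ is attained; I will use this without further comment.

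For part (ii), since $A$ is diagonalizable we may write $A = P\Lambda P^{-1}$ over $\mathbb{C}$, where $\Lambda = \diag(\lambda_1(A),\dots,\lambda_d(A))$ and $P$ is the (possibly complex) matrix of eigenvectors. Then $e^{-tA} = P e^{-t\Lambda} P^{-1}$, and since $|e^{-t\Lambda}|^2 = \sum_{i} e^{-2t\Re\lambda_i(A)} \leq d\, e^{-2t\min_i \Re\lambda_i(A)}$, submultiplicativity of the Frobenius norm gives $|e^{-tA}| \leq |P|\,|P^{-1}|\,\sqrt{d}\; e^{-t\min_i\Re\lambda_i(A)}$. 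Taking $C := \sqrt{d}\,|P|\,|P^{-1}|$, which depends only on $A$ (it is essentially the condition number of the eigenvector matrix) and not on $t$, yields the claim. I will note that although $P$ may be complex, the bound is on the norm of the real matrix $e^{-tA}$, which coincides with its norm viewed as a complex matrix, so no difficulty arises.

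For part (i), we drop the diagonalizability assumption and pass to the Jordan canonical form $A = PJP^{-1}$ with $J = \diag(J_1,\dots,J_r)$, each Jordan block $J_k = \lambda_{k} I_{m_k} + N_k$, $N_k$ nilpotent of order $m_k \leq d$. Then $e^{-tJ_k} = e^{-t\lambda_{k}} \sum_{j=0}^{m_k-1} \tfrac{(-t)^j}{j!} N_k^{j}$, so $|e^{-tJ_k}| \leq e^{-t\Re\lambda_{k}}\, p_k(t)$ where $p_k$ is a polynomial of degree at most $d-1$ with nonnegative coefficients depending only on $A$; hence $|e^{-tJ}|^2 = \sum_k |e^{-tJ_k}|^2 \leq e^{-2t\min_i\Re\lambda_i(A)}\, p(t)^2$ for a single such polynomial $p$. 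The final step is the elementary observation that for any $\epsilon>0$ the map $t\mapsto p(t)e^{-\epsilon t}$ is continuous on $[0,\infty)$ and tends to $0$ as $t\to\infty$, hence is bounded by some $C_\epsilon<\infty$; therefore $|e^{-tA}| \leq |P|\,|P^{-1}|\, C_\epsilon\, e^{-t(\min_i\Re\lambda_i(A) - \epsilon)}$, as required.

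I do not anticipate a serious obstacle here; the only points that need mild care are the norm bookkeeping (submultiplicativity of the Frobenius norm and the harmless passage through complex matrices) and making explicit that the constants $C$, $C_\epsilon$ depend on the conditioning of the (generalized) eigenvector matrix $P$. This last dependence is in fact precisely what is invoked in the discussion following~\eqref{eq:msgd_mod1_bdf} to explain why $\mu\approx\mu^*$ does not automatically minimize the asymptotic fluctuations, so it is worth recording explicitly in the proof.
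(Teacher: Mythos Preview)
Your proposal is correct and follows essentially the same route as the paper: Jordan normal form for (i), with the nilpotent part producing a polynomial factor that is absorbed by $e^{-\epsilon t}$, and diagonalization for (ii), with the constant coming from the conditioning of the eigenvector matrix $P$. Your argument for (ii) via submultiplicativity of the Frobenius norm is in fact more direct than the paper's, which starts a trace computation $|e^{-tA}|^2 = \tr(Q^{-1}e^{-tD^\dagger}Qe^{-tD})$ with $Q=P^\dagger P$ but does not carry it to completion; your explicit identification of $C=\sqrt{d}\,|P|\,|P^{-1}|$ is exactly what is needed for the remark after~\eqref{eq:msgd_mod1_bdf}.
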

\begin{proof}
	(i) We know that $A$ is similar to a Jordan block matrix $J$ so that $e^{-At} = P e^{-Jt} P^{-1}$. Hence, $|e^{-At}| \leq |P||P^{-1}| |e^{-Jt}|$. For each Jordan block $J_k$, we have $J_k = \lambda_k I + N_k$ where $N_k$ is nilpotent ($N_k^k = 0$). Hence,
	\begin{align*}
		e^{- J_k t}
		=& e^{-\lambda_k I t} e^{- N_k t}
		= e^{-\lambda_k t} \sum_{m=0}^{k-1} \tfrac{N_k^m}{m!} (-t)^{m} \\
		=& e^{-( \lambda_k - \epsilon) t}
		\left[
			\sum_{m=0}^{k-1} \tfrac{N_k^m}{m!} (-t)^{m} e^{-\epsilon t}
		\right].
	\end{align*}
	For each $\epsilon>0$ the norm of the last term is uniformly bounded in $t$, and hence we obtain the result.

	(ii) We denote the similarity transformation $A = P D P^{-1}$ where $D$ is the diagonal matrix of eigenvalues of $A$. Defining $Q := P^\dagger P$ ($\dagger$ denotes conjugate transpose), we have
	\begin{align*}
		|e^{-t A}| = \tr(e^{-t A^T} e^{-t A}) = \tr(Q^{-1} e^{-t D^\dagger} Q e^{-t D})
	\end{align*}
\end{proof}

\bibliography{cleaned_ref}

\end{document}